\documentclass[twoside]{article}

%%%%% NEW MATH DEFINITIONS %%%%%

\usepackage{amsmath,amsfonts,bm}

% Mark sections of captions for referring to divisions of figures

% Highlight a newly defined term

\newcommand{\unif}{\mathrm{Unif}}

% Figure reference, lower-case.

% Figure reference, capital. For start of sentence

% Section reference, lower-case.

% Section reference, capital.

% Reference to two sections.

% Reference to three sections.

% Reference to an equation, lower-case.
\def\eqref#1{(\ref{#1})}
% Reference to an equation, upper case
\def\Eqref#1{Equation~\ref{#1}}
% A raw reference to an equation---avoid using if possible

% Reference to a chapter, lower-case.

% Reference to an equation, upper case.

% Reference to a range of chapters

% Reference to an algorithm, lower-case.

% Reference to an algorithm, upper case.

% Reference to a part, lower case

% Reference to a part, upper case

\def\1{\bm{1}}

% Random variables

\def\rd{{\textnormal{d}}}

% rm is already a command, just don't name any random variables m

% Random vectors

% Elements of random vectors

% Random matrices

% Elements of random matrices

% Vectors
\def\vzero{{\bm{0}}}

\def\valpha{{\bm{\alpha}}}

\def\veps{{\bm{\epsilon}}}
\def\vmu{{\bm{\mu}}}

\def\vb{{\bm{b}}}

\def\vf{{\bm{f}}}

\def\vr{{\bm{r}}}

\def\vv{{\bm{v}}}
\def\vw{{\bm{w}}}
\def\vx{{\bm{x}}}

\def\vmu{{\bm{\mu}}}

% Elements of vectors

% Matrix
\def\mA{{\bm{A}}}
\def\mB{{\bm{B}}}

\def\mI{{\bm{I}}}

\def\mX{{\bm{X}}}
\def\mY{{\bm{Y}}}

\def\mSigma{{\bm{\Sigma}}}
\def\msigma{{\bm{\sigma}}}

% Tensor
\DeclareMathAlphabet{\mathsfit}{\encodingdefault}{\sfdefault}{m}{sl}
\SetMathAlphabet{\mathsfit}{bold}{\encodingdefault}{\sfdefault}{bx}{n}

% Graph
\def\gA{{\mathcal{A}}}

\def\gD{{\mathcal{D}}}

\def\gL{{\mathcal{L}}}
\def\gM{{\mathcal{M}}}
\def\gN{{\mathcal{N}}}
\def\gO{{\mathcal{O}}}
\def\gP{{\mathcal{P}}}

\def\gT{{\mathcal{T}}}

% Sets

% Don't use a set called E, because this would be the same as our symbol
% for expectation.

\def\sP{{\mathbb{P}}}

\def\sR{{\mathbb{R}}}

\def\sZ{{\mathbb{Z}}}

% Entries of a matrix

% entries of a tensor
% Same font as tensor, without \bm wrapper

% The true underlying data generating distribution

% The empirical distribution defined by the training set

% The model distribution

% Stochastic autoencoder distributions

 % Laplace distribution

\newcommand{\E}{\mathbb{E}}

\newcommand{\R}{\mathbb{R}}

\newcommand{\Var}{\mathrm{Var}}

\newcommand{\TV}{\mathrm{TV}}

% \newcommand{\ker}{\mathrm{Ker}}

% Wolfram Mathworld says $L^2$ is for function spaces and $\ell^2$ is for vectors
% But then they seem to use $L^2$ for vectors throughout the site, and so does
% wikipedia.

 % See usage in notation.tex. Chosen to match Daphne's book.

\DeclareMathOperator{\tr}{tr}
\DeclareMathOperator{\diag}{diag}

\usepackage[accepted]{aistats2024}

% If you use natbib package, activate the following three lines:
\usepackage{natbib}

% If you use BibTeX in apalike style, activate the following line:
\bibliographystyle{apalike}

\usepackage[T1]{fontenc}
\usepackage{hyperref}
\usepackage{url}
\usepackage{color, xcolor}
\usepackage{graphicx}
\usepackage{caption, subcaption}
\usepackage{enumitem}
\usepackage{xcolor}
\usepackage{amsthm,amssymb}
\newtheorem{theorem}{Theorem}[section]
\newtheorem{corollary}[theorem]{Corollary}
\newtheorem{lemma}[theorem]{Lemma}

\newtheorem{remark}[theorem]{Remark}
\newtheorem{definition}{Definition}[section]
\newtheorem{proposition}[theorem]{Proposition}

\numberwithin{equation}{section}

\begin{document}

\twocolumn[

\aistatstitle{Understanding the Generalization Benefits of Late Learning Rate Decay}

\aistatsauthor{ Yinuo Ren \And Chao Ma \And Lexing Ying }

\aistatsaddress{ Stanford University \And Stanford University \And Stanford University } ]

\begin{abstract}
    Why do neural networks trained with large learning rates for a longer time
    often lead to better generalization? In this paper, we delve into this
    question by examining the relation between training and testing loss in
    neural networks. Through visualization of these losses, we note that the
    training trajectory with a large learning rate navigates through the minima
    manifold of the training loss, finally nearing the neighborhood of the
    testing loss minimum. Motivated by these findings, we introduce a nonlinear
    model whose loss landscapes mirror those observed for real neural networks.
    Upon investigating the training process using SGD on our model, we
    demonstrate that an extended phase with a large learning rate steers our
    model towards the minimum norm solution of the training loss, which may
    achieve near-optimal generalization, thereby affirming the empirically
    observed benefits of late learning rate decay.
\end{abstract}

\section{INTRODUCTION}
During the training of deep neural networks, one of the challenges faced by optimization algorithms arises from the intricate misalignment between the training and testing losses. This discrepancy becomes particularly evident in overparameterized settings, in which case merely minimizing the training loss does not necessarily translate to desirable testing performance. Nonetheless, in practical scenarios, neural networks often demonstrate impressive generalization when trained using stochastic gradient descent (SGD)~\citep{allen2019can,kleinberg2018alternative,pesme2021implicit}. This relative ease of training can be attributed, at least in part, to the development and adoption of several techniques over the years, such as normalization~\citep{salimans2016weight,ba2016layer}, adaptive optimization~\citep{duchi2011adaptive,kingma2014adam}, and learning rate schemes~\citep{smith2017cyclical,loshchilov2017decoupled}. 

It is a widely accepted observation that implementing late learning rate decay, \emph{i.e.} maintaining a large learning rate with SGD for an extended period even after the stabilization of the training loss, can enhance generalization performance~\citep{li2019towards,wu2020direction,wang2021large,beugnot2022benefits}.
Numerous theoretical studies have explored and interpreted related phenomena are rich, and we direct readers to Section~\ref{sec:related work} for a brief review. For example,~\citet{wu2020direction} studies linear regression and argues that the ``directional bias'' of SGD with a large learning rate boosts final testing performance, while \citet{li2019towards} emphasizes the mismatch between learnability and generalizability can explain the necessity of an initial large learning rate. However, there is still a noticeable gap in the literature regarding the interplay between training and testing losses and their landscape formations.

Our study is motivated by specific observations from our visualization of the training and testing loss landscapes of neural networks, as depicted in Figure~\ref{fig:loss_landscape}:
\vspace{-0.5em}
\begin{itemize}[leftmargin=*]
    \item The training loss landscape displays a \emph{minima manifold} characterized by open level sets, while the testing loss landscape presents an isolated minimum with closed level sets.  
    \item With a large learning rate, the training trajectory navigates through the minima manifold of the training loss towards the neighborhood of the testing loss minimum.
\end{itemize}
\vspace{-0.5em}
Several previous works contend that the traversal of the minima manifold is driven by the flatness of the training loss landscape~\citep{wu2018sgd,mulayoff2021implicit,nacson2022implicit}. Yet, the question lingers as to why the training loss landscape is inherently flatter around the testing loss minimum.

In this work, we scrutinize the relation between training and testing loss landscapes with neural networks as nonlinear overparameterized models and the implication for training behaviors. We propose a simple nonlinear model whose loss landscapes mirror our visualizations from neural networks. Our model can be interpreted as follows: Starting with an overparametrized linear regression model, the testing loss, as an expectation of quadratic functions, yields an isolated minimum, while the null space of the training data produces a minima manifold. Then, a transformation motivated by the depth of the neural network is applied to both losses within the parameter space, resulting in non-quadratic landscapes exhibiting varying flatness. 

We then study the training process of our model using SGD via a continuous-time analysis. Our findings suggest that this process can be divided into three phases: (I) an initial phase with a large learning rate, (II) an extended phase maintaining the large learning rate, and (III) a final phase with a decayed learning rate. We prove that with high probability, Phase II propels the model towards the minimum $L^2$-norm solution of the training loss, which has long been believed to be the near-optimal solution for overparametrized models~\citep{wu2020optimal,bartlett2020benign}, thereby affirming the empirically observed benefits of late learning rate decay.

\subsection{Contribution}
Our main contributions in this paper are summarized as follows:
\vspace{-0.5em}
\begin{itemize}[leftmargin=*]
    \item Through experiments, we empirically demonstrate the generalization advantages of late learning rate decay. Further, we offer visualizations of the training and testing loss landscapes of neural networks, illustrating the interrelation between these two loss landscapes.

    \item We introduce a nonlinear overparameterized model that recovers the loss landscape behaviors observed in real neural networks. Our insights suggest that the flatness of the training loss landscape near the testing loss minimum is intrinsically linked to the depth of the neural networks.

    \item We systematically dissect the training process of our model into three phases and show that extended training using a large learning rate helps find the minimum $L^2$-norm solution of the training loss and thus corroborates the provable benefits of late learning rate decay.
\end{itemize}
\vspace{-0.5em}

\subsection{Related Works}\label{sec:related work}
\paragraph*{Implicit Regularization.} 

The implicit regularization effect of optimization with SGD has been studied extensively in previous works~\citep{mandt2016variational,hoffer2017train,kleinberg2018alternative}. Many works argue that SGD picks flat minima~\citep{keskar2016large,du2019gradient,wu2022alignment}, which boosts the generalization performance~\citep{hochreiter1997flat,zhou2020towards}. To understand the behavior of SGD, several mathematical models have been proposed and studied, including the stochastic differential equations (SDEs)~\citep{li2017stochastic,li2019stochastic,li2021validity,mori2022power}, and Langevin dynamics~\citep{welling2011bayesian,raginsky2017non,zhang2017hitting,chen2020stationary}. Recent works including~\citep{blanc2020implicit,pesme2021implicit,haochen2021shape,damian2021label,even2023s} adopts the \emph{diagonal linear networks} to study implicit regularization.

\paragraph*{SGD Scheduling.} 

Another line of the research on SGD attempts to gain a deeper understanding of how the choice and scheduling of the learning rate affects the performance of SGD~\citep{smith2017bayesian,jastrzkebski2018width,wu2018sgd,li2020reconciling,lyu2019gradient,mulayoff2021implicit,nacson2022implicit,li2022fast}.
Large learning rates are shown to be beneficial for generalization both empirically and theoretically~\citep{li2019towards,wu2020direction,wang2021large,andriushchenko2022sgd}. Intricate designs of learning rate schedules have also been proved to achieve faster convergence rate for gradient descent~\citep{smith2017cyclical,agarwal2021acceleration,grimmer2023provably}. The study of the effect of large learning rates is also closely related to the topic of the \emph{Edge of Stability (EoS)} phenomenon~\citep{cohen2021gradient,arora2022understanding,damian2022self,zhu2022understanding,beugnot2022benefits,ma2022beyond,chen2022gradient}, and \emph{grokking}~\citep{power2022grokking,liu2022towards,vzunkovivc2022grokking}.

\section{MOTIVATING EMPIRICAL OBSERVATIONS}

In this section, we present experiment results to observe the training behaviors in neural networks. We further visualize the training and testing loss landscapes obtained by performing PCA to the parameters on the training trajectories and discuss its implications on the training process.

\subsection{Observations from Training Behaviors}

\begin{figure*}[!htb]
    \centering
    \begin{subfigure}[b]{0.32\textwidth}
        \includegraphics[width=\textwidth]{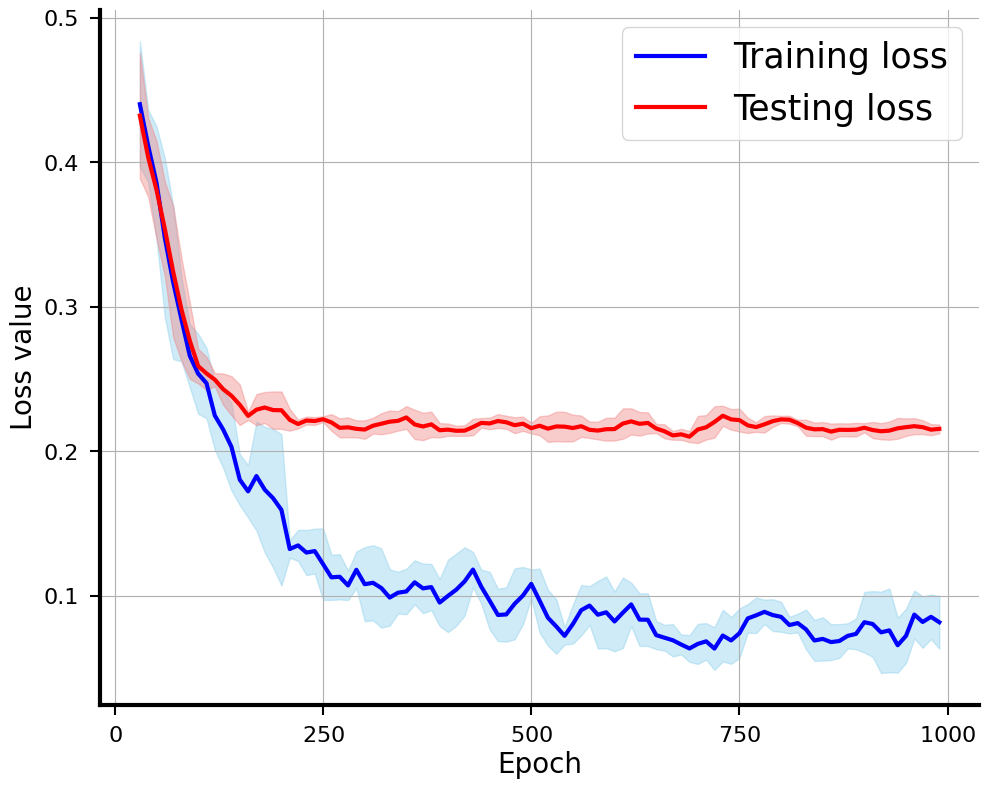}
        \caption{Learning curves of initial phase}
        \label{fig:train_test_initial}
    \end{subfigure}
    \begin{subfigure}[b]{0.32\textwidth}
        \includegraphics[width=\textwidth]{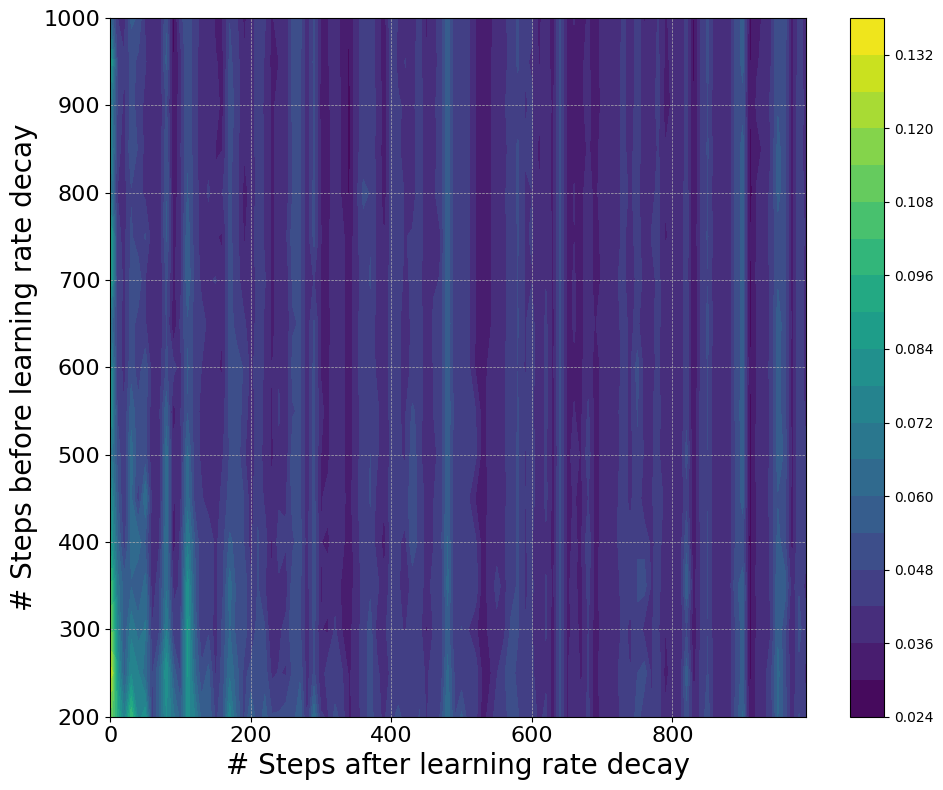}
        \caption{Training loss}
        \label{fig:train_loss}
    \end{subfigure}
    \begin{subfigure}[b]{0.32\textwidth}
        \includegraphics[width=\textwidth]{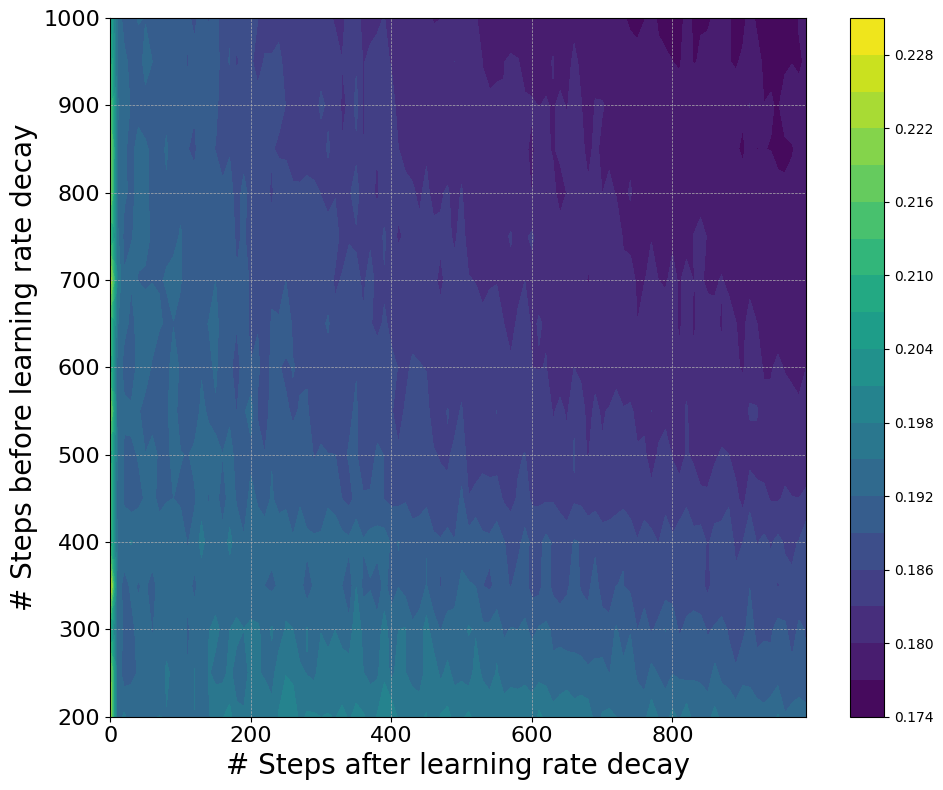}
        \caption{Test loss}
        \label{fig:test_loss}
    \end{subfigure}    
    \caption{Bahaviors of the training and testing losses for a VGG-11 model trained on the CIFAR-10 dataset under various learning rate schedules. Panel (a) showcases the learning curves of the main path with a learning rate of 0.1. In panels (b) and (c), the $y$-axis represents the number of epochs before the learning rate decay, and the $x$-axis indicates the number of epochs after the decay. Each slice parallel to the $x$-axis illustrates the learning curve of a subpath originating from the same main path as shown in (a) with a learning rate of 0.01.}
    \label{fig:learning_curve}
\end{figure*}

We focus on the image classification task on the CIFAR-10 dataset, one of the widely adopted and studied examples in the theory of neural networks~\citep{smith2017don,ma2022beyond,jelassi2022towards}, and recently also in the literature of the Edge of Stability (EoS)~\citep{cohen2021gradient,arora2022understanding}. We employ a VGG-11 model~\citep{simonyan2014very}, trained with the SGD optimizer. The training spans 1000 epochs with a batch size of 128 and an initial learning rate of 0.1, to which we will refer as the \emph{main path}. Starting from epoch 200 of the main path, we introduce additional \emph{subpaths} of other 1000 epochs, during which the learning rate is reduced to 0.01. Due to the large number of subpaths, we only plot the learning curves of the main path (shown in Figure~\ref{fig:train_test_initial}) and present the behaviors of the training and testing losses of subpaths in the form of contour maps in Figure~\ref{fig:train_loss} and~\ref{fig:test_loss}.

As depicted in Figure~\ref{fig:learning_curve}, the timing of the learning rate decay has a small impact on the subsequent training phase (slices parallel to the $x$-axis) with the reduced learning rate. This becomes particularly clear when the training loss stabilizes after the initial $\sim$500 epochs, as evidenced by the seemingly chaotic pattern in Figure~\ref{fig:train_loss}. In contrast, the testing loss reveals a distinct correlation with the timing of learning rate decay. As illustrated in Figure~\ref{fig:test_loss}, delaying the reduction in learning rate results in a lower final testing loss. This observation is consistent with the findings of~\citet{wu2020direction,li2019towards,andriushchenko2022sgd} that training with a larger learning rate enhances generalization.

\subsection{Visualization of Training and Testing Landscape}

\begin{figure*}[!htb]
    \centering
    \begin{subfigure}[b]{0.45\textwidth}
        \includegraphics[width=\textwidth]{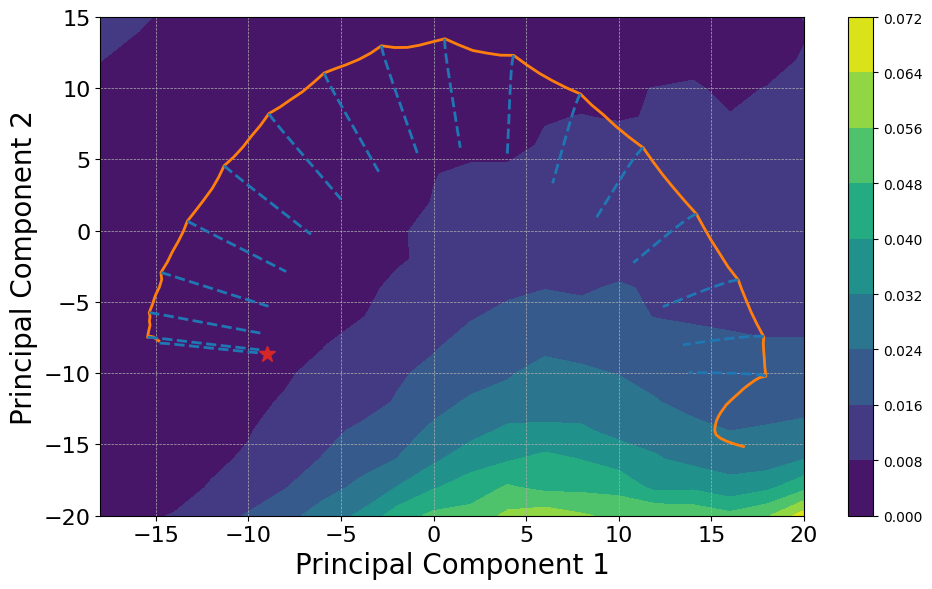}
        \caption{Training loss landscape}
        \label{fig:train_landscape}
    \end{subfigure}
    \begin{subfigure}[b]{0.45\textwidth}
        \includegraphics[width=\textwidth]{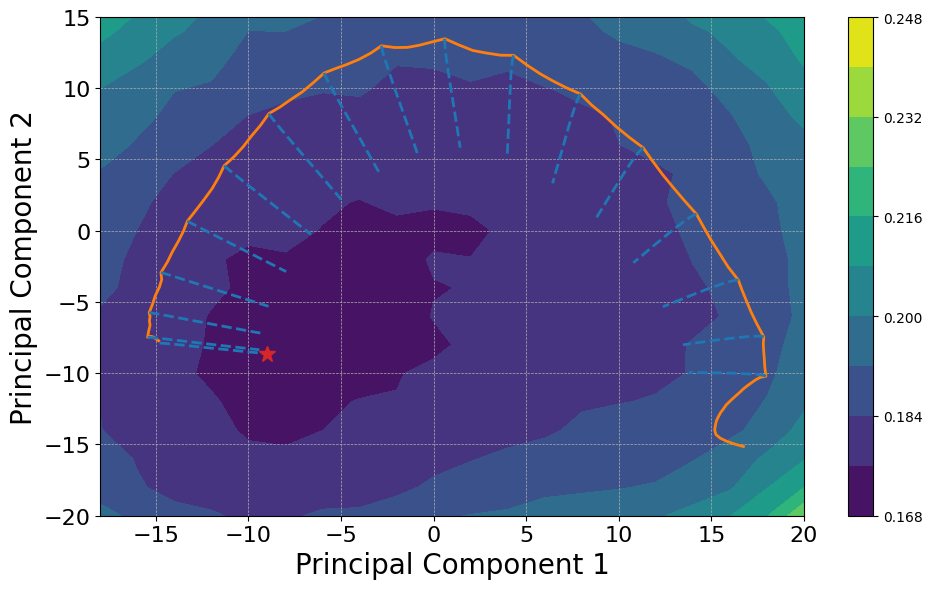}
        \caption{Test loss landscape}
        \label{fig:test_landscape}
    \end{subfigure}    
    \caption{Visualization of the training and testing loss landscapes for a VGG-11 model trained on the CIFAR-10 dataset. The main path with the initial learning rate of 0.1 is represented by an {\color{orange} orange} line, while the subpaths with the reduced learning rate of 0.01 are depicted in {\color{cyan} blue} dashed lines. The final point of the most extended training trajectory that spans 2000 epochs is marked with a red star.}
    \label{fig:loss_landscape}
\end{figure*}

In order to analyze this phenomenon, we proceed by examining the loss landscapes of the neural networks. As illustrated in Figure~\ref{fig:loss_landscape}, we visualize the training and testing loss landscapes for the model. 
We derive these loss landscapes based on the previous works of~\citet{lorch2016visualizing,li2018visualizing}: (a) Collating and flattening the model's parameters at each epoch's conclusion for both the main path and subpaths, (b) Applying Principal Component Analysis (PCA) on these parameters to extract the two primary components, and (c) Calculating the training and test losses over a grid defined by these two principal components, centered around the mean when performing the PCA. 

The primary two principal components account for 77.93\% of the total variances, with each subsequent component explaining less than 5\%. It is important to note that given the high dimensionality of neural networks, the full parameter space might exhibit more complex features not captured in our 2D representation. Furthermore, our visualization only aims to present the local landscape around the minimizers rather than a global one as in~\citet{li2018visualizing}. Nevertheless, our findings still offer valuable insights into the structure of the loss landscapes and can be instrumental in explaining practical training behaviors, as we discuss subsequently.

\subsection{Intuitive Cause of the Landscape Structure}

As showcased in Figure~\ref{fig:train_landscape}, the training loss landscape reveals a low-dimensional manifold of minima (with near-zero training loss). This coincides with the prevalent belief that neural networks, being overparameterized relative to the training sample count, manifest a zero-loss manifold that all training trajectories converge to and then oscillate around~\citep{cooper2018loss, cooper2020critical, li2021happens}. Recent studies~\citep{wu2020direction,li2019towards,andriushchenko2022sgd} demonstrate that the implicit bias of SGD with an initially large learning rate offers generalization advantages. This observation is exactly captured in Figure~\ref{fig:test_landscape}. Contrasting the training loss landscape, the test loss landscape showcases an isolated minimum, with closed loss level sets encircling the minimum. While the training trajectory with a larger learning rate (denoted in orange) traverses the minima manifold of the training loss, its correlation with test loss reduction is tenuous. However, this traversal does lead to a lower final testing loss when the learning rate decays later (indicated in blue).

Based on these empirical findings, we hypothesize that despite the misalignment of the testing loss minimum and the minima manifold of the training loss, longer training with larger learning rates helps find the testing loss minimum. Then, with the decayed learning rate, the trajectory achieves better final generalization. The argument in~\citet{wu2018sgd,mulayoff2021implicit,nacson2022implicit,andriushchenko2023we} for this phenomenon is that SGD with a larger learning rate identifies flatter minima of the training loss (characterized by the Hessian matrix), which are believed to generalize better~\citep{keskar2016large,zhou2020towards}. However, this argument is not sufficient to explain the observed discrepancy between the training and testing loss landscapes.

\section{AN ILLUSTRATIVE MODEL}

In this section, we consider a minimal nonlinear model that helps provide intuition for neural networks with more complicated architectures in the aforementioned phenomenon. We argue that the observed loss landscapes are caused by the nonlinearity of neural networks resulting from the composition of the layers.

\subsection{Model Settings}

Let $\vx\in\sR^d$ be the feature vector, $y\in\sR$ be the label, and $\vw\in\sR^d$ be the model parameter. Our model is defined as 
\begin{equation}
    y = \|\vw\|^\gamma\vw^\top\vx := \valpha^\top \vx,
    \label{eq:model}
\end{equation}
where $\|\cdot\|$ denotes the $L^2$-norm, $\gamma\geq 0$ is a parameter reflecting the depth of the neural network and controlling the shrinkage, which we will explain afterward. $\valpha = \|\vw\|^\gamma \vw$ denotes the effective linear predictor when interpreting~\Eqref{eq:model} as a reparametrization of the linear regression model. It is worth noting that the mapping between $\vw$ and $\valpha$ is invertible with $\|\valpha\|^{-\frac{\gamma}{1+\gamma}}\valpha =\vw$. Thus, when no confusion arises, this bijection is always implicitly assumed, \emph{i.e.} $\valpha = \valpha(\vw)$ and $\vw = \vw(\valpha)$. We adopt the quadratic loss defined as 
\begin{equation*}
    \ell(\vx,y;\vw) = \dfrac{1}{2}(\|\vw\|^\gamma\vw^\top\vx-y)^2.
\end{equation*}

Suppose we have a dataset $\gD = \{(\vx_i,y_i)\}_{i=1}^n$ of $n$ samples, constructed with a ground truth weight $\vw^*$, \emph{i.e.} $y_i = \|\vw^*\|^\gamma(\vw^*)^\top \vx_i := (\valpha^*)^\top \vx_i$ for $i\in[n]$. Define the data matrix $\mX = (\vx_1,\cdots,\vx_n)$, the training loss (or the empirical loss) can be expressed as $$\hat{\gL}(\vw;\vw^*,\gD) = \frac{1}{n}\sum_{i=1}^n \ell(\vx_i,y_i;\vw) = \dfrac{1}{2n}\left\|\mX^\top (\valpha - \valpha^*) \right\|^2.$$ 

We further assume the data are drawn from a standard Gaussian distribution, \emph{i.e.} $\vx_i \sim \gN(0, \mI)$, for $i\in[n]$, the testing loss (or the population loss) is thus
$$
\begin{aligned}
    &\gL(\vw;\vw^*) = \E_{\vx_i\sim \gN(0, \mI)}[\hat{\gL}(\vw;\vw^*;\gD)]\\
    = &\dfrac{1}{2n}\E_{\mX}\left[(\valpha - \valpha^*)^\top \mX\mX^\top (\valpha - \valpha^*)\right] = \dfrac{1}{2}\|\valpha - \valpha^*\|^2.
\end{aligned}
$$ 
Now that we consider the over-parametrized regime, we assume $d > n$ and $\mX$ has full column rank. We will also denote the space spanned by the columns of $\mX$ by $\mX$ and the orthogonal complement of $\mX$ by an orthogonal matrix $\mX^\perp$. For any vector $\vx$, we denote its normalized version as $\overline{\vx} = \vx/\|\vx\|$. For any matrix $\mY$, define the projection operator $\gP_{\mY} = \mY (\mY^\top \mY)^{-1} \mY^\top$ as the projection onto the column space of $\mY$. We define the set of global minima of the empirical loss, which forms the \emph{minima manifold} $\gM$ in the parameter space as 
\begin{equation}
    \gM = \left\{ \vw \big| \valpha - \valpha^* = \|\vw\|^\gamma \vw - \|\vw^*\|^\gamma \vw^* \in \mX^\perp \right\}.
    \label{eq:gM}
\end{equation}

\subsection{Motivation}

\begin{figure*}[!tb]
    \centering
    \begin{minipage}{0.32\textwidth}
        \begin{subfigure}[b]{\textwidth}
            \includegraphics[width=\textwidth]{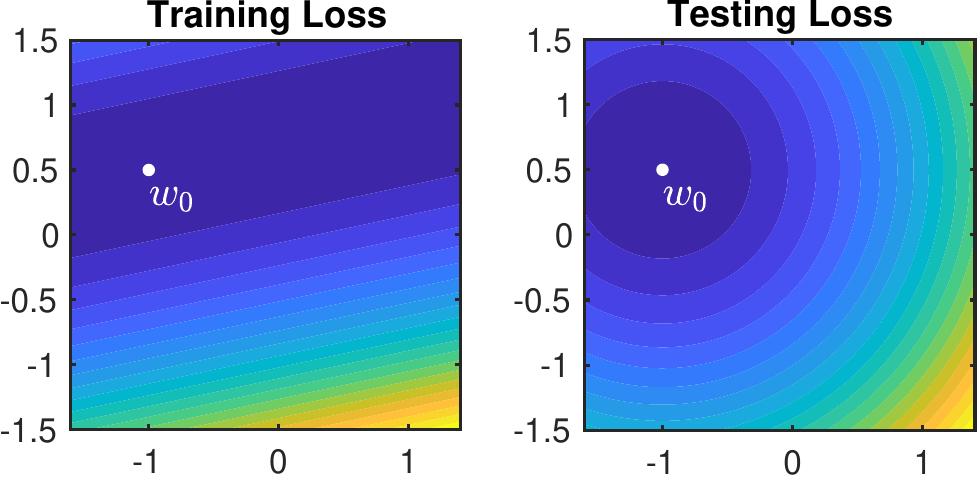}
            \caption{Linear regression model}
            \label{fig:linear}
        \end{subfigure}
    \end{minipage}
    \begin{minipage}{0.32\textwidth}
        \begin{subfigure}[b]{\textwidth}
            \includegraphics[width=\textwidth]{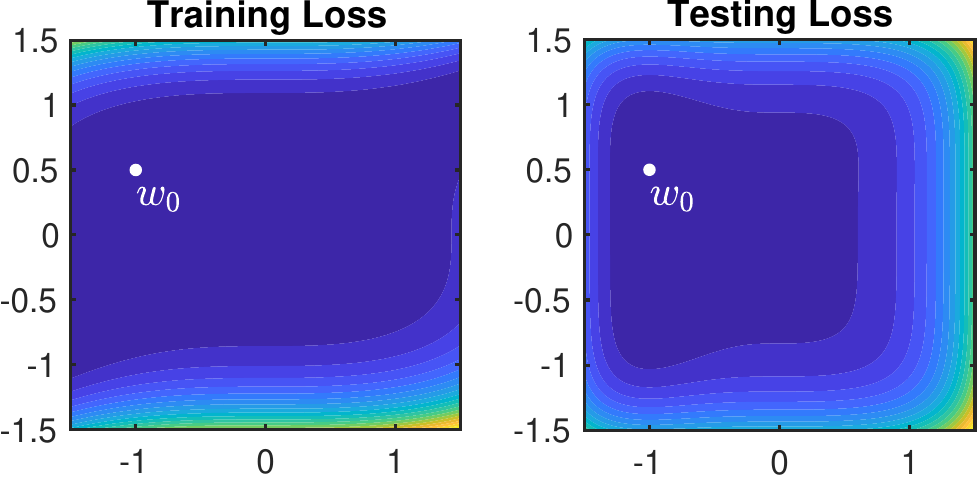}
            \caption{Linear diagonal network}
            \label{fig:diagonal}
        \end{subfigure}
    \end{minipage}
    \begin{minipage}{0.32\textwidth}
        \begin{subfigure}[b]{\textwidth}
            \includegraphics[width=\textwidth]{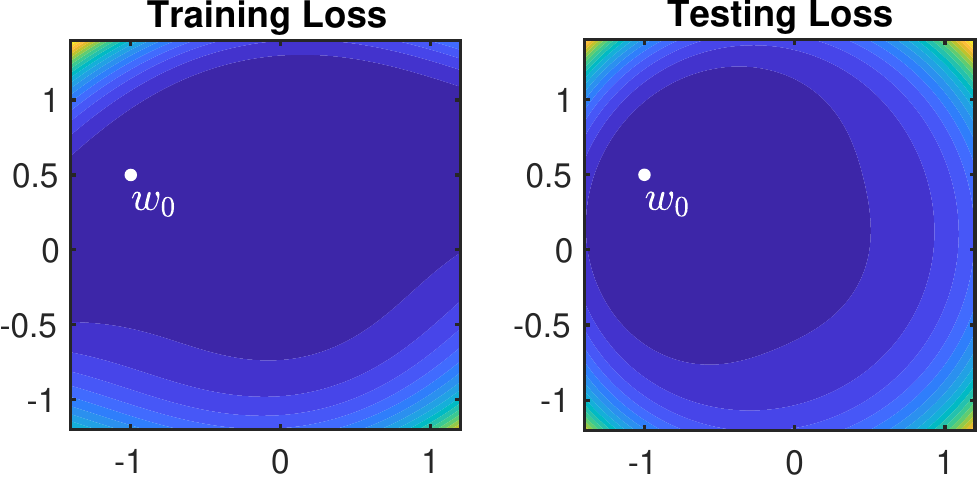}
            \caption{Our reparametrization model}
            \label{fig:norm}
        \end{subfigure}
    \end{minipage}
    \caption{Comparison of training and testing loss landscapes of the linear regression model $y = \vw^\top \vx$~\citep{wu2020direction}, the linear diagonal network model $y = (\vw^{\odot 3})^\top \vx$~\citep{gunasekar2018implicit}, and our reparametrization model $y = \|\vw\|^2 \vw^\top \vx$. In this example, we choose $d=2$, $n=1$, $\vw^* = (-1, 0.5)^\top$ and $\mX = (0.15, -0.7)^\top$.}
\end{figure*}

One of the most studied models in the related literature is the \emph{linear diagonal network} first proposed by~\citet{gunasekar2018implicit}, \emph{i.e.} the reparametrization scheme
\begin{equation}
    \valpha' = \diag(\vw_L) \diag (\vw_{L-1})\cdots \diag(\vw_2) \vw_1.
    \label{eq:linear_diagonal}
\end{equation} 
Under this reparametrization, we denote the empirical and population losses by $\hat{\gL}'(\vw;\vw^*,\gD)$ and $\gL'(\vw;\vw^*)$ respectively, where $\vw = (\vw_1, \cdots, \vw_L)$.
The simplified version of this model $\valpha' = \vw ^{\odot L}$, where $\odot$ denotes entry-wise multiplication, along with its many variations, have been analyzed in~\citet{vaskevicius2019implicit,woodworth2020kernel,pesme2021implicit,haochen2021shape}. This model, especially the two-layer version, has been widely adopted as the toy model for its tractability when taking gradients and allowing for explicit and even discrete time analysis~\citep{haochen2021shape}. However, as we show in Figure~\ref{fig:diagonal}, this model has peculiar training and testing landscapes, which may not capture the real features of the loss landscapes of practical neural networks in Figure~\ref{fig:loss_landscape}. 

We consider adding $L^2$ regularization beside the linear diagonal network in~\Eqref{eq:linear_diagonal} and minimizing the total loss
\begin{equation*}
    \hat\gL'(\vw; \vw^*, \gD) + \lambda \sum_{i=1}^L\|\vw_i\|^2.
\end{equation*}
For each $i\geq 2$, fixing the rest of the parameters, $\vw_i$ is encouraged to have identical entries by the regularization term. This does not affect the expressiveness of the model as long as the first layer $\vw_1$ is not restricted. Moreover, the regularization also encourages $\vw_i$, $i\in[n]$ to have identical $L^2$-norms and the linear diagonal network is thus reduced to our model in~\Eqref{eq:model}, with $\gamma = L-1$.

The additional weight $\|\vw\|^\gamma$ in our model intuitively warps the parameter space from that of the linear regression model (Figure~\ref{fig:linear}) so that the loss landscape is no longer quadratic. While the loss landscapes of our model (Figure~\ref{fig:norm}) preserve the key characteristics of the linear regression model, such as almost quadratic testing loss with an isolated minimum, the level sets of the training loss exhibit a distinctive ``shrinkage'' as $\|\vw\|\to\infty$. This is attributed to the nonlinearity introduced by the depth in neural networks. 

Compared with the linear diagonal network (Figure~\ref{fig:diagonal}), the implicit regularization effect of our model appears more ``isotropic''. As a result, the loss landscape of our model is closer to that of the practical neural networks (Figure~\ref{fig:loss_landscape}). To the best of our knowledge, our model has not been studied by the related literature in neural network theory.

\subsection{Main Results}

In the following, we will analyze the training process of our model using SGD in detail. 
We first characterize this process into the following three phases and delve into the training behaviors exhibited within each:
\begin{enumerate}[label=\Roman*.]
    \item {\bf Initial phase with a large learning rate $\eta_L$:} In this phase, the actual gradient outweighs the noise in SGD, keeping the trajectory close to that of the gradient flow. The decrease in the training loss comes to saturation, and the trajectory approaches the minima manifold $\gM$.
    \item {\bf Extended phase maintaining the large learning rate $\eta_L$:} In this phase, the trajectory actively navigates through the minima manifold $\gM$ driven by the shape of the training loss landscape, during which the training loss only fluctuates, while the trajectory approaches the neighborhood of the minimum $L^2$-norm solution of the training loss in $\Theta(\eta_L^{-1})$ time.
    \item {\bf Final phase with a small learning rate $\eta_S$:} In this concluding phase, the trajectory realigns with the gradient flow, which rapidly penetrates into the minima manifold $\gM$ and the final testing performance depends on the timing of the learning rate decay from the preceding phase.
\end{enumerate}

\subsubsection{Phase I}
\label{sec:phase_I}

Following previous works~\citep{li2017stochastic,li2019stochastic,li2021validity,mori2022power}, we represnet the initial phase with a large learning rate $\eta_L$ with the following stochastic differential equation (SDE):
\begin{equation}
    \rd \vw(t) = -\nabla_\vw \hat \gL(\vw(t); \vw^*,\gD) \rd t + \sqrt{\eta_L} \rd \mB(t),
\label{eq:1_sde}
\end{equation}
where $\mB(t)$ is the standard Brownian motion. The dynamics $\vw(t)$ reflect the discrete updates of the parameter $\vw$ starting from an initialization denoted by $\vw_0$. The time correspondence can be expressed as $\vw(0)=\vw_0$ and $\vw(k\eta_L)$ approximating the parameter value post $k$ iterations. Within this framework, a large learning rate is symbolized by a large diffusion coefficient $\sqrt{\eta_L}$. 

One should notice that~\Eqref{eq:1_sde} represents the Langevin dynamics featuring a Gibbs-type stationary distribution $\exp(-\hat \gL(\vw; \vw^*,\gD)/\eta_L)$, which is concentrated around the minima manifold $\gM$ of the empirical loss. Intuitively, during Phase I, the parameter $\vw$ remains at a considerable distance from the minima manifold $\gM$. Consequently, the dynamics $\vw(t)$ are primarily driven by the gradient of the training loss until the parameter $\vw$ approaches the minima manifold $\gM$, and the noise within the stochastic gradient surpasses the gradient itself.

Instead of the hitting time analysis in the literature of stochastic gradient Langevin dynamics~\citep{zhang2017hitting,chen2020stationary}, which relies on the assumption of the upper bound of $\tr\nabla^2 \hat \gL(\vw; \vw^*,\gD)$, we quantify the time span of Phase I through a mixing time analysis:
\begin{theorem}
    For any initilization $\vw(0)=\vw_0$, under the dynamics in~\Eqref{eq:1_sde}, the dynamics of the projection of $\vw(t)$ onto the column space of $\mX$, denoted by $\vw_\mX(t)$, have \emph{exponential mixing property}, \emph{i.e.} for any two initializations $\vw_0$ and $\vw_0'$, after time $\gO(\log \delta^{-1})$, we have 
    \begin{equation}
        \|P_0^t(\vw_0, \cdot) - P_0^t(\vw_0', \cdot)\|_{TV} \leq \delta,
    \end{equation}
    where $P_0^t(\vw_0, \cdot)$ is the distribution of $\vw_\mX(t)$ starting from $\vw_0$ at time $t$.
\label{thm:phase_I}
\end{theorem}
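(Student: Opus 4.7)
My plan is to establish exponential mixing of $\vw_\mX(t) := \gP_\mX \vw(t)$ via a synchronous-coupling argument combined with a contraction estimate on the training loss restricted to the $\mX$-subspace. The first step is to decompose the drift. Writing $\vg := \tfrac{1}{n}\mX\mX^\top(\valpha-\valpha^*) \in \mathrm{col}(\mX)$, a direct computation yields
\begin{equation*}
    \nabla_\vw \hat\gL(\vw) = \|\vw\|^\gamma \vg + \gamma\|\vw\|^{\gamma-2}(\vw^\top \vg)\,\vw,
\end{equation*}
so that projecting~\Eqref{eq:1_sde} onto $\mathrm{col}(\mX)$ and $\mathrm{col}(\mX^\perp)$ gives independent Brownian motions of dimensions $n$ and $d-n$ respectively, while the drifts are coupled only through the scalars $\|\vw\|$ and $\vw^\top\vg$.

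Next, I couple two trajectories $\vw(t)$ and $\vw'(t)$ with the same Brownian motion and distinct initializations $\vw_0,\vw_0'$. By It\^o's formula the noise cancels and
\begin{equation*}
    \tfrac{\rd}{\rd t}\|\gP_\mX(\vw-\vw')\|^2 = -2\langle \gP_\mX(\vw-\vw'),\, \nabla\hat\gL(\vw)-\nabla\hat\gL(\vw')\rangle.
\end{equation*}
If I can bound this above by $-c\|\gP_\mX(\vw-\vw')\|^2$ for some $c>0$ independent of $\eta_L$, then $\|\gP_\mX(\vw(t)-\vw'(t))\| \leq e^{-ct}\|\gP_\mX(\vw_0-\vw_0')\|$. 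The standard coupling inequality $\|P_0^t(\vw_0,\cdot) - P_0^t(\vw_0',\cdot)\|_{\TV} \leq \Pr(\vw_\mX(t) \neq \vw'_\mX(t))$, combined with a short reflection-coupling phase to fuse the projected trajectories once they are sufficiently close, then converts this $L^2$-contraction into the claimed TV bound $\leq \delta$ after time $t = \gO(\log \delta^{-1})$.

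The main obstacle is obtaining the contraction constant $c$, because globally $\hat\gL$ is non-convex, vanishing on the entire minima manifold $\gM$, and because the drift on $\vw_\mX$ depends on $\vw_{\mX^\perp}$ through the shared norm $\|\vw\|$. My strategy is three-fold: (i)~derive an a priori tail bound on $\|\vw(t)\|$, exploiting that $\hat\gL$ grows like $\|\vw\|^{2\gamma+2}$ at infinity so that Langevin confinement keeps $\|\vw\|$ inside a compact annulus $[r_1,r_2]$ with overwhelming probability; (ii)~on this annulus, show that the restriction $\vu \mapsto \hat\gL(\vu+\vv)$ with $\vu \in \mathrm{col}(\mX)$ and $\vv = \gP_{\mX^\perp}\vw$ treated parametrically is strongly convex in $\vu$, with modulus bounded below by a factor proportional to $r_1^{2\gamma}\sigma_{\min}(\mX^\top\mX)/n$; (iii)~absorb the cross term generated by the $(\vw^\top\vg)\vw$ component of the gradient through a weighted Lyapunov functional $\|\gP_\mX(\vw-\vw')\|^2 + \kappa\,\|\gP_{\mX^\perp}(\vw-\vw')\|^2$ with a suitably small $\kappa>0$ chosen to dominate the off-diagonal contributions. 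The tail contribution from excursions outside $[r_1,r_2]$ is controlled by a Gr\"onwall-plus-Markov estimate and contributes only a negligible additive error to the final TV bound, yielding the claimed $\gO(\log \delta^{-1})$ mixing time.
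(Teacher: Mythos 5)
Your overall plan (synchronous coupling, $L^2$ contraction, reflection coupling to convert to TV) is a reasonable strategy in principle, but step~(i) contains a false premise that the rest of the argument depends on. You claim that $\hat\gL$ ``grows like $\|\vw\|^{2\gamma+2}$ at infinity,'' so that Langevin confinement keeps $\|\vw(t)\|$ in a compact annulus. This is not true here: the minima manifold $\gM$ is an unbounded set on which $\hat\gL\equiv 0$, and the level sets of $\hat\gL$ are not compact. Concretely, $\hat\gL$ depends on $\vw_{\mX^\perp}$ only through $\|\vw\|$, and if one holds $\vw_\mX=0$ fixed and lets $\|\vw_{\mX^\perp}\|\to\infty$ the loss is \emph{constant}; more to the point, the $\mX^\perp$-projection of the drift is $\gP_{\mX^\perp}\nabla\hat\gL(\vw)=\gamma\|\vw\|^{\gamma-2}(\vw_\mX^\top\vg)\,\vw_{\mX^\perp}$, which vanishes as $\vg\to 0$ near $\gM$, so in that neighborhood $\vw_{\mX^\perp}(t)$ behaves like an unconfined $(d-n)$-dimensional Brownian motion and $\|\vw(t)\|$ leaves any fixed annulus with probability one. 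This is exactly the obstacle the paper calls out in Remark~3.2, and it is why Theorem~\ref{thm:phase_I} asserts mixing only for the projection $\vw_\mX(t)$, not for $\vw(t)$.

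The same issue defeats step~(iii). Under synchronous coupling, the $\gP_{\mX^\perp}(\vw-\vw')$ component has drift proportional to $\vg$ (resp.\ $\vg'$), which nearly vanishes once both trajectories are close to $\gM$; hence $\gP_{\mX^\perp}(\vw-\vw')$ does not contract, and the term $\kappa\|\gP_{\mX^\perp}(\vw-\vw')\|^2$ in your weighted functional persists as a constant floor rather than a decaying perturbation. Shrinking $\kappa$ cannot fix this because the cross term you need it to absorb comes from the dependence of the $\vw_\mX$-drift on $\|\vw\|\geq\|\vw_{\mX^\perp}\|$, which is itself unbounded.

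The paper's proof sidesteps both problems by never attempting to control $\vw(t)$ or $\vw_{\mX^\perp}(t)$. It rewrites the $\vw_\mX$-SDE as a time-inhomogeneous equation (absorbing the $\vw_{\mX^\perp}$-dependence of the drift into the time index), chooses the Lyapunov function $W(\vw_\mX)=\exp(\alpha\|\vw_\mX\|)$ that depends on the $\mX$-projection only, and verifies that the Foster--Lyapunov drift inequality $\gA_s W\le -C_1 W+C_2$ holds uniformly in $s$ precisely because the radial component of the $\vw_\mX$-drift is eventually negative regardless of $\|\vw_{\mX^\perp}\|$ (indeed a larger $\|\vw\|^{2\gamma}$ prefactor only helps). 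Combined with a local Dobrushin minorization from the nondegenerate Gaussian noise, this gives a uniform bound $\beta_V(P_s^{s+\tau})\le e^{-\kappa\tau}$ on the $V$-Dobrushin coefficient and hence geometric ergodicity in TV. If you want to pursue a coupling-based proof, you should likewise couple only the $\vw_\mX$-marginals, replace your confinement claim with a one-sided drift/Lyapunov bound that controls $\|\vw_\mX(t)\|$ uniformly over $\vw_{\mX^\perp}$, and use reflection coupling only in a compact region of $\mathrm{col}(\mX)$; as written, however, the argument has a genuine gap.
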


Intuitively, $\delta$ controls the stability of the dynamics $\vw_\mX(t)$ in the column space of $\mX$~\citep{del2018exponential}, and thus that of the dynamics $\vw(t)$. After a mixing time of $\gO(\log \delta^{-1})$, $\vw(t)$ forgets its initialization and is redistributed according to the Gibbs energy of the empirical loss $\exp(-\hat \gL(\vw; \vw^*,\gD)/\eta_L)$. As a result, the dynamics $\vw(t)$ are close to the minima manifold $\gM$ of the training loss towards the end of Phase I.

The proof of Theorem~\ref{thm:phase_I} is technical and deferred to Appendix~\ref{app:phase_I}. The main idea is to eliminate the dynamics of $\vw_{\mX^\perp}(t)$ by focusing on the SDE of $\vw_\mX(t)$:
\begin{equation}
    \rd \vw_\mX(t) = \vb(t, \vw_\mX(t)) \rd t + \sqrt{\eta_L} \rd \mB_\mX(t),
\label{eq:1_sde_proj}
\end{equation}
where the time-inhomogeneous drift coefficient $\vb(t, \vw_\mX(t))$ is given by
$$
    \vb(t, \vw_\mX(t)) = - \gP_{\mX} \nabla_\vw \hat \gL(\vw(t); \vw^*,\gD).
$$
By choosing a Lyapunov function of the form $\exp(\alpha \|\vw_\mX\|)$, we are able to verify the Foster-Lyapunov condition~\citep{meyn1993stability,kulik2017ergodic} and the local Dobrushin contraction condition~\citep{del2017stochastic}, which are sufficient for the exponential mixing property of the dynamics $\vw_\mX(t)$.

\begin{remark}
    We focus exclusively on the dynamics of $\vw_\mX(t)$ within the column space of $\mX$ due to the following considerations: The level sets of the empirical loss $\hat \gL(\vw; \vw^*,\gD)$ are not compact. Consequently, the dynamics associated with the projection of $\vw(t)$ onto $\mX^\perp$, represented as $\vw_{\mX^\perp}(t)$, might not exhibit mixing properties. However, in Phase I, our primary anticipation is the stabilization of $\vw_\mX(t)$. This ensures that $\vw(t)$ closely aligns with the minima manifold $\gM$ of the training loss. A close examination of the dynamics of $\vw_{\mX^\perp}(t)$ is reserved for our analysis in Phase II.
\end{remark}

\subsubsection{Phase II}
\label{sec:phase_II}

During Phase II, the parameter trajectory $\vw(t)$ already resides near the minima manifold $\gM$ of the training loss and will be confined to a neighborhood of $\gM$ (as depicted in the region between two dashed lines in Figure~\ref{fig:model}). Due to the large learning rate, rather than converging directly towards the minima manifold $\gM$, the trajectory of $\vw(t)$ will meander across the manifold, steered by the flatness of the training loss landscape.

\begin{figure}[!htb]
    \centering
    \includegraphics[width=.9\linewidth]{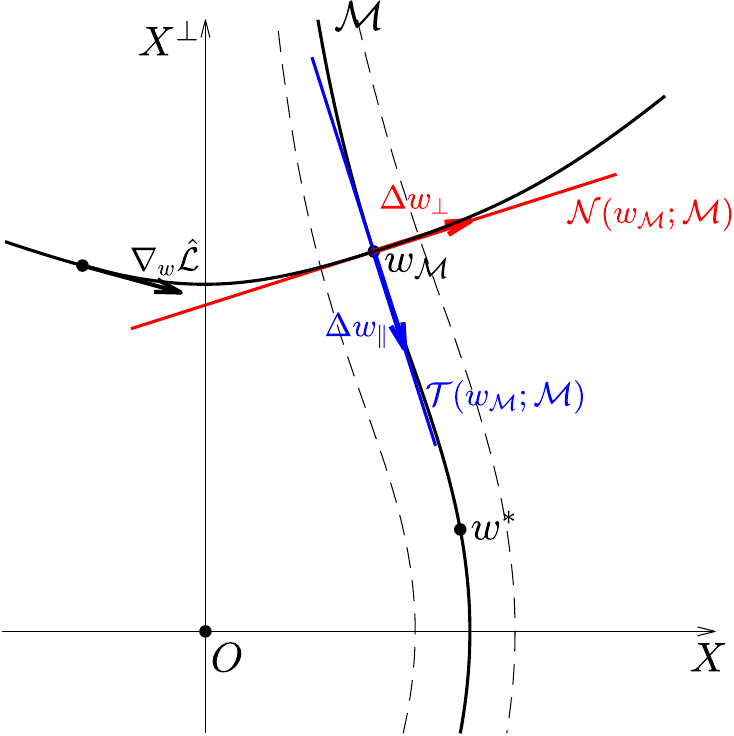}
    \caption{Illustration of the normal space $\gN(\vw_\gM;\gM)$ (highlighteed in {\color{red} red}) and the tangent space $\gT(\vw_\gM;\gM)$ (highlighted in {\color{blue} blue}) of the manifold $\gM$ around a point $\vw_\gM$ on $\gM$. The gradient flow trajectory starting from $\vw(0)$ is represented by a black curve. Due to the stochastic gradient, the dynamics of $\vw(t)$ do not exactly follow the gradient flow but still enter the neighborhood of $\gM$, depicted as the region between the two dashed lines in Phase I. During Phase II, we focus on the effective dynamics of $\vw(t)$ along the minima manifold $\gM$, denoted by $\vw_\gM(t)$. The time scale separation of the dynamics in the normal and tangent spaces allows a quasistatic approach for the analysis of $\vw_\gM(t)$.}
    \label{fig:model}
\end{figure}

Grounded in the insights from~\citet{li2021happens,ma2022beyond}, we sidestep the landscape of the empirical loss $\hat \gL(\vw; \vw^*,\gD)$ where $\vw$ is far from $\gM$ and only consider the \emph{effective dynamics} of $\vw(t)$ along the minima manifold $\gM$, denoted by $\vw_\gM(t)$. Specifically, we consider the quadratic expansion of $\hat \gL(\vw; \vw^*,\gD)$ around the minima manifold $\gM$ of the training loss, and
analyze the dynamics along the manifold by a \emph{quasistatic} analysis: (a) Projecting the dynamics of $\vw(t)$ onto the tangent space and the normal space of the manifold $\gM$, (b) Determining the stationary distribution of the fast projected dynamics $\Delta\vw_\perp(t)$ in the normal space, and (c) Identifying the effective dynamics in the tangent space by averaging the projected dynamics in the normal space.

\paragraph{SGD with Label Noise.} In Phase II, we consider an SDE of the general form:
\begin{equation}
    \rd \vw(t) = - \nabla_\vw \hat{\gL}(\vw(t); \vw^*, \gD) \rd t + \sqrt{\eta_L} \msigma(\vw(t)) \rd \mB(t),
\label{eq:2_sde}
\end{equation}
where $\msigma(\cdot)\in \R^{d\times d}$ is the diffusion tensor, which we will specify later. Fix the time $t$ and a point $\vw_\gM$ on $\gM$ near $\vw(t)$, we decompose $\vw(t) - \vw_\gM$ into the tangent component $\Delta \vw_\parallel(t)$ along the tangent space $\gT(\vw_\gM;\gM)$ and the normal component $\Delta\vw_\perp(t)$ with respect to the normal space $\gN(\vw_\gM;\gM)$, \emph{i.e.} 
\begin{equation}
    \vw(t) - \vw_\gM = \Delta \vw_\parallel(t) + \Delta\vw_\perp(t).
\label{eq:2_decomposition}
\end{equation} 
By Proposition~\ref{prop:space} and $\mA_\gM$ defined therein, we are able to perform the projections with closed forms and rewrite~\Eqref{eq:2_sde} as the following system of SDE:
\begin{equation}
    \begin{cases}
        \begin{aligned}
            \rd \Delta \vw_\parallel(t) = - \gP_{\mA_\gM^{-1}\mX^\perp} \nabla_\vw& \hat{\gL}(\vw(t); \vw^*, \gD) \rd t\\
            & +\sqrt{\eta_L} \msigma_{\parallel}(\vw(t)) \rd \mB(t),
        \end{aligned}\\
        \begin{aligned}
            \rd \Delta \vw_\perp(t) = - \gP_{\mA_\gM\mX} \nabla_\vw \hat{\gL}&(\vw(t); \vw^*, \gD) \rd t \\ & + \sqrt{\eta_L} \msigma_{\perp}(\vw(t)) \rd \mB(t),
        \end{aligned}
    \end{cases}
    \label{eq:2_sde_proj}
\end{equation}
where $\msigma_\parallel(\cdot) = \gP_{\mA^{-1}_\gM \mX^\perp} \msigma(\cdot)$ and $\msigma_\perp(\cdot) = \gP_{\mA_\gM \mX} \msigma(\cdot)$ are the diffusion tensors in the tangent and normal space respectively.

As studied in~\citet{blanc2020implicit,damian2021label}, the scenario where the labels $\{y_i\}_{i=1}^n$ of the dataset $\gD$ contains binary noises, \emph{i.e.} $y_i = (\valpha^*)^\top \vx_i + \xi_i$ where $\xi_i \sim \unif\{-\sigma, \sigma\}$, corresponds to the diffusion tensor $\msigma$ satisfying the following assumption:
\begin{equation}
    \msigma(\vw(t)) \msigma(\vw(t))^\top := \sigma^2 \nabla^2 \hat{\gL}(\vw(t); \vw^*, \gD),
    \label{eq:2_assumption}
\end{equation}
causing $\msigma_{\parallel}(\vw(t))$ to vanish, for $\vw(t)$ near $\gM$. 

\paragraph{Stationary Distribution along the Normal Space.} 
Expand the empirical loss $\hat{\gL}(\vw; \vw^*, \gD)$ around $\vw_\gM$ (\emph{cf.}~\Eqref{eq:L_expansion}), we have
\begin{equation}
    \nabla_\vw \hat{\gL}(\vw_\gM +\Delta \vw; \vw^*, \gD) = \dfrac{1}{n} \mA_\gM \mX\mX^\top (\valpha - \valpha^*)+ o(\|\Delta\vw\|),
\label{eq:2_grad}
\end{equation}
from which we see that the first order term of $\hat{\gL}(\vw; \vw^*, \gD)$ is in the normal space and thus the drift term of the dynamics of $\Delta\vw_\perp(t)$ is of the first order of $\Delta \vw$, while the drift term of the dynamics of $\Delta\vw_\parallel(t)$ is at most of the second order of $\Delta \vw$. This indicates that the dynamics of $\Delta \vw_\parallel(t)$ are much slower than that of $\Delta \vw_\perp(t)$. Therefore, we consider rescaling the dynamics of $\Delta \vw_\perp(t)$ by $\Delta \vw_\perp(t+\tau)$ from time $t$ with a smaller time scale $\tau$. In fact, as we will see in Lemma~\ref{lem:effective_dynamics}, the scale of $t$ is $\Theta(\eta_L^{-1})$ greater than that of $\tau$. 

By choosing the the parametrization $\Delta \vw_\perp(t+\tau) = \mA_\gM \mX \veps(\tau)$, where 
$\veps(\tau)\in\R^n$, and expanding around $\vw_\gM$, one can show that the rescaled dynamics of $\Delta \vw_\perp(t+\tau)$ under the assumption~\eqref{eq:2_assumption} is simplified to the following Ornstein-Uhlenbeck (OU) process
\begin{equation}
    \rd \veps(\tau) = - \dfrac{1}{n}\mX^\top \mA_\gM^2\mX \veps(\tau)\rd \tau  + \sqrt{\dfrac{\sigma^2\eta_L}{n}} \rd \mB_\perp(\tau),
\label{eq:2_ou}
\end{equation}
where $\mB_\perp$ denotes an $n$-dimensional Brownian motion. Notice that the above OU process has exponentially mixing property, whereby we approximate the dynamics of $\Delta\vw_\perp(t)$ at the large time scale as
\begin{equation}
    \Delta\vw_\perp(t) \sim \gN\left(\vzero, \dfrac{\sigma^2\eta_L}{2}  \mA_\gM \mX(\mX^\top \mA_\gM^2 \mX)^{-1}\mX^\top \mA_\gM\right),
\label{eq:2_stationary}
\end{equation}
\emph{i.e.} assuming $\Delta\vw_\perp(t)$ is at the stationary distribution of the OU process~\Eqref{eq:2_ou} at time $t$.

\paragraph{Effective Dynamics along the Tangent Space.} We then consider the effective dynamics of $\Delta \vw_\parallel(t)$ along the tangent space $\gT(\vw_\gM;\gM)$ by taking expectation of the stationary distribution~\eqref{eq:2_stationary} of the rescaled dynamics $\Delta \vw_\perp(t+\tau)$ in the normal space $\gN(\vw_\gM;\gM)$ for each time $t$, \emph{i.e.}
\begin{equation}
    \dfrac{\rd \Delta \vw_\parallel(t)}{\rd t} = - \E_{\Delta \vw_\perp(t)}  \left[\gP_{\mA_\gM^{-1}\mX^\perp} \nabla_\vw \hat{\gL}(\vw(t); \vw^*, \gD)\right].
\label{eq:2_effective}
\end{equation}

Summarizing the above analysis, we have the following characterization of the dynamics of $\vw(t)$ in Phase II:
\begin{lemma}
    Under the dynamics in~\Eqref{eq:2_sde_proj} with the label noise assumption~\citep{blanc2020implicit,damian2021label}, the effective dynamics of $\vw(t)$ in Phase II are given by
    \begin{equation}
         \dfrac{\rd \vw_\gM(t)}{\rd t} = - A\|\vw_\gM(t)\|^{2\gamma-1} C(\vw_\gM(t))\gP_{ \mA_\gM^{-1}\mX^\perp}\overline{\vw_\gM(t)},
    \label{eq:2_effective_dynamics}
    \end{equation}
    where $A = \eta_L\sigma^2 \gamma/2$ and
    \begin{equation}
        C(\vw_\gM) =\dfrac{1}{n} \left(\tr(\mX^\top \mX) - (\gamma+2)\overline{\vw_\gM}^\top \mX \mX^\top \overline{\vw_\gM}\right).
    \label{eq:2_C}
    \end{equation}
    \label{lem:effective_dynamics}
\end{lemma}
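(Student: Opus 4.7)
The plan is to start from the effective-dynamics equation~\eqref{eq:2_effective} for $\Delta\vw_\parallel(t)$, Taylor-expand $\nabla_\vw \hat\gL(\vw_\gM + \Delta\vw; \vw^*, \gD)$ around the nearby manifold point $\vw_\gM$, and evaluate the expectation against the OU stationary distribution~\eqref{eq:2_stationary} of $\Delta\vw_\perp$. Writing the gradient as $\tfrac{1}{n}\mA(\vw)\mX\mX^\top(\valpha(\vw)-\valpha^*)$ and using the manifold condition $\mX^\top(\valpha_\gM-\valpha^*)=\vzero$, the zeroth-order term vanishes and the first-order term reduces to $\tfrac{1}{n}\mA_\gM\mX\mX^\top\mA_\gM\Delta\vw$, which lies in the column space of $\mA_\gM\mX$. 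Since $(\mA_\gM^{-1}\mX^\perp)^\top(\mA_\gM\mX)=(\mX^\perp)^\top\mX=\vzero$, the projector $\gP_{\mA_\gM^{-1}\mX^\perp}$ annihilates it, and the same argument kills the second-order piece proportional to $\mA_\gM\mX\mX^\top D^2\valpha_\gM[\Delta\vw,\Delta\vw]$.

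Next I would isolate the surviving second-order term, $\tfrac{1}{n}(D\mA_\gM[\Delta\vw])\mX\mX^\top\mA_\gM\Delta\vw$, where $D\mA_\gM[\cdot]$ denotes the directional derivative at $\vw_\gM$ of $\mA(\vw)=\|\vw\|^\gamma(\mI+\gamma\overline{\vw}\overline{\vw}^\top)$. Since the tangential motion is frozen on the fast OU time scale, I would replace $\Delta\vw$ by $\Delta\vw_\perp=\mA_\gM\mX\veps$ with $\veps\sim\gN\bigl(\vzero,(\sigma^2\eta_L/2)(\mX^\top\mA_\gM^2\mX)^{-1}\bigr)$ and compute the Gaussian expectation of the resulting vector-valued quadratic form. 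Using the eigenrelation $\mA_\gM\overline{\vw_\gM}=(1+\gamma)\|\vw_\gM\|^\gamma\overline{\vw_\gM}$ together with the explicit formula for $D\mA_\gM$, the direction after projection simplifies to $-\gP_{\mA_\gM^{-1}\mX^\perp}\overline{\vw_\gM}$ while the magnitude collapses to $\|\vw_\gM\|^{2\gamma-1}\cdot\tfrac{\gamma}{n}\bigl(\tr(\mX^\top\mX)-(\gamma+2)\overline{\vw_\gM}^\top\mX\mX^\top\overline{\vw_\gM}\bigr)$; collecting constants gives $A=\eta_L\sigma^2\gamma/2$, and identifying $\rd\vw_\gM(t)/\rd t$ with $\rd\Delta\vw_\parallel(t)/\rd t$ (since $\vw_\gM$ parametrizes the slow tangential motion along $\gM$) yields~\eqref{eq:2_effective_dynamics}.

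The main obstacle will be the algebraic bookkeeping in the Gaussian expectation: three distinct sources of anisotropy along the radial direction $\overline{\vw_\gM}$ --- the $\gamma\overline{\vw}\overline{\vw}^\top$ term inside $\mA$, its directional derivative, and the radial weight inside the covariance of $\veps$ --- must combine to give exactly the coefficient $(\gamma+2)$ in $C(\vw_\gM)$. Matching powers of $\|\vw_\gM\|$ throughout provides a useful consistency check, and the isotropic $\tr(\mX^\top\mX)/n$ piece emerges from the identity $\tr\bigl(\mX(\mX^\top\mA_\gM^2\mX)^{-1}\mX^\top\mA_\gM^2\bigr)=n$, which cleanly separates the isotropic and radial contributions.
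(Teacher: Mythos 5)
Your proposal follows essentially the same route as the paper's Appendix~B proof: expand $\nabla_\vw\hat{\gL}$ to second order around $\vw_\gM$, average against the stationary law~\eqref{eq:2_stationary}, observe that the first-order term and the $\mA_\gM\mX\mX^\top\nabla^2_\vw\valpha$ piece are annihilated by $\gP_{\mA_\gM^{-1}\mX^\perp}$, and reduce the surviving $D\mA_\gM[\Delta\vw]\,\mX\mX^\top\mA_\gM\Delta\vw$ term via the eigenrelation $\mA_\gM\overline{\vw_\gM}=(1+\gamma)\|\vw_\gM\|^\gamma\overline{\vw_\gM}$ and Sherman--Morrison for $\mA_\gM^{-1}$. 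One small imprecision in your closing remark: the isotropic $\tr(\mX^\top\mX)$ piece comes from $\E\!\left[(\Delta\vw_\perp)^\top\mX\mX^\top\mA_\gM\Delta\vw_\perp\right]=\tfrac{\sigma^2\eta_L}{2}\tr(\mX^\top\mA_\gM\mX)=\tfrac{\sigma^2\eta_L}{2}\|\vw_\gM\|^\gamma\bigl(\tr(\mX^\top\mX)+\gamma\,\overline{\vw_\gM}^\top\mX\mX^\top\overline{\vw_\gM}\bigr)$, via the cancellation $(\mX^\top\mA_\gM^2\mX)(\mX^\top\mA_\gM^2\mX)^{-1}=\mI_n$, rather than from the (true but not directly used) identity $\tr\bigl(\mX(\mX^\top\mA_\gM^2\mX)^{-1}\mX^\top\mA_\gM^2\bigr)=n$.
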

\vspace{-2em}
The $\eta_L$ factor in $A$ reflects the time scale separation between the dynamics of $\vw(t)$ and $\vw_\gM(t)$ as we claimed before. The effective dynamics~\eqref{eq:2_effective_dynamics} have a clear geometric interpretation: as shown in Figure~\ref{fig:model}, whenever $C(\vw_\gM(t))\geq 0$, we have $\dot \vw_\gM\propto -\gP_{ \mA_\gM^{-1}\mX^\perp}\overline{\vw_\gM(t)}$ pointing towards the column space of $\mX$, which leads to the following result: 
\begin{theorem}
    For any $\gamma \geq 0$, the effective dynamics~\eqref{eq:2_effective_dynamics} converge to the minimum $L^2$-norm solution of the population loss $\gL(\vw; \vw^*, \gD)$, denoted by $\vw^\dagger$, with probability $1 - \exp(-\Omega(nd))$. The norm of $\|\vw_\gM(t)\|$ satisfies:
    \begin{equation}
        \dfrac{\rd \|\vw_\gM(t)\|}{\rd t} \leq -B\left(\|\vw_\gM(t)\|^{2\gamma-1}  - \|\vw_\gM(t)\|^{-3} \|\valpha_\mX^*\|^2\right),
    \label{eq:2_ode_3}
    \end{equation}
    where $B = \frac{\sigma^2 \eta_L\gamma\underline{C}d}{2(1+\gamma)^2}$, $\underline{C} = \inf_{t\geq 0} C(\vw_\gM(t))$, and $\valpha_\mX^* = \gP_\mX \valpha^*$.
    Moreover, for any $\gamma > 1/2$, the convergence in~\Eqref{eq:2_ode_3} is exponentially fast, \emph{i.e.} after time 
    $
    \gO\left(\frac{\log \delta^{-1}}{\sigma^2 \eta_L \underline{C}d} \right)
    $,
    we have $\|\vw_\gM(t)\| - \|\vw^\dagger\| \leq \delta$.
\label{thm:phase_II}
\end{theorem}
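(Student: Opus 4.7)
The plan is to reduce the vector-valued effective dynamics~\eqref{eq:2_effective_dynamics} to a scalar ODE for $r(t) := \|\vw_\gM(t)\|$ and analyze this scalar equation. The structure of the manifold $\gM$ makes $r$ the natural Lyapunov quantity: writing $\valpha_\gM := \|\vw_\gM\|^\gamma \vw_\gM = \valpha_\mX^* + \valpha_{\perp,\gM}$ with $\valpha_{\perp,\gM} \in \mX^\perp$ (the $\gP_\mX$-part being fixed to $\valpha_\mX^*$ by membership in $\gM$) and applying Pythagoras gives $\|\vw_\gM\|^{2(1+\gamma)} = \|\valpha_\mX^*\|^2 + \|\valpha_{\perp,\gM}\|^2$. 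Hence $r \geq \|\valpha_\mX^*\|^{1/(1+\gamma)} =: \|\vw^\dagger\|$ throughout $\gM$, with equality precisely when $\valpha_{\perp,\gM} = \vzero$. This identifies $\vw^\dagger$ as the unique minimum-$L^2$-norm element of $\gM$, so convergence of $r(t)$ to $\|\vw^\dagger\|$ is equivalent to convergence of $\vw_\gM(t)$ to $\vw^\dagger$.

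Taking the inner product of~\eqref{eq:2_effective_dynamics} with $\vw_\gM$ and using $\vu^\top \gP_\gS \vu = \|\gP_\gS \vu\|^2$ for any orthogonal projection, I obtain
\begin{equation*}
\frac{dr}{dt} = -A\, C(\vw_\gM)\, r^{2\gamma-1}\, q(\vw_\gM), \qquad q := \|\gP_{\mA_\gM^{-1}\mX^\perp}\overline{\vw_\gM}\|^2.
\end{equation*}
The crux of the argument, and the main obstacle, is evaluating $q$ in closed form. The key ingredients are: (i) $\overline{\vw_\gM}$ is an eigenvector of $\mA_\gM = r^\gamma(\mI + \gamma\overline{\vw_\gM}\overline{\vw_\gM}^\top)$ with eigenvalue $(1+\gamma)r^\gamma$; (ii) membership $\vw_\gM \in \gM$ forces $\mX^\top \vw_\gM = r^{-\gamma} \mX^\top \valpha_\mX^*$; (iii) $\mA_\gM \mX$ and $\mA_\gM^{-1}\mX^\perp$ are orthogonal complements, so $q = 1 - \|\gP_{\mA_\gM\mX}\overline{\vw_\gM}\|^2$. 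Expanding this projection and applying Sherman--Morrison to the rank-one perturbation $\mX^\top \mA_\gM^2 \mX = r^{2\gamma}(\mX^\top\mX + c\,\vv\vv^\top)$ with $\vv = \mX^\top \valpha_\mX^*$ and $c = \gamma(2+\gamma)r^{-2(1+\gamma)}$, together with $\vv^\top(\mX^\top\mX)^{-1}\vv = \|\valpha_\mX^*\|^2$ (from $\valpha_\mX^* \in \mX$), collapses the algebra to
\begin{equation*}
q = \frac{1-s}{1+\gamma(2+\gamma)s}, \qquad s := \|\valpha_\mX^*\|^2/r^{2(1+\gamma)} \in [0,1].
\end{equation*}
Using $1 + \gamma(2+\gamma)s \leq (1+\gamma)^2$, $C(\vw_\gM(t)) \geq \underline{C}$, and the identity $r^{2\gamma-1}(1-s) = r^{2\gamma-1} - r^{-3}\|\valpha_\mX^*\|^2 \geq 0$ (nonnegative by $r \geq \|\vw^\dagger\|$), the differential inequality~\eqref{eq:2_ode_3} falls out.

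The high-probability control of $\underline{C}$ comes from standard Gaussian concentration. Since $\tr(\mX^\top\mX)$ is a sum of $nd$ independent $\chi^2_1$ variables with mean $nd$, Bernstein's inequality gives $\tr(\mX^\top\mX)/n \geq d/2$ with probability $1 - \exp(-\Omega(nd))$. The second summand satisfies $\overline{\vw_\gM}^\top \mX\mX^\top \overline{\vw_\gM} = r^{-2(1+\gamma)}(\valpha_\mX^*)^\top \mX\mX^\top \valpha_\mX^* \leq \lambda_{\max}(\mX^\top\mX)$ uniformly in $t$ (using $r \geq \|\vw^\dagger\|$ and $\valpha_\mX^* \in \mX$), while random-matrix concentration bounds $\lambda_{\max}(\mX^\top\mX) = O(d+n)$ at the same exponentially small failure rate, yielding $\underline{C} = \Omega(d)$ with probability $1 - \exp(-\Omega(nd))$.

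Finally, for $\gamma > 1/2$, I rewrite~\eqref{eq:2_ode_3} as $\dot r \leq -B r^{-3}(r^{2\gamma+2} - (r^*)^{2\gamma+2})$ with $r^* := \|\vw^\dagger\|$, and apply convexity of $u \mapsto u^{2\gamma+2}$ (since $2\gamma+2 > 2$) to obtain $r^{2\gamma+2} - (r^*)^{2\gamma+2} \geq (2\gamma+2)(r^*)^{2\gamma+1}(r-r^*)$ whenever $r \geq r^*$. Since $r(t)$ is monotone non-increasing along the dynamics (the RHS of~\eqref{eq:2_ode_3} being nonpositive) and bounded below by $r^*$, $r^{-3}$ remains bounded below by $r_0^{-3}$, producing a linear differential inequality $\dot{(r-r^*)} \leq -\lambda(r-r^*)$ with rate $\lambda = \Omega(\sigma^2 \eta_L \underline{C} d)$ (absorbing $r_0$- and $r^*$-dependent factors into the $\gO$); Gronwall then delivers the stated $\gO(\log \delta^{-1}/(\sigma^2 \eta_L \underline{C} d))$ time to reach $r^*+\delta$, and convergence of $\vw_\gM(t)$ to $\vw^\dagger$ follows from the first paragraph's equivalence.
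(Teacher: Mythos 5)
Your proof is correct and follows the paper's high-level route (reduce to a scalar ODE for $r(t) := \|\vw_\gM(t)\|$, use the explicit parametrization of $\gM$ from Proposition~\ref{prop:representation}, apply Sherman--Morrison to evaluate the projection inner product, and Gaussian concentration to control $\underline{C}$), but it diverges in two places in ways worth recording. To evaluate $q := \langle\overline{\vw_\gM}, \gP_{\mA_\gM^{-1}\mX^\perp}\overline{\vw_\gM}\rangle$, you exploit that $\mA_\gM\mX$ and $\mA_\gM^{-1}\mX^\perp$ are orthogonal complements, so $q = 1 - \|\gP_{\mA_\gM\mX}\overline{\vw_\gM}\|^2$, and apply Sherman--Morrison to the $n\times n$ matrix $\mX^\top\mA_\gM^2\mX$; this yields the exact closed form $q = (1-s)/(1+\gamma(2+\gamma)s)$ with $s = \|\valpha_\mX^*\|^2 r^{-2(1+\gamma)}$. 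The paper instead inverts $(\mX^\perp)^\top\mA_\gM^{-2}\mX^\perp$ and, in fact, its Sherman--Morrison step carries a sign error (the denominator should be $1 - \tfrac{\gamma^2+2\gamma}{(1+\gamma)^2}\|\vv\|^2$, not $1+\cdots$), though the final bound $q \geq \|\vv\|^2/(1+\gamma)^2$ it extracts is still correct and coincides with the bound $q \geq (1-s)/(1+\gamma)^2$ that your closed form delivers. More substantively, for the exponential rate the paper linearizes $r^{2\gamma-1}$ around $r^*$ and replaces $r^{-3}$ by $(r^*)^{-3}$, in effect claiming $g(r) := r^{2\gamma-1} - r^{-3}\|\valpha_\mX^*\|^2 \geq (2\gamma-1)(r^*)^{2\gamma-2}(r-r^*)$ for all $r\geq r^*$; this fails for $1/2 < \gamma < 1$ at large $r$, since $r\mapsto r^{2\gamma-1}$ is then concave with sublinear growth and cannot dominate a linear lower bound. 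Your argument --- convexity of $u\mapsto u^{2\gamma+2}$ together with the monotonicity of $r(t)$ to keep $r^{-3}\geq r_0^{-3}$ --- repairs that gap cleanly and is valid on the whole range stated in the theorem, at the modest cost of a rate constant depending on the initial norm $r_0$, which the theorem's $\gO(\cdot)$ absorbs.
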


\begin{remark}
    \label{rem:phase_II}
    We would like to make the following remarks regarding the above theorem:
    \vspace{-.5em}
    \begin{itemize}[leftmargin=*]
        \item The high probability argument is due to the randomness in the data generation of $\mX$. One should notice that $C(\vw_\gM)\geq 0$ holds if the numerical rank of the data matrix $r(\mX) := \|\mX\|_F/\|\mX\|\geq \sqrt{\gamma + 2}$. In high-dimensional spaces ($d\to\infty$), $r(\mX)\gtrsim \sqrt{n}$, and thus $C(\vw_\gM)$ can be lower bounded with high probability (\emph{cf.} Lemma~\ref{lem:positive_C}). 
        \item  Mirroring the $L^2$-regularization or the ridge regression helps avoid overfitting in practice, the minimum $L^2$-norm solution $\vw^\dagger$ of overparametrized models is both intuitively and provably the near-optimal solution under certain assumptions~\citep{wu2020optimal,bartlett2020benign}. Notably, our result here aligns with those in the literature of overparametrized ridgeless regrssion~\citep{liang2020just,hastie2022surprises}.
        \item As observed from~\Eqref{eq:2_ode_3}, the condition $\gamma\geq0$ can be relaxed to $\gamma>-1$, by which we still have $\|\vw_\gM(t)\|^{2\gamma-1} \geq \|\vw_\gM(t)\|^{-3} \|\valpha_\mX^*\|^2$ and thus the convergence of the effective dynamics. This makes our model less restrictive compared with linear diagonal networks~\citep{gunasekar2018implicit,woodworth2020kernel}. We would also like to point out that $\gamma = -1$ refers to the weight normalization~\citep{wu2020implicit,chou2023robust}, where the convergence to the minimum $L^2$-norm solution is also obtained. Still, an additional parameter is needed to control the length of the parameter $\valpha$.
    \end{itemize}
\end{remark}

Further explanations of the above analysis and the proof of the above theorem are deferred to Appendix~\ref{app:phase_II}.

\subsubsection{Phase III}
\label{sec:phase_III}

Suppose the trajectory of $\vw(t)$ reaches the neighborhood of a point $\vw_\gM$ on the minima manifold $\gM$ of the training loss. We then perform a local analysis around $\vw_\gM$. When the step size $\eta_S$ is sufficiently small, the trajectory of performing SGD on the empirical loss $\hat{\gL}(\vw; \vw^*, \gD)$ can be approximated by the gradient flow~\citep{smith2021origin}, \emph{i.e.} 
\begin{equation}
    \rd \vw(t) = -\nabla_\vw \hat{\gL}(\vw(t); \vw^*, \gD)\rd t.
\label{eq:3_sde}
\end{equation}

Again, by approximating the local geometry by the quadratic expansion of $\hat{\gL}(\vw; \vw^*, \gD)$ around $\vw_\gM$, we have the following exponential convergence:
\begin{theorem}
    Under the dynamics in~\Eqref{eq:3_sde}, the parameter $\vw(t)$ converges to a nearby $\vw_\gM\in\gM$ exponentially fast, \emph{i.e.} after time $\gO(\log \delta^{-1})$, we have 
    $\|\vw(t) - \vw_\gM\| \leq \delta$.
\label{thm:phase_III}
\end{theorem}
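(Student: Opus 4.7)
The plan is to exploit the quadratic structure of $\hat\gL$ near the minima manifold $\gM$, reusing the geometric apparatus developed for Phase II. First I would expand $\hat\gL(\vw; \vw^*, \gD)$ to second order around the projection $\vw_\gM$ of $\vw(0)$ onto $\gM$. Because $\mX^\top(\valpha(\vw_\gM) - \valpha^*) = 0$, the leading term of the gradient in~\eqref{eq:2_grad} vanishes at $\vw_\gM$, and the Hessian takes the form $\mH_\gM = \frac{1}{n}\mA_\gM \mX \mX^\top \mA_\gM^\top$ up to corrections that vanish on $\gM$. Since $\mA_\gM$ is invertible (by Proposition~\ref{prop:space}) and $\mX$ has full column rank, $\mH_\gM$ is positive semi-definite whose null space is exactly $\gT(\vw_\gM;\gM) = \mA_\gM^{-1}\mX^\perp$ and whose range is $\gN(\vw_\gM;\gM) = \mA_\gM\mX$. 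Let $\lambda > 0$ denote the smallest nonzero eigenvalue of $\mH_\gM$.

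Next, using the decomposition $\vw(t) - \vw_\gM = \Delta\vw_\parallel(t) + \Delta\vw_\perp(t)$ from~\eqref{eq:2_decomposition}, I would rewrite~\eqref{eq:3_sde} as
$$\dot\vw(t) = -\mH_\gM \Delta\vw_\perp(t) + R(\Delta\vw(t)),$$
where $R(\Delta\vw) = O(\|\Delta\vw\|^2)$. Equivalently, the quadratic expansion yields a local Polyak-Lojasiewicz inequality $\|\nabla\hat\gL(\vw)\|^2 \geq 2\lambda\, \hat\gL(\vw)$ in a neighborhood of $\gM$. Combined with the identity $\frac{\rd}{\rd t}\hat\gL(\vw(t)) = -\|\nabla\hat\gL(\vw(t))\|^2$, this gives $\hat\gL(\vw(t)) \leq e^{-2\lambda t}\hat\gL(\vw(0))$, and consequently $\mathrm{dist}(\vw(t),\gM) \leq C e^{-\lambda t}\mathrm{dist}(\vw(0),\gM)$ for some constant $C$ depending on the conditioning of $\mH_\gM$.

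To identify a specific limit point $\tilde\vw_\gM \in \gM$, rather than merely bounding the distance to $\gM$, I would estimate the total trajectory length
$$\int_0^\infty \|\dot\vw(t)\|\,\rd t = \int_0^\infty \|\nabla\hat\gL(\vw(t))\|\,\rd t \leq \int_0^\infty \sqrt{2\|\mH_\gM\|\, \hat\gL(\vw(t))}\,\rd t < \infty,$$
which shows that $\vw(t)$ converges to a single $\tilde\vw_\gM \in \gM$ with $\|\vw(t) - \tilde\vw_\gM\| = O(e^{-\lambda t})$. Forcing the right-hand side below $\delta$ then gives the claimed time $\gO(\log\delta^{-1})$.

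The main obstacle is ensuring the trajectory never leaves the neighborhood on which the quadratic expansion is accurate, so that both the Hessian identification and the local PL inequality remain valid along the flow. This I would handle with a standard bootstrapping argument: for $\vw(0)$ initialized sufficiently close to $\gM$, the exponential contraction in the normal direction dominates the $O(\|\Delta\vw\|^2)$ tangential drift, so $\|\vw(t) - \vw_\gM\|$ stays uniformly bounded and the linearization persists for all $t \geq 0$.
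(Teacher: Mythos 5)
Your proposal is correct and reaches the same exponential-convergence conclusion, but it takes a genuinely different route from the paper's. The paper simply posits $\Delta\vw_\parallel(t)\equiv 0$ during Phase III (appealing to the time-scale separation from Phase II), isolates the linear ODE $\dot\veps(t) = -\tfrac{1}{n}\mX^\top\mA_\gM^2\mX\,\veps(t)$ for the normal coordinate $\Delta\vw_\perp(t) = \mA_\gM\mX\veps(t)$, and reads off exponential decay from the smallest eigenvalue of $\mX^\top\mA_\gM^2\mX$. The limit is then the projection $\vw_\gM$ of the initial point, by assumption. You instead avoid the $\Delta\vw_\parallel\equiv 0$ simplification: you establish a local Polyak--\L{}ojasiewicz inequality $\|\nabla\hat\gL\|^2 \geq 2\lambda\hat\gL$ from the Hessian structure, deduce $\hat\gL(\vw(t))\leq e^{-2\lambda t}\hat\gL(\vw(0))$, and then use the finite-trajectory-length bound $\int_0^\infty\|\dot\vw\|\,\rd t \leq \int_0^\infty\sqrt{2\|\mH_\gM\|\hat\gL(\vw(t))}\,\rd t < \infty$ to extract a single limit point $\tilde\vw_\gM\in\gM$ with tail $\|\vw(t)-\tilde\vw_\gM\| = \gO(e^{-\lambda t})$. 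This is a more careful argument: it honestly accounts for the small tangential drift coming from the $\gO(\|\Delta\vw\|^2)$ remainder, and it explicitly flags the bootstrapping needed to keep the trajectory inside the region where the quadratic expansion and hence the PL constant are valid, which the paper leaves implicit. The trade-off is that your limit point need not equal the initial projection $\vw_\gM$, only some nearby point of $\gM$---but this is exactly what the theorem claims (``a nearby $\vw_\gM\in\gM$''), so nothing is lost. The paper's version is shorter and yields a cleaner explicit rate in terms of $\lambda_{\min}(\mX^\top\mA_\gM^2\mX)$; yours is more self-contained and robust to the tangential coupling.
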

The proof of this theorem is standard and will be deferred to Appendix~\ref{app:phase_III}.

Recall that by Theorem~\ref{thm:phase_II}, if Phase II is executed over a sufficiently long period, the effective dynamics $\vw_\gM(t)$ converge to the minimum $L^2$-norm solution $\vw^\dagger$, which means the original dynamics $\vw(t)$ approach the neighborhood of $\vw^\dagger$, and then Phase III is able to recover the minimum $L^2$-norm solution $\vw^\dagger$ of the training loss. In general, the final testing performance thus depends on the timing of the learning rate decay from Phase II to Phase III. This corroborates the empirically observed benefits of late learning rate decay.

\vspace{-.5em}
\section{DISCUSSIONS}
\vspace{-.5em}

In this paper, we delve into understanding the question of why late learning rate decay leads to better generalization. Our study is motivated by experimental observations and the visualization of the training and testing loss landscapes on an image classification task. We subsequently introduce an overparametrized model with a novel nonlinear reparametrization, which presents a ``shrinking''  training loss landscape as the $L^2$-norm of the parameter increases. 

Upon establishing this model, we characterize the training process into three distinct phases. Our analysis emphasizes that during Phase II, which corresponds to the extended period before learning rate decay, the parameter approaches the minimum $L^2$-norm solution. We believe our model and results shed light on the training process of neural networks and provide a new perspective on the generalization of deep learning architectures.
One of the limitations of our work is that our analysis is predominantly based on a continuous-time approach, and the discrete-time analysis for our model is left for future work. 

\subsubsection*{Acknowledgments}

We thank the anonymous reviewers for their helpful comments. 

\bibliography{references}

\appendix

\onecolumn

\section{MISSING PROOFS FOR PHASE I}
\label{app:phase_I}

In this section, we present the missing proofs of the results in Phase I in Section~\ref{sec:phase_I} of the main text. 

Before we present the proofs, we first perform some common computations regarding the gradient of the empirical loss $\hat{\gL}(\vw; \vw^*, \gD)$ and the population loss $\gL(\vw; \vw^*)$ for later reference:
\begin{equation}
    \begin{aligned}
        \nabla_\valpha \hat{\gL}(\vw; \vw^*, \gD) &= \dfrac{1}{n} \mX\mX^\top (\valpha - \valpha^*),\\
        \nabla_\vw \valpha &= \|\vw\|^\gamma \mI + \gamma\|\vw\|^{\gamma-1}\dfrac{\vw}{\|\vw\|} \vw^\top = \|\vw\|^\gamma\left(\mI + \gamma \overline \vw \  \overline \vw^\top\right):= \mA(\vw),\\
        \nabla_\vw \hat{\gL}(\vw; \vw^*, \gD) &= \left(\nabla_\vw \valpha\right) \left(\nabla_\valpha \hat{\gL}(\vw; \vw^*, \gD)\right)= \dfrac{1}{n} \mA(\vw) \mX\mX^\top (\valpha - \valpha^*),
    \end{aligned}
\label{eq:gradient}
\end{equation}
where the gradients are assumed to be column vectors and $\overline \vw = \vw/\|\vw\|$ is the normalized parameter.

For readers' convenience, we restate the SDE in~\Eqref{eq:1_sde} here:
\begin{equation*}
    \rd \vw(t) = -\nabla_\vw \hat \gL(\vw(t); \vw^*,\gD) \rd t + \sqrt{\eta_L} \rd \mB(t).
\end{equation*}

Let $\vw_\mX(t)$ be the projection of $\vw(t)$ onto the column space of $\mX$, and $\vw_{\mX^\perp}(t)$ be the projection of $\vw(t)$ onto the orthogonal complement of the column space of $\mX$. Then, the dynamics~\eqref{eq:1_sde} can be decomposed into the following two SDEs:
\begin{equation}
    \begin{cases}
        \rd \vw_\mX(t) &= - \gP_{\mX} \nabla_\vw \hat \gL(\vw(t); \vw^*,\gD) \rd t + \sqrt{\eta_L} \rd \mB_\mX(t),\\
        \rd \vw_{\mX^\perp}(t) &= - \gP_{\mX^\perp} \nabla_\vw \hat \gL(\vw(t); \vw^*,\gD) \rd t + \sqrt{\eta_L} \rd \mB_{\mX^\perp}(t),
    \end{cases}
\label{eq:1_sde_decomp}
\end{equation}
where $\mB_\mX(t)$ and $\mB_{\mX^\perp}(t)$ are the Brownian motions in the column space of $\mX$ and its orthogonal complement $\mX^\perp$, respectively.

One should notice that the two SDEs in~\eqref{eq:1_sde_decomp} are coupled. Now that we are only interested in the dynamics of the projection $\vw_\mX(t)$, we can eliminate the dynamics of $\vw_{\mX^\perp}(t)$ by assuming a more general drift term in the SDE of $\vw_\mX(t)$ as presented in~\Eqref{eq:1_sde_proj}:
\begin{equation*}
    \rd \vw_\mX(t) = \vb(t, \vw_\mX(t)) \rd t + \sqrt{\eta_L} \rd \mB_\mX(t),
\end{equation*}
where the drift coefficient $\vb(t, \vw_\mX(t))$ is a function of both time $t$ and the projection $\vw_\mX(t)$ such that
$$
    \vb(t, \vw_\mX(t)) = - \gP_{\mX} \nabla_\vw \hat \gL(\vw(t); \vw^*,\gD).
$$
We denote the infinitestimal generator of the SDE~\eqref{eq:1_sde_proj} by $\gA_s$, \emph{i.e.}
$$
    \gA_s = \vb(s, \vw_\mX(s))\cdot\nabla + \dfrac{\eta_L}{2} \tr \nabla^2,
$$
where the subscript $s$ indicates the dependence of the infinitesimal generator on time $s$.

In the following, we adapt the Foster-Lyapunov criteria~\citep{meyn1993stability,kulik2017ergodic} to analyze the mixing time of the dynamics of $\vw_\mX(t)$. Should no confusion arise, we will use $\vw_\mX \in \R^n$ to denote the projection of an arbitrary point $\vw\in\R^d$ onto the column space of $\mX$.

Define the Lyapunov function as 
\begin{equation}
    W(\vw_\mX) = \exp(\alpha \| \vw_\mX \|),
\label{eq:lyapunov}
\end{equation}
where $\alpha$ is a positive constant, we have the following lemma:
\begin{lemma}
    For any time $s$, the infinitestimal generator $\gA_s$ of the dynamics of $\vw_\mX(t)$ satisfies the following drift condition:
    \begin{equation*}
        \gA_s W(\vw_\mX) \leq - C_1 W(\vw_\mX) + C_2,
    \end{equation*}
    where $C_1$ and $C_2$ are positive constants.
\label{lem:lyapunov}
\end{lemma}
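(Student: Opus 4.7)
My plan is to verify the Foster--Lyapunov drift condition by direct computation, exploiting the radial structure of $W$ inside the $n$-dimensional column space of $\mX$. Setting $r = \|\vw_\mX\|$, the function $W(\vw_\mX) = e^{\alpha r}$ is radial in this subspace (isomorphic to $\sR^n$), so the standard formulas give $\nabla W = \alpha W \overline{\vw_\mX}$ and $\tr \nabla^2 W = W(\alpha^2 + (n-1)\alpha/r)$, hence
\begin{equation*}
\gA_s W(\vw_\mX) = \alpha W \bigl(\vb(s, \vw_\mX)\cdot \overline{\vw_\mX}\bigr) + \frac{\eta_L}{2} W\left(\alpha^2 + \frac{(n-1)\alpha}{r}\right).
\end{equation*}
The task then reduces to bounding the drift projection $\vb \cdot \overline{\vw_\mX}$ from above, uniformly in the off-subspace component $\vw_{\mX^\perp}$.

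Next, I would substitute the closed form from~\eqref{eq:gradient} and exploit two structural identities: the orthogonality $\mX^\top \vw_{\mX^\perp} = 0$, which gives $\mX^\top \valpha = \|\vw\|^\gamma \mX^\top \vw_\mX$; and $\vw_\mX^\top \gP_\mX = \vw_\mX^\top$. Expanding $\mA(\vw) = \|\vw\|^\gamma(\mI + \gamma \overline{\vw}\, \overline{\vw}^\top)$ and noting $\overline{\vw}^\top \mX\mX^\top(\valpha - \valpha^*) = \vw_\mX^\top \mX\mX^\top(\valpha - \valpha^*)/\|\vw\|$ yields
\begin{equation*}
\vb \cdot \vw_\mX = -\frac{\|\vw\|^\gamma}{n}\left(1 + \frac{\gamma r^2}{\|\vw\|^2}\right)\Bigl[\|\vw\|^\gamma \|\mX^\top \vw_\mX\|^2 - (\mX^\top \vw_\mX)^\top \mX^\top \valpha^*\Bigr].
\end{equation*}
Invoking $\|\vw\|^\gamma \geq r^\gamma$ (valid for $\gamma \geq 0$) together with the singular value bounds $\sigma_{\min}(\mX)\|\vw_\mX\| \leq \|\mX^\top \vw_\mX\| \leq \sigma_{\max}(\mX)\|\vw_\mX\|$ on the column space, I would conclude
\begin{equation*}
\vb \cdot \overline{\vw_\mX} \leq -\frac{\sigma_{\min}(\mX)^2}{n}\, r^{2\gamma+1} + \frac{\sigma_{\max}(\mX)\|\mX^\top \valpha^*\|}{n}\, r^\gamma
\end{equation*}
once $r$ exceeds a finite threshold.

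Combining the two pieces, the dominant contribution to $\gA_s W / W$ is $-(\alpha\sigma_{\min}(\mX)^2/n)\, r^{2\gamma+1}$, which dwarfs the $r^\gamma$ correction and the $\gO(1)$ diffusion term once $r$ is large enough; hence $\gA_s W \leq -C_1 W$ on some exterior region $\{r \geq R\}$. On the complement $\{r \leq R\}$, $W \leq e^{\alpha R}$ and $\gA_s W$ is uniformly bounded above (a large $\|\vw_{\mX^\perp}\|$ only strengthens the restoring force through the $\|\vw\|^\gamma$ prefactor), so choosing $C_2$ to absorb this compact-region contribution delivers the global inequality.

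The chief obstacle is that $\vb(s,\cdot)$ genuinely depends on the full state $\vw(s)$, with $\|\vw_{\mX^\perp}(s)\|$ a priori unbounded, so a naive pointwise bound cannot directly yield a Lyapunov estimate uniform in time. The structural resolution is that $\mX^\top$ annihilates $\vw_{\mX^\perp}$, eliminating it from the leading bracket, while the residual $\|\vw\|^\gamma$ prefactor only amplifies the negative sign; this is what makes the estimate uniform in the unobserved component. A minor technical nuisance is the non-smoothness of $W$ at $r=0$, which I would handle either by replacing $W$ with the smooth surrogate $e^{\alpha\sqrt{1+r^2}}$ (same asymptotics) or by a standard cutoff argument.
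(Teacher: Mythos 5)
Your proposal is correct and follows essentially the same strategy as the paper's proof: reduce $\gA_s W$ to the radial quantity $\alpha\langle \overline{\vw_\mX}, \vb(s,\vw_\mX)\rangle + \frac{\eta_L}{2}(\alpha^2 + (n-1)\alpha/r)$, substitute the closed form of $\vb$ via~\eqref{eq:gradient}, and show the drift projection is eventually negative uniformly in the hidden component $\vw_{\mX^\perp}$, precisely because $\mX^\top$ annihilates $\vw_{\mX^\perp}$ and $\|\vw\|^\gamma \geq \|\vw_\mX\|^\gamma$ only strengthens the restoring force. Your expansion of $\vb \cdot \vw_\mX$ is algebraically identical to the paper's; the one cosmetic difference is that you state the outcome as an explicit pointwise bound in terms of $\sigma_{\min}(\mX)$, $\sigma_{\max}(\mX)$, whereas the paper takes a $\limsup$ of the normalized quantity $\langle\overline{\vw_\mX},\vb\rangle/(\|\vw\|^{2\gamma}\|\vw_\mX\|)$. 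Both arguments are equivalent and deliver the same $r^{2\gamma+1}$ decay.

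One place where you are actually more careful than the paper: you note that $W(\vw_\mX) = e^{\alpha\|\vw_\mX\|}$ is not $C^1$ at the origin (the $\tr\nabla^2 W$ formula has a $1/r$ singularity), and propose replacing it with the smooth surrogate $e^{\alpha\sqrt{1+r^2}}$. The paper asserts "$V\in C^\infty(\R^n)$" for this $W$, which is in fact false, so your remark identifies a genuine (if easily repaired) oversight; the surrogate has the same exponential asymptotics and the rest of the argument goes through verbatim. The only point to tighten is the phrase "once $r$ exceeds a finite threshold": your explicit bound on $\vb\cdot\overline{\vw_\mX}$ requires the bracket $\|\vw\|^\gamma\|\mX^\top\vw_\mX\|^2 - (\mX^\top\vw_\mX)^\top\mX^\top\valpha^*$ to be nonnegative before you can use $\|\vw\|^\gamma \geq r^\gamma$ in the prefactor, so the threshold should be stated as $r \geq (\sigma_{\max}\|\mX^\top\valpha^*\|/\sigma_{\min}^2)^{1/(\gamma+1)}$, and the compact region then needs the observation (which your structural resolution already contains) that when the bracket is negative the constraint $\|\vw\|^\gamma r < \sigma_{\max}\|\mX^\top\valpha^*\|/\sigma_{\min}^2$ keeps $\vb\cdot\overline{\vw_\mX}$ uniformly bounded.
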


\begin{proof}
    Since the drift and diffusion tensors $\vb(s, \vw_\mX(s))$ and $\eta_L$ are locally bounded and $V\in C^\infty(\R^n)$, we only have to prove 
    $$
        \limsup_{\|\vw_\mX\|\to\infty} \gA_s W(\vw_\mX) \leq - C_1,
    $$
    for some positive constant $C_1$, which is equivalent to
    $$
        \limsup_{\|\vw_\mX\|\to\infty} \dfrac{\gA_s W(\vw_\mX)}{W(\vw_\mX)} < 0.
    $$

    To compute $\gA_s W(\vw_\mX)$, we first calculate the gradient and the Hessian of $W(\vw_\mX)$:
    $$
        \begin{aligned}
            \nabla W(\vw_\mX) &= \alpha W(\vw_\mX)\overline{\vw_\mX},\\
            \nabla^2 W(\vw_\mX) &= \alpha^2 W(\vw_\mX)\overline{\vw_\mX}\ \overline{\vw_\mX}^\top + \alpha W(\vw_\mX)\dfrac{\mI - \overline{\vw_\mX}\ \overline{\vw_\mX}^\top}{\|\vw_\mX\|},
        \end{aligned}
    $$
    where $\overline{\vw_\mX} = \vw_\mX/\|\vw_\mX\|$. Then we have 
    \begin{equation}
        \begin{aligned}
            \gA_s W(\vw_\mX) &= 
            \alpha W(\vw_\mX)\langle \overline{\vw_\mX}, \vb(s, \vw_\mX) \rangle + \dfrac{\eta_L}{2}  \tr\left(\alpha^2 W(\vw_\mX) \overline{\vw_\mX}\ \overline{\vw_\mX}^\top + \alpha  W(\vw_\mX) \dfrac{\mI - \overline{\vw_\mX}\ \overline{\vw_\mX}^\top}{\|\vw_\mX\|}\right)\\
            &= W(\vw_\mX)\left[ \alpha \langle \overline{\vw_\mX}, \vb(s, \vw_\mX) \rangle  +  \dfrac{\eta_L}{2} \left(\alpha^2 + \alpha \dfrac{n - 1}{\|\vw_\mX\|}\right)  \right],
        \end{aligned}
        \label{eq:lyapunov_1}
    \end{equation}
    where $\langle \cdot, \cdot \rangle$ denotes the inner product.

    Plugging~\eqref{eq:gradient} into~\eqref{eq:lyapunov_1}, we have
    \begin{equation*}
        \begin{aligned}
            \langle \overline{\vw_\mX}, \vb(s, \vw_\mX) \rangle &= -\dfrac{1}{n} \langle \overline{\vw_\mX}, \gP_\mX \nabla_\vw \hat \gL (\vw; \vw^*, \gD)\rangle\\
            &= -\dfrac{1}{n} \langle  \overline{\vw_\mX},  \mA(\vw) \mX\mX^\top (\valpha - \valpha^*)\rangle\\
            &= -\dfrac{\|\vw\|^\gamma}{n} \langle  \overline{\vw_\mX},  \mX\mX^\top(\valpha - \valpha^*)\rangle - \dfrac{\|\vw\|^\gamma}{n} \langle  \overline{\vw_\mX},  \overline \vw \  \overline \vw^\top \mX\mX^\top (\valpha - \valpha^*)\rangle \\
            &= -\dfrac{\|\vw\|^\gamma}{n} \langle  \overline{\vw_\mX},  \mX\mX^\top(\valpha - \valpha^*)\rangle - \dfrac{\|\vw\|^\gamma}{n}\overline{\vw_\mX}^\top \overline \vw \  \overline \vw^\top \mX\mX^\top (\valpha - \valpha^*),
        \end{aligned}
    \end{equation*}
    where the second equality is due to $\overline{\vw_\mX} \in \mX$.

    Notice that $\valpha = \| \vw \|^\gamma \vw$ by definition, we have
    \begin{equation}
        \begin{aligned}
            &\limsup_{\|\vw_\mX\|\to\infty} \dfrac{\langle \overline{\vw_\mX}, \vb(s, \vw_\mX) \rangle}{\|\vw\|^{2\gamma} \|\vw_\mX\|}\\ =&\limsup_{\|\vw_\mX\|\to\infty}  -\dfrac{1}{n} \left\langle  \overline{\vw_\mX},  \mX\mX^\top \left(\overline{\vw_\mX} - \dfrac{\valpha^*}{\|\vw\|^{\gamma} \|\vw_\mX\|}\right) \right\rangle - \dfrac{1}{n}\overline{\vw_\mX}^\top \overline \vw \  \overline \vw^\top \mX\mX^\top \left(\overline{\vw} \dfrac{\|\vw\|}{\|\vw_\mX\|} - \dfrac{\valpha^*}{\|\vw\|^{\gamma}\|\vw_\mX\|}\right)\\
            =& \limsup_{\|\vw_\mX\|\to\infty} -\dfrac{1}{n} \left\langle  \overline{\vw_\mX},  \mX\mX^\top \overline{\vw_\mX}\right\rangle - \dfrac{1}{n}\overline{\vw_\mX}^\top \overline \vw \  \overline \vw^\top \mX\mX^\top \overline{\vw}\dfrac{\|\vw\|}{\|\vw_\mX\|} \\
            \leq& \limsup_{\|\vw_\mX\|\to\infty} -\dfrac{1}{n}  \overline{\vw_\mX}^\top \mX\mX^\top \overline{\vw_\mX} < 0,
        \end{aligned}
    \label{eq:lyapunov_2}
    \end{equation}
    where the first equality is due to $\mX^\top \vw = \mX^\top \vw_\mX$, the second equality is because $\|\vw\|\geq \|\vw_\mX\|$ and hence $\|\vw_\mX\|\to\infty$ implies $\|\vw\|\to\infty$, the second to last inequality is due to $\overline{\vw_\mX}^\top \overline \vw \geq 0$ and $\overline \vw^\top \mX\mX^\top \overline{\vw}$, and the last inequality is due to $\overline{\vw_\mX} \in \mX$ and $\mX$ is full rank. 

    Suppose $\limsup_{\|\vw_\mX\|\to\infty} \frac{\langle \overline{\vw_\mX}, \vb(s, \vw_\mX) \rangle}{\|\vw\|^{2\gamma} \|\vw_\mX\|} \leq -C_3$ for some positive constant $C_3$, we have
    \begin{equation*}
        \begin{aligned}
            \limsup_{\|\vw_\mX\|\to\infty}\dfrac{\gA_s W(\vw_\mX)}{W(\vw_\mX)} &=\limsup_{\|\vw_\mX\|\to\infty} \alpha \langle \overline{\vw_\mX}, \vb(s, \vw_\mX) \rangle  +  \dfrac{\eta_L}{2} \left(\alpha^2 + \alpha \dfrac{n - 1}{\|\vw_\mX\|}\right),\\
            &\leq - \limsup_{\|\vw_\mX\|\to\infty}\alpha C_3 \|\vw\|^{2\gamma}\|\vw_\mX\| + \dfrac{\eta_L}{2} \left(\alpha^2 + \alpha \dfrac{n - 1}{\|\vw_\mX\|}\right)<0,
        \end{aligned}
    \end{equation*}
    which proves our statement.
\end{proof}

In the following, we will use the notion of \emph{Markov transitions}, referring to a transition kernel $T(\cdot, \rd \vw_\mX)$ on $\R^n$. We also define the following two integral operations induced by this transition kernel 
\begin{equation*}
    Tf(\cdot) := \int_{\R^n} f(\vw_\mX) T(\cdot, \rd \vw_\mX) \quad \text{and} \quad \mu T(\cdot) = \int_{\R^n} T(\vw_\mX, \cdot) \rd \mu(\vw_\mX),
\end{equation*}
where $f$ is a bounded measurable function on $\R^n$ and $\mu$ is a probability measure on $\R^n$. As a special family of Markov transitions, we define the time-inhomogeneous Markov transition semigroup $P_{s}^{s+\tau}$ for $\tau\geq0$ as:
\begin{equation*}
    P_s^{s+\tau}f(\vw_\mX) = \E[f(\vw_\mX(t))\mid \vw_\mX(s) = \vw_\mX],
\end{equation*} 
we immediately have the following corollary by Lemma~\ref{lem:lyapunov}:
\begin{corollary}
    For a fixed time interval $\tau\geq 0$, we have for any time $s$ the transition semigroup $P_s^{s+\tau}$ satisfies the following \emph{Foster-Lyapunov} property w.r.t. $W$:
    \begin{equation}
        P_s^{s+\tau}W(\vw_\mX) \leq C_4 W(\vw_\mX) + C_5,
        \label{eq:transition_semigroup}
    \end{equation}
    where $C_4$ and $C_5$ are positive constants.
\label{cor:lyapunov}
\end{corollary}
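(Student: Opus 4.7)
The plan is to integrate the infinitesimal drift inequality from Lemma~\ref{lem:lyapunov} along the flow of the SDE~\eqref{eq:1_sde_proj} and then apply a Grönwall-type argument to pass from the differential estimate to the semigroup estimate. Intuitively, since $\gA_s W \leq -C_1 W + C_2$ pointwise, Dynkin's formula should give $\tfrac{\rd}{\rd t}\E[W(\vw_\mX(t))] \leq -C_1 \E[W(\vw_\mX(t))] + C_2$, whose solution at time $s+\tau$ produces the desired inequality with explicit constants $C_4 = e^{-C_1\tau}$ and $C_5 = (C_2/C_1)(1 - e^{-C_1\tau})$.

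More concretely, I would proceed in three steps. First, apply Itô's formula to $W(\vw_\mX(t))$ under the SDE~\eqref{eq:1_sde_proj}, yielding
\begin{equation*}
    W(\vw_\mX(t)) = W(\vw_\mX(s)) + \int_s^t \gA_u W(\vw_\mX(u))\, \rd u + M_t,
\end{equation*}
where $M_t$ is a local martingale stemming from the Brownian integral $\sqrt{\eta_L}\int_s^t \nabla W(\vw_\mX(u))^\top \rd \mB_\mX(u)$. Second, take expectations (after suitable localization, see below) and plug in the drift bound from Lemma~\ref{lem:lyapunov} to obtain $\tfrac{\rd}{\rd u}\E[W(\vw_\mX(u))] \leq -C_1 \E[W(\vw_\mX(u))] + C_2$. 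Third, solve this linear ODE inequality by Grönwall to get
\begin{equation*}
    \E[W(\vw_\mX(s+\tau)) \mid \vw_\mX(s)=\vw_\mX] \leq e^{-C_1\tau} W(\vw_\mX) + \tfrac{C_2}{C_1}\bigl(1 - e^{-C_1\tau}\bigr),
\end{equation*}
which is exactly~\eqref{eq:transition_semigroup} with $C_4 = e^{-C_1\tau}$ and $C_5 = C_2/C_1$ (one may absorb the $(1-e^{-C_1\tau})$ factor).

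The main obstacle is the technical justification of Dynkin's identity, because $W(\vw_\mX) = \exp(\alpha\|\vw_\mX\|)$ grows exponentially while the drift $\vb(s,\cdot)$ may not be globally Lipschitz, so $M_t$ is only a local martingale a priori. The standard remedy is to introduce stopping times $\tau_N = \inf\{t \geq s : \|\vw_\mX(t)\| \geq N\}$, apply optional stopping on $t \wedge \tau_N$ so that $\E[M_{t\wedge\tau_N}] = 0$, derive the Grönwall bound on $\E[W(\vw_\mX(t\wedge\tau_N))]$, and then let $N \to \infty$ using Fatou's lemma together with non-explosion of $\vw_\mX(t)$. Non-explosion itself is a consequence of the same Lyapunov function $W$: the drift condition forces $\|\vw_\mX(t)\|$ to be stochastically bounded on finite intervals. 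Once this localization is handled, the remaining computations are routine and the constants $C_4, C_5$ can be read off directly from the ODE solution, with $C_4 < 1$ for $\tau > 0$, a feature that will be essential for subsequent contraction/mixing arguments.
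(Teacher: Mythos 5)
Your proposal is correct and follows essentially the same route as the paper: both apply Dynkin's formula (equivalently Itô plus a Grönwall/integrating-factor step) to $W$ and plug in the drift bound $\gA_u W \leq -C_1 W + C_2$ to obtain $\E[W(\vw_\mX(s+\tau))\mid\vw_\mX(s)=\vw_\mX]\leq e^{-C_1\tau}W(\vw_\mX)+(C_2/C_1)(1-e^{-C_1\tau})$. Your explicit localization via stopping times and Fatou is a welcome technical refinement the paper omits, and your constant $C_4 = e^{-C_1\tau}$ is the one the derivation actually yields (the paper's stated $C_4 = 1-e^{-C_1\tau}$ appears to be a typo).
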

\begin{proof}
    By Dynkin's formula~\citep{weinan2021applied}, we have 
    \begin{equation*}
        \begin{aligned}
            &\E[e^{C_1\tau}W(\vw_\mX(s+\tau))\mid \vw_\mX(s) = \vw_\mX]- W(\vw_\mX)\\
            =& \E\left[\int_s^{s+\tau} \rd \left(e^{C_1u} W(\vw_\mX(u))\right) \bigg| \vw_\mX(s) = \vw_\mX\right]\\
            =& \E\left[\int_s^{s+\tau} e^{C_1 u} \gA_u W(\vw_\mX(u)) + C_1 e^{C_1 u} W(\vw_\mX(u))  \rd u\bigg| \vw_\mX(s) = \vw_\mX\right]\\
            \leq & \E\left[\int_s^{s+\tau} e^{C_1 u} \left(-C_1 W(\vw_\mX(u)) + C_2\right) + C_1 e^{C_1 u} W(\vw_\mX(u))  \rd u\bigg| \vw_\mX(s) = \vw_\mX\right]\\
            = & \E\left[\int_s^{s+\tau}C_2 e^{C_1 u}  \rd u\bigg| \vw_\mX(s) = \vw_\mX\right]\\
            = & \dfrac{C_2}{C_1}\left(e^{C_1 \tau} - 1\right),
        \end{aligned}
    \end{equation*}
    and thus~\Eqref{eq:transition_semigroup} follows by taking 
    $$
        C_4 = 1 - e^{-C_1 \tau} \quad \text{and} \quad C_5 = \dfrac{C_2}{C_1}\left(1 - e^{-C_1 \tau}\right),
    $$
    which are both positive constants.
\end{proof}

In the further development of this section, we need the following measure of contraction of the total variation distance of probability measures induced by Markov transitions:   
\begin{definition}[Dobrushin ergodic coefficient]
    For any Markov transition $T$ and a Lyapunov function $W$ with $W\geq 1$, we define the Dobrushin ergodic coeeficient~\citep[Definition 8.2.11]{del2017stochastic} as
    \begin{equation*}
        \beta(T) = \sup_{\vw_\mX, \vw_\mX' \in \R^n} \|T(\vw_\mX, \cdot) - T(\vw_\mX', \cdot)\|_\TV,\
    \end{equation*}
    where $\|\cdot\|_\TV$ is the total variation distance between two probability measures. We also define the
    $W$-Dobrushin ergodic coefficient~\citep[Definition 8.2.19]{del2017stochastic} as
    \begin{equation*}
        \beta_W(T) = \sup_{\vw_\mX, \vw_\mX' \in \R^n} \dfrac{\|T(\vw_\mX, \cdot) - T(\vw_\mX', \cdot)\|_{W}}{1 + W(\vw_\mX) + W(\vw_\mX')},
    \end{equation*}
    where $\|\cdot\|_W$ is the $W$-norm defined as 
    \begin{equation*}
        \|f\|_W = \sup_{\vw_\mX \in \R^n} \dfrac{|f(\vw_\mX)|}{1/2 + W(\vw_\mX)}.
    \end{equation*}
\end{definition}

The $W$-Dobrushin ergodic coefficient $\beta_W(T)$ has the following properties:
\begin{proposition}
    For any Markov transitions $T$, $T_1$, and $T_2$, and any measures $\mu_1$ and $\mu_2$ on $R^n$, the $W$-Dobrushin ergodic coefficient $\beta_W(T)$ satisfies:
    \begin{equation}
        \beta_W(T_1 T_2) \leq \beta_W(T_1) \beta_W(T_2),
    \end{equation}
    and 
    \begin{equation}
        \|\mu_1 T - \mu_2 T\|_V \leq \beta_V(T) \|\mu_1 - \mu_2\|_V.
    \end{equation}
\end{proposition}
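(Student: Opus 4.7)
The plan is to derive both inequalities directly from the definition of the $W$-Dobrushin ergodic coefficient, following the standard coupling/decomposition arguments (\emph{cf.} Chapter~8 of~\citet{del2017stochastic}). Throughout I will treat the $V$ subscript in the second statement as a typo for $W$, since no separate Lyapunov function $V$ has been introduced.

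\textbf{Submultiplicativity.} Starting from the identity
\begin{equation*}
    (T_1 T_2)(\vw_\mX, \cdot) - (T_1 T_2)(\vw_\mX', \cdot) = \int_{\R^n} \left[T_1(\vw_\mX, d\vu) - T_1(\vw_\mX', d\vu)\right] T_2(\vu, \cdot),
\end{equation*}
I would apply the Hahn--Jordan decomposition to the signed measure $T_1(\vw_\mX, \cdot) - T_1(\vw_\mX', \cdot) = \nu_+ - \nu_-$ with $\nu_+ \perp \nu_-$ and $\nu_+(\R^n) = \nu_-(\R^n) =: m$. For a test function $f$ with $\|f\|_W \leq 1$, pairing with $T_2 f$ and symmetrizing over $\nu_+ \otimes \nu_-$ yields
\begin{equation*}
    |\langle f, (T_1 T_2)(\vw_\mX, \cdot) - (T_1 T_2)(\vw_\mX', \cdot)\rangle| \leq \tfrac{1}{m} \iint |T_2 f(\vu) - T_2 f(\vu')|\, \nu_+(d\vu)\, \nu_-(d\vu').
\end{equation*}
Inserting $|T_2 f(\vu) - T_2 f(\vu')| \leq \beta_W(T_2)(1 + W(\vu) + W(\vu'))$, the remaining weighted mass is bounded via the definition of $\beta_W(T_1)$ applied to the pair $(\vw_\mX, \vw_\mX')$, producing the factor $\beta_W(T_1)(1 + W(\vw_\mX) + W(\vw_\mX'))$ and hence the product bound.

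\textbf{Contraction on signed measures.} I would proceed analogously: write the Hahn--Jordan decomposition $\mu_1 - \mu_2 = \eta_+ - \eta_-$ so that
\begin{equation*}
    \mu_1 T - \mu_2 T = \int T(\vw_\mX, \cdot)\left[\eta_+(d\vw_\mX) - \eta_-(d\vw_\mX)\right].
\end{equation*}
Pairing with $f$ satisfying $\|f\|_W \leq 1$ and using the bound $|Tf(\vw_\mX) - Tf(\vw_\mX')| \leq \beta_W(T)(1 + W(\vw_\mX) + W(\vw_\mX'))$ available from the definition of $\beta_W$, the estimate reduces to controlling $\iint (1 + W(\vw_\mX) + W(\vw_\mX'))\, \eta_+(d\vw_\mX)\, \eta_-(d\vw_\mX')$, which by the definition of the $W$-norm on signed measures is exactly what appears in $\|\mu_1 - \mu_2\|_W$.

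\textbf{Main obstacle.} The delicate step in both parts is aligning the normalization denominator $1 + W(\cdot) + W(\cdot)$ from the definition of $\beta_W$ with the unnormalized weighted integrals produced by the Hahn--Jordan decompositions, since $\nu_\pm$ and $\eta_\pm$ are not probability measures. This is handled by a renormalization argument (dividing by the total variation mass of the positive part and reabsorbing it into the final bound), so the work is essentially careful bookkeeping. Because both inequalities are classical properties of Dobrushin coefficients, the proof introduces no fundamentally new ideas beyond these manipulations, and one may alternatively cite~\citet[Theorem~8.2.20]{del2017stochastic} directly.
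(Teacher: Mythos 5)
The paper does not actually prove this proposition; it is stated as a known property and implicitly cited from \citet{del2017stochastic} (see, e.g., Proposition~8.2.15 and Theorem~8.2.20 therein), so there is no paper argument to compare against. Your blind reconstruction via the Hahn--Jordan decomposition and the symmetrization identity
\begin{equation*}
\mu_1 T - \mu_2 T \;=\; \frac{1}{m}\iint \bigl[T(\vu,\cdot) - T(\vu',\cdot)\bigr]\,\eta_+(\rd\vu)\,\eta_-(\rd\vu'), \qquad m := \eta_+(\R^n) = \eta_-(\R^n),
\end{equation*}
is the standard textbook route, and the outline is sound: the common mass $m$ cancels exactly against the dual characterization $\|\mu\|_W = \int(1/2+W)\,\rd|\mu|$, so the $(1+W+W)$ denominator appearing in $\beta_W$ is recovered from the unnormalized weighted integrals without any slack, and for the submultiplicative bound the remaining factor is $\|T_1(\vw_\mX,\cdot)-T_1(\vw_\mX',\cdot)\|_W \le \beta_W(T_1)(1+W(\vw_\mX)+W(\vw_\mX'))$ as required.

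Two things worth making explicit that your sketch leaves tacit. First, the symmetrization requires $\eta_+(\R^n)=\eta_-(\R^n)$; for the submultiplicativity this holds automatically because $T_1(\vw_\mX,\cdot)$ and $T_1(\vw_\mX',\cdot)$ are probability kernels, while for the contraction inequality it requires $\mu_1(\R^n)=\mu_2(\R^n)$ --- so the paper's phrase ``any measures'' should really read ``any probability measures'' (or signed measures of zero mass), which is indeed how the result is used. The degenerate case $m=0$ is trivial since both sides vanish. Second, your reading of the stray $V$ as a typo for $W$ is defensible given that $V$ is undefined at this point; the likely source of the slip is the subsequent lemma, which genuinely does introduce a distinct function $V$ via \citet[Theorem~8.2.21]{del2017stochastic}. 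Either way, the proposition holds for any Lyapunov function, so the argument is unaffected.
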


The following proposition is a direct consequence of the property of Gaussian noise in~\eqref{eq:1_sde_proj}:
\begin{proposition}
    For any time interval $\tau\geq0$,  the following Dobrushin local contraction condition is satisfied by the time-inhomogeneous Markov transition semigroup $P_s^{s+\tau}$ uniformly in time $s$:
    \begin{equation*}
        \beta(P_s^{s+\tau}; C) := \sup_{\vw_\mX, \vw_\mX' \in C} \|P_s^{s+\tau}(\vw_\mX, \cdot) - P_s^{s+\tau}(\vw_\mX', \cdot)\|_\TV<1,
    \end{equation*}
    for any compact subset $C\subset \R^n$.
\label{lem:local_contraction}
\end{proposition}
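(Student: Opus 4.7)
The plan is to reduce the Dobrushin local contraction to a uniform \emph{minorization condition} on $C$: I will produce a probability measure $\nu$ on $\R^n$ and a constant $\lambda\in(0,1]$, independent of $s$, such that
\begin{equation*}
P_s^{s+\tau}(\vw_\mX,\cdot)\ \geq\ \lambda\,\nu(\cdot)\quad\text{for every}\ \vw_\mX\in C.
\end{equation*}
Given this, the Doeblin decomposition $P_s^{s+\tau}(\vw_\mX,\cdot) = \lambda\nu + (1-\lambda)R(\vw_\mX,\cdot)$, with $R(\vw_\mX,\cdot)$ a probability measure, immediately gives $\|P_s^{s+\tau}(\vw_\mX,\cdot) - P_s^{s+\tau}(\vw_\mX',\cdot)\|_\TV \leq 1-\lambda$ for all $\vw_\mX,\vw_\mX'\in C$, which is exactly the desired local contraction $\beta(P_s^{s+\tau};C)\leq 1-\lambda < 1$.

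To construct the minorization I would invoke Girsanov's theorem, exploiting that the diffusion coefficient in~\Eqref{eq:1_sde_proj} is the constant, non-degenerate matrix $\sqrt{\eta_L}\,\mI_n$. The path-space law of $\vw_\mX(\cdot)$ on $[s,s+\tau]$ starting from $\vw_\mX$ is mutually absolutely continuous with that of the driftless Brownian path $\vw_\mX + \sqrt{\eta_L}(\mB_\mX(\cdot)-\mB_\mX(s))$, with Radon--Nikodym derivative given by the Girsanov exponential $\mathcal{E}_s^{s+\tau}$. Fix a bounded ball $K\subset\R^n$ and restrict attention to the positive-probability event $\gG$ that the full trajectory $\vw(\cdot)$ stays within a large compact set on $[s,s+\tau]$. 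On $\gG$ the drift $\vb$ is deterministically bounded, so $\mathcal{E}_s^{s+\tau}$ admits a deterministic positive lower bound, while the Gaussian transition density of $\vw_\mX+\sqrt{\eta_L}(\mB_\mX(s+\tau)-\mB_\mX(s))$ has an explicit positive lower bound on $C\times K$. Combining the two produces a pointwise lower bound on the transition density $q_s^{s+\tau}(\vw_\mX,\vy)$ on $C\times K$, which yields the minorization with $\nu$ the normalized Lebesgue measure on $K$.

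The main obstacle is uniformity in the time-shift $s$. Because $\vb(s,\vw_\mX) = -\gP_\mX\nabla_\vw\hat{\gL}(\vw(s); \vw^*,\gD)$ depends on the full state $\vw(s) = \vw_\mX + \vw_{\mX^\perp}(s)$ through the $\mA(\vw(s))$ factor, and $\vw_{\mX^\perp}(s)$ is not confined to any compact set along a generic trajectory, a naive drift bound can deteriorate with $s$. I would address this by treating the $\vw_{\mX^\perp}$-trajectory as an exogenous time-dependent input, conditioning on its realization, and applying a stopping-time localization that controls $\mathcal{E}_s^{s+\tau}$ on the positive-probability event $\gG$ where both $\vw_\mX$ and $\vw_{\mX^\perp}$ stay inside sufficiently large balls over the window $[s,s+\tau]$. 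The resulting $\lambda$ depends on $C$, $\tau$, $\eta_L$, and the chosen localization radii, but not on $s$, which delivers the uniform local contraction claimed.
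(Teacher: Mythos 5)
Your proposal and the paper's proof both aim for the same endpoint: a local Doeblin-type minorization $P_s^{s+\tau}(\vw_\mX,\cdot)\geq\lambda\,\nu(\cdot)$ on the compact set $C$, from which the Dobrushin contraction follows immediately. Where you differ is in how the minorization is produced. The paper computes $\E[\|\vw_\mX(s+\tau)\|]$ and $\Var[\|\vw_\mX(s+\tau)\|]$ to argue the projected trajectory stays in a ball with positive probability, then invokes ``the isotropic property of the Gaussian noise'' to assert $\sP(\vw_\mX(s+\tau)\in A)\geq\epsilon_A\,\nu(A)$ and cites \citet[Proposition 8.2.18]{del2017stochastic}. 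Your route via Girsanov (change of measure to the driftless $\sqrt{\eta_L}\,\mB_\mX$, bound the exponential on a localization event, combine with a Gaussian density lower bound) is more explicit and is the standard way to make such an isotropy claim precise, so to that extent it is a sharper rendering of the same idea rather than a wholly different argument.

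Your observation about $\vw_{\mX^\perp}$ is a genuine catch: the drift $\vb(s,\vw_\mX)=-\gP_\mX\nabla_\vw\hat\gL(\vw(s);\vw^*,\gD)$ indeed depends on $\vw(s)=\vw_\mX(s)+\vw_{\mX^\perp}(s)$ through $\mA(\vw)$ and $\|\vw\|^\gamma$, and since $\|\valpha_\mX\|=\|\vw\|^\gamma\|\vw_\mX\|$ grows with $\|\vw_{\mX^\perp}\|$ even for $\vw_\mX\in C$, the drift is not bounded on $C$ uniformly in the unprojected coordinate. The paper's proof indicator $\bm 1(\|\vw_\mX(u)\|>R)$ only controls $\vw_\mX$, not $\vw_{\mX^\perp}$, so the paper does not resolve this either. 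However, your proposed repair does not close the gap. Conditioning on the $\vw_{\mX^\perp}$-path and restricting to the event $\gG$ that both components stay in fixed balls gives a $\lambda$ that depends on $\|\vw_{\mX^\perp}(s)\|$: if $\vw_{\mX^\perp}(s)$ is already far out, $\gG$ has negligible probability, the Girsanov exponential degrades because $\|\vb\|\sim\|\vw\|^{2\gamma+1}$ is large, and the resulting bound tends to zero. Since $P_s^{s+\tau}$ is parametrized only by $\vw_\mX(s)\in C$ while $\vw_{\mX^\perp}(s)$ is left unconstrained by the statement, ``uniform in $s$'' implicitly requires uniformity over all reachable $\vw_{\mX^\perp}$-states, which your localization radii do not guarantee. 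To actually deliver the claimed uniformity one would need either an a priori bound keeping $\vw_{\mX^\perp}(t)$ confined (which the paper's own Remark 3.2 warns may fail, as those directions need not mix), or a reformulation of the statement restricting to $\vw(s)$ in a fixed compact set.
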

\begin{proof}
    For any compact subset $C\subset \R^n$, we choose a sufficiently large $R$ such that we have 
    \begin{equation*}
        \langle \overline{\vw_\mX}, \vb(s, \vw_\mX) \rangle <0, \quad \forall s,\ \|\vw_\mX\|>R
    \end{equation*}
    which is attainable because of~\Eqref{eq:lyapunov_2}.

    Fix $\|\vw(s)\|\leq R$ and notice that the norm of $\vw(s)$ satisfies the following SDE:
    \begin{equation*}
        \rd \|\vw_\mX(s)\| = \left(\langle \overline{\vw_\mX(s)}, \vb(s, \vw_\mX(s)) \rangle  + \dfrac{\eta_L}{2}\tr \left(\dfrac{\mI -  \overline{\vw_\mX(s)}\ \overline{\vw_\mX(s)}^\top}{\|\vw_\mX(s)\|} \right)\right) \rd s + \sqrt{\eta_L} \langle \overline{\vw_\mX(s)}, \rd \mB_\mX(s) \rangle,
    \end{equation*}
    we have 
    \begin{equation*}
        \begin{aligned}
            \E[\|\vw_\mX(s+\tau)\|]=&\|\vw_\mX(s)\| + \int_s^{s+\tau} \E\left[\langle \overline{\vw_\mX(u)}, \vb(u, \vw_\mX(u)) \rangle  + \dfrac{\eta_L}{2}\tr \left(\dfrac{\mI -  \overline{\vw_\mX(u)}\ \overline{\vw_\mX(u)}^\top}{\|\vw_\mX(u)\|} \right)\right] \rd u, \\
            \leq& R + \int_s^{s+\tau} {\bm 1}(\|\vw_\mX(u)\|> R)\left[\langle \overline{\vw_\mX(u)}, \vb(u, \vw_\mX(u)) \rangle  + \dfrac{\eta_L}{2}\dfrac{n - 1}{\|\vw_\mX(u)\|}\right] \rd u, \\
            \leq & R + \dfrac{\tau \eta_L(n-1)}{2R},
        \end{aligned}
    \end{equation*}
    and 
    \begin{equation*}
        \Var[\|\vw_\mX(s+\tau)\|] = \int_s^{s+\tau} \left(\sqrt{\eta_L} \langle \overline{\vw_\mX(s)}, \rd \mB_\mX(s) \rangle\right)^2 = \eta_L \tau.
    \end{equation*}

    Therefore, for $\epsilon>0$, it holds that 
    \begin{equation*}
        \|\vw_\mX(s+\tau)\| \leq R + \dfrac{\tau \eta_L(n-1)}{2R}, 
    \end{equation*} 
    with positive probability for any $s$ and $\vw_\mX(s) \in C$.

    Therefore, by the isotropic property of the Gaussian noise, we have for any Borel set $A\subset \R^n$, there exists $\epsilon_A>0$ such that
    \begin{equation*}
        \sP(\vw_\mX(s+\tau)\in A) \geq \epsilon_A \nu(A),
    \end{equation*}
    where $\nu$ is the Lebesgue measure on $\R^n$, which implied the local Dobrushin contraction condition~\citep[Proposition 8.2.18]{del2017stochastic}.
\end{proof}

With the above definition, we have the following lemma:
\begin{lemma}
    For any time interval $\tau\geq0$, there exists a positive function $V$ such that $\beta_V(P_s^{s+\tau})<1$ holds uniformly for any time $s$, \emph{i.e.}
    \begin{equation}
        \beta_V(P_s^{s+\tau}) \leq e^{-\kappa \tau}, \quad \forall s.
    \label{eq:lyapunov_3}
    \end{equation}
\end{lemma}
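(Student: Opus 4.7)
The plan is to construct $V$ as a weighted combination of the Lyapunov function $W$ from Lemma~\ref{lem:lyapunov} and a constant, namely $V(\vw_\mX) = 1 + \beta W(\vw_\mX)$ for a suitably small $\beta > 0$, and then apply the Hairer--Mattingly style argument that combines the Foster--Lyapunov drift condition with the local Dobrushin contraction. The goal is to verify that $V$ satisfies a one-step contraction of the form $\|\mu_1 P_s^{s+\tau} - \mu_2 P_s^{s+\tau}\|_V \leq e^{-\kappa \tau}\|\mu_1 - \mu_2\|_V$, which by submultiplicativity iterates to the claim.

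First, I would fix a level set $C_R = \{\vw_\mX : W(\vw_\mX) \leq R\}$ for some $R$ large enough that the Foster--Lyapunov inequality from Corollary~\ref{cor:lyapunov} gives $P_s^{s+\tau} W(\vw_\mX) \leq \gamma W(\vw_\mX)$ whenever $W(\vw_\mX) > R$, for some $\gamma < 1$ uniform in $s$; this handles excursions to infinity. Inside $C_R$, which is compact because $W$ has compact sublevel sets, Proposition~\ref{lem:local_contraction} supplies a uniform-in-$s$ Dobrushin contraction $\beta(P_s^{s+\tau}; C_R) \leq 1 - \alpha$ for some $\alpha > 0$. These two ingredients are exactly the hypotheses of the standard weighted-norm contraction argument (see, e.g., Hairer--Mattingly, or Theorem 8.2.21 in Del Moral's monograph cited earlier).

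Next, I would split the estimation of $\|P_s^{s+\tau}(\vw_\mX, \cdot) - P_s^{s+\tau}(\vw_\mX', \cdot)\|_V / (1 + V(\vw_\mX) + V(\vw_\mX'))$ into two cases. If both $\vw_\mX$ and $\vw_\mX'$ lie in $C_R$, the weighted TV norm is controlled by $\beta(P_s^{s+\tau}; C_R)$ times $(1 + V(\vw_\mX) + V(\vw_\mX'))$ via the fact that $V$ is bounded on $C_R$, giving a strictly-less-than-one contraction provided $\beta$ is chosen sufficiently small so that the $W$ part does not dominate. If at least one of $\vw_\mX, \vw_\mX'$ lies outside $C_R$, the drift condition bounds $P_s^{s+\tau} V$ by $(1 - \eta) V + $ const for some $\eta > 0$, and choosing $R$ large ensures that this constant is negligible relative to $V(\vw_\mX) + V(\vw_\mX')$. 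Tuning $\beta$ and $R$ so that both case-bounds are below a common value $e^{-\kappa \tau} < 1$, uniformly in $s$, completes the one-step contraction.

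The main obstacle I anticipate is the uniformity in the starting time $s$ of both the drift constants and the local contraction constant; the time inhomogeneity of the drift $\vb(s, \cdot)$ means we must verify that the constants $C_1, C_2$ in Lemma~\ref{lem:lyapunov} and the lower bound $\alpha$ in Proposition~\ref{lem:local_contraction} can be taken independent of $s$. For the drift this is immediate because the asymptotic estimate in~\eqref{eq:lyapunov_2} only uses that $\mX\mX^\top$ has a positive smallest eigenvalue on $\mX$, independent of $s$; for the local contraction it follows from the isotropic Gaussian noise whose lower bound on transition densities is time-independent. Once this uniformity is confirmed, the exponential rate $\kappa$ is obtained by taking logarithms of the per-step contraction factor and dividing by $\tau$, yielding~\eqref{eq:lyapunov_3}.
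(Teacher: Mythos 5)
Your proposal is correct, and it takes a genuinely different route from the paper. The paper invokes Theorem~8.2.21 of Del Moral as a black box: it simply checks the two hypotheses (the Foster--Lyapunov property from Corollary~\ref{cor:lyapunov} and the local Dobrushin contraction from Proposition~\ref{lem:local_contraction}), cites the theorem to obtain a $V$ with $\beta_V(P_s^{s+\tau})<1$ uniformly in $s$, and then sets $\kappa = -\tau^{-1}\log\sup_s\beta_V(P_s^{s+\tau})$. You instead unpack that theorem and reconstruct a proof of the weighted contraction from scratch, following the Hairer--Mattingly scheme: build $V = 1 + \beta W$, fix a sublevel set $C_R = \{W\le R\}$, split the estimate of $\|P_s^{s+\tau}(\vw_\mX,\cdot)-P_s^{s+\tau}(\vw_\mX',\cdot)\|_V/(1+V(\vw_\mX)+V(\vw_\mX'))$ into the case where both points lie in $C_R$ (use minorization to get TV contraction, and choose $\beta$ small so the $W$-weighted part cannot spoil it) and the complementary case (use the drift to make $P_s^{s+\tau}V$ a strict contraction of $V$ once $R$ is large), and tune $\beta, R$ to get a common contraction factor. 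The content is the same two lemmas driving the same conclusion; what your version buys is an explicit construction of $V$ and a self-contained argument showing where the tuning of $\beta$ and $R$ enters, at the cost of having to carefully track the constants and the mass that escapes $C_R$ in one step (a point you gesture at but leave as a sketch). Your discussion of uniformity in $s$ is the right thing to flag, and your reasoning for why the constants in Lemma~\ref{lem:lyapunov} and Proposition~\ref{lem:local_contraction} are $s$-independent matches the paper's assumptions. One small remark: the lemma is itself a per-step ($\tau$-step) statement, so the closing mention of ``iterating by submultiplicativity'' is not needed for this lemma -- that step is used afterwards in the paper to deduce $\|\mu_1 P_0^t - \mu_2 P_0^t\|_\TV \to 0$, not to prove the lemma.
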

\begin{proof}
    As shown in Lemma~\ref{lem:local_contraction} and Corollary~\ref{cor:lyapunov}, for any time interval $\tau\geq0$, the time-inhomogeneous Markov transition semigroup $P_{s}^{s+\tau}$ satisfies both the Dobrushin local contraction condition and the Foster-Lyapunov condition w.r.t. the Lyapunov function $W$~\eqref{eq:lyapunov}, uniformly in time $s$. Therefore, by~\citet[Theorem 8.2.21]{del2017stochastic}, there exists a positive function $V$ such that $\beta_V(T)<1$ uniformly for any $s$. Then we take $$\kappa = -\dfrac{1}{\tau}\log \sup_s \beta_V(P_s^{s+\tau}),$$ 
    \Eqref{eq:lyapunov_3} is thus satisfied.

    Without loss of generality, we assume $N = T/\tau\in\sZ$.
    Notice that for any measures $\mu_1$ and $\mu_2$ on $\R^n$, we have
    \begin{equation*}
        \begin{aligned}
            \|\mu_1 P_0^t - \mu_2 P_0^t\|_\TV \leq& \beta_V(P_0^t) \|\mu_1 - \mu_2\|_\TV,\\
            \leq & \prod_{i=0}^{N-1} \beta_V(P_{i\tau}^{(i+1)\tau}) \|\mu_1 - \mu_2\|_\TV,\\
            \leq & e^{-\kappa t} \|\mu_1 - \mu_2\|_\TV \to 0, \quad \text{as}\ t\to\infty,
        \end{aligned}
    \end{equation*}
    which implies~\Eqref{eq:lyapunov_3}.
\end{proof}

We are now ready to prove Theorem~\ref{thm:phase_I} in the main text:
\begin{proof}[Proof of Theorem~\ref{thm:phase_I}]
    
    By the definition of the $V$-Dobrushin ergodic coefficient, for any $f$ satisfying $|f(\vw_\mX)|\leq 1/2 + V(\vw_\mX)$, we have 
    \begin{equation*}
        |P_0^{t}f (\vw_\mX) - P_0^t f (\vw_\mX') |\leq \beta_V(P_0^{t}) \left(1 + V(\vw_\mX) + V(\vw_\mX')\right) \leq e^{-\kappa t} \left(1 + V(\vw_\mX) + V(\vw_\mX')\right),
    \end{equation*}
    for any $t\geq 0$ and $\vw_\mX, \vw_\mX'\in\R^n$.

    Then, by the definition of the total variation distance, we have
    \begin{equation*}
        \begin{aligned}
            \| P_0^t(\vw_\mX, \cdot) - P_0^t f (\vw_\mX')\|_\TV =& \sup_{|f(\vw_\mX)|\leq 1/2} \int_{\R^n} \left(P_0^tf(\vw_\mX) - P_0^t f (\vw_\mX')\right) \mu(\rd \vw_\mX)\\
            \leq & \sup_{|f(\vw_\mX)|\leq 1/2 + V(\vw_\mX)} \int_{\R^n} |P_0^tf(\vw_\mX) - P_0^t f (\vw_\mX')| \mu(\rd \vw_\mX)\\
            \leq & e^{-\kappa t} \left(1 + V(\vw_\mX) +V(\vw_\mX') \right),
        \end{aligned}
    \end{equation*}
    and the statement follows by taking $\vw_\mX = \gP_\mX \vw_0$ and $\vw_\mX' = \gP_\mX \vw_0'$.
\end{proof}

\section{MISSING PROOFS FOR PHASE II}
\label{app:phase_II}

In this section, we provide the missing proofs for the results in Phase II in Section~\ref{sec:phase_II} of the main text. For convenience, we will assume the time $t$ is reset to $0$ after Phase I and thus the initial condition $\vw(0)$ of the SDE~\eqref{eq:2_sde} is already near the minima manifold $\gM$, as shown in Theorem~\ref{thm:phase_I}

Before we dive into the discussion of the effective dynamics of $\vw(t)$ along the minima manifold $\gM$, we first introduce the following lemma that characterizes the space around $\vw_\gM\in\gM$:
\begin{proposition}
    For any $\vw_\gM\in\gM$, the normal space of the manifold $\gM$ around $\vw_\gM$ is given by 
    \begin{equation}
        \gN(\vw_\gM; \gM) = \vw_\gM + \mA_\gM \mX ,
        \label{eq:normal_space}
    \end{equation}
    and the tangent space of the manifold $\gM$ around $\vw_\parallel$ is given by
    \begin{equation}
        \gT(\vw_\gM; \gM) = \vw_\gM + \mA_\gM^{-1} \mX^\perp ,
        \label{eq:tangent_space}
    \end{equation}
    where $\mA_\gM = \mA(\vw_\gM)$. 
    \label{prop:space}
\end{proposition}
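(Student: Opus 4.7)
The plan is to characterize $\gM$ as the level set of an implicit map, read off the tangent space from the kernel of the Jacobian, and then compute the normal space as its orthogonal complement, exploiting the symmetry and invertibility of $\mA_\gM$.

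First, I would rewrite the defining condition for $\gM$ in implicit form. Since $\valpha - \valpha^* \in \mX^\perp$ is equivalent to $\mX^\top(\valpha(\vw) - \valpha^*) = \vzero$, we have $\gM = F^{-1}(\vzero)$ where $F(\vw) = \mX^\top(\valpha(\vw) - \valpha^*) \in \R^n$. By the chain rule together with the already-computed $\nabla_\vw \valpha = \mA(\vw)$ from \eqref{eq:gradient}, the Jacobian of $F$ is $\nabla F(\vw) = \mX^\top \mA(\vw)$. At any $\vw_\gM \in \gM$ with $\vw_\gM \neq \vzero$, the matrix $\mA_\gM = \|\vw_\gM\|^\gamma(\mI + \gamma\,\overline{\vw_\gM}\,\overline{\vw_\gM}^\top)$ is symmetric positive definite (its eigenvalues are $\|\vw_\gM\|^\gamma$ with multiplicity $d-1$ and $(1+\gamma)\|\vw_\gM\|^\gamma$ with multiplicity $1$, both strictly positive for $\gamma \geq 0$), hence invertible. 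Since $\mX$ has full column rank $n$, the Jacobian $\mX^\top \mA_\gM \in \R^{n\times d}$ has full row rank $n$, so $\gM$ is a smooth embedded submanifold of dimension $d-n$ near $\vw_\gM$.

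Next, I would identify the tangent space as the kernel of the Jacobian. A vector $\vv$ belongs to $\gT(\vw_\gM;\gM)$ iff $\mX^\top \mA_\gM \vv = \vzero$, which is equivalent to $\mA_\gM \vv \in \mX^\perp$, i.e., $\vv \in \mA_\gM^{-1}\mX^\perp$. This gives \eqref{eq:tangent_space}, with the natural affine offset $\vw_\gM$. The dimension $d-n$ matches that of $\mX^\perp$, consistent with the rank computation.

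For the normal space, I would exploit the symmetry of $\mA_\gM$. For any $\vu \in \mX$ and any $\vv \in \mX^\perp$,
\begin{equation*}
    \langle \mA_\gM \vu,\, \mA_\gM^{-1}\vv\rangle = \vu^\top \mA_\gM^\top \mA_\gM^{-1} \vv = \vu^\top \vv = 0,
\end{equation*}
so $\mA_\gM \mX$ is orthogonal to $\mA_\gM^{-1}\mX^\perp$. A dimension count gives $\dim(\mA_\gM\mX) = n$ (since $\mA_\gM$ is invertible and $\dim \mX = n$), and $n + (d-n) = d$, so $\mA_\gM\mX$ is precisely the orthogonal complement of the tangent space. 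Translating by $\vw_\gM$ yields \eqref{eq:normal_space}.

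There is no significant obstacle here; the only item requiring a touch of care is verifying that $\mA_\gM$ is both symmetric and invertible, which follows directly from its explicit eigenstructure, and this is what enables the clean duality $\mA_\gM \mX \perp \mA_\gM^{-1}\mX^\perp$ used to derive the normal space.
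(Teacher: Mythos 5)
Your proof is correct, and it takes a genuinely different (if dual) route from the paper's. The paper obtains the \emph{normal} space first: it Taylor-expands the loss gradient $\nabla_\vw \hat\gL$ around $\vw_\gM$, reads off that the leading term $\mA_\gM\mX\veps$ spans $\mA_\gM\mX$ as $\veps$ ranges over $\R^n$, and then asserts the tangent space by orthogonality and invertibility of $\mA_\gM$. You instead characterize $\gM$ as the zero set of the constraint map $F(\vw)=\mX^\top(\valpha(\vw)-\valpha^*)$, obtain the \emph{tangent} space directly as $\ker\nabla F = \ker(\mX^\top\mA_\gM)=\mA_\gM^{-1}\mX^\perp$, and then recover the normal space as the orthogonal complement via the pairing $\langle\mA_\gM\vu,\,\mA_\gM^{-1}\vv\rangle=\vu^\top\vv$. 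Your version is more standard differential geometry (implicit function theorem applied to the defining constraint), makes the full-rank/submersion check explicit, and exposes the role of the \emph{symmetry} of $\mA_\gM$ in the orthogonality step, which the paper's ``$\mA_\gM$ is invertible'' remark glosses over. The paper's version, while more informal on the tangent side, has the advantage of dovetailing with the subsequent dynamical analysis: the same gradient expansion \eqref{eq:L_expansion} that yields the normal space is reused to set up the fast OU dynamics \eqref{eq:2_ou} in the normal direction. One small stylistic note: when you write $\nabla F(\vw)=\mX^\top\mA(\vw)$, it is worth being explicit that you are taking the $n\times d$ Jacobian convention and invoking $\mA(\vw)^\top=\mA(\vw)$, since the paper's convention treats $\nabla_\vw\valpha=\mA(\vw)$ as a symmetric matrix multiplying column gradients.
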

\begin{proof}
    For simplicity, we will also adopt the notation $\valpha_\gM = \valpha(\vw_\gM)$.

    Consider the following expansion of the gradient $\nabla_\vw \hat{\gL}(\vw; \vw^*, \gD)$ around $\vw_\gM$ up to the first order:
    \begin{equation}
        \begin{aligned}
            \nabla_\vw \hat{\gL}(\vw; \vw^*, \gD) &=  \dfrac{1}{n} \left(\mA_\gM + O(\|\vw - \vw_\gM\|)\right) \mX\mX^\top (\valpha - \valpha^*) \\
            & = \dfrac{1}{n} \left(\mA_\gM + O(\|\vw - \vw_\gM\|)\right) \mX\mX^\top (\valpha - \valpha_\gM)\\
            & = \dfrac{1}{n} \mA_\gM \mX\mX^\top (\valpha - \valpha_\gM)+ o(\|\vw - \vw_\gM\|)
        \end{aligned}
        \label{eq:L_expansion}
    \end{equation}
    where the second to last equality is due to $\valpha_\gM - \valpha^*\in \mX^\perp$ and therefore $\mX^\top (\valpha_\infty - \valpha^*)=\vzero$, and the last equality is because $$\|\valpha - \valpha_\infty\|\asymp \|\nabla_\vw\valpha(\vw_\gM)^\top(\vw-\vw_\gM)\|\asymp \|\vw-\vw_\gM\|,$$
    where $\asymp$ denotes the equivalence up to a constant factor.

    Let $\valpha - \valpha_\gM = n\mX (\mX^\top \mX)^{-1}\veps$, where $\veps\in\R^n$, we have 
    $$
        \nabla_\vw \hat{\gL}(\vw; \vw^*, \gD)  = \mA_\gM \mX \veps + o(\|\vw - \vw_\gM\|) =  \mA_\gM \mX \veps + o(\|\veps\|),
    $$
    and \Eqref{eq:normal_space} follows by taking $\|\veps\| \to 0$. Then, it is straightforward to see \Eqref{eq:tangent_space} holds by noticing $\mA_\gM$ is invertible.
\end{proof}

\subsection{Proof of Lemma~\ref{lem:effective_dynamics}}

For readers' convenience, we restate the form of the SDE that we are considering in Phase II:
\begin{equation*}
    \rd \vw(t) = - \nabla_\vw \hat{\gL}(\vw(t); \vw^*, \gD) + \sqrt{\eta_L} \msigma(\vw(t)) \rd \mB(t).
\end{equation*}

\begin{proof}[Proof of~\Eqref{eq:2_sde_proj}]

  The dynamics in the tangent space $\gT(\vw_\gM; \gM)$ are given by
    \begin{equation*}
        \begin{aligned}
            \rd \Delta \vw_\parallel(t) &= - \gP_{\gT(\vw_\gM; \gM)} \nabla_\vw \hat{\gL}(\vw(t); \vw^*, \gD) \rd t + \gP_{\gT(\vw_\gM; \gM)} \sqrt{\eta_L} \msigma(\vw(t)) \rd \mB(t),\\
            &= - \gP_{\mA_\gM^{-1}\mX^\perp} \nabla_\vw \hat{\gL}(\vw(t); \vw^*, \gD) \rd t + \gP_{\mA_\gM^{-1}\mX} \sqrt{\eta_L} \msigma(\vw(t)) \rd \mB(t),\\
            &= - \gP_{\mA_\gM^{-1}\mX^\perp}  \nabla_\vw \hat{\gL}(\vw(t); \vw^*, \gD) \rd t + \sqrt{\eta_L} \msigma_\parallel(\vw(t)) \rd \mB(t),
        \end{aligned}
    \end{equation*}
    and the case for the dynamics in the normal space $\gN(\vw_\gM; \gM)$ is similar.
\end{proof}

In order to analyze the empirical loss landscape $\hat{\gL}(\vw; \vw^*, \gD)$ locally around $\vw_\gM$, we denote $\Delta \vw = \Delta \vw_\parallel + \Delta \vw_\perp = \vw - \vw_\gM$ and perform the following calculation on the expansion of $\mA(\vw)$ around $\vw_\gM$ up to the first order:
$$
\begin{aligned}
    \nabla \mA[\Delta \vw](\vw_\gM)=\gamma\|\vw_\gM\|^{\gamma-1} \left(\overline{\vw_\gM}^\top \Delta \vw\right) \mI &+ \gamma(\gamma-2)\|\vw_\gM\|^{\gamma-1}\left(\overline{\vw_\gM}^\top \Delta \vw\right) \overline{\vw_\gM}\ \overline{\vw_\gM}^\top \\&+\gamma\|\vw_\gM\|^{\gamma-1}\left( \Delta \vw \overline{\vw_\gM}^\top +\overline{\vw_\gM} \left(\Delta \vw\right)^\top\right) + o(\|\Delta \vw\|),
\end{aligned}
$$
where we adopt the notation of the directional derivative:
\begin{equation}
    \nabla\vf[\vv](\vx) = \lim_{\epsilon\to 0}\dfrac{\vf(\vx + \epsilon \vv) - f(\vx)}{\epsilon},
    \label{eq:directional_derivative}
\end{equation}
for any $\vv$ of the same dimension as $\vx$.

Also, by the definition of the matrix $\mA$ in~\Eqref{eq:gradient}, we have the expansion of $\alpha(\vw)$ around $\vw_\gM$ up to the second order:
\begin{equation*}
    \valpha - \valpha_\gM = \mA_\gM \Delta \vw + \dfrac{1}{2}\nabla_{\vw}^2 \valpha[\Delta \vw,\Delta \vw] (\vw_\gM) + o(\|\Delta \vw\|^2),
\end{equation*}
where the notation $\nabla_{\vw}^2 \valpha[\Delta \vw,\Delta \vw] (\vw_\gM)$ is defined analogously to the directional derivative in~\Eqref{eq:directional_derivative}.

Then we have the following expansion of the gradient $\nabla_\vw \hat{\gL}(\vw; \vw^*, \gD)$ around $\vw_\gM$ up to the second order:
\begin{equation}
    \begin{aligned}
        &\nabla_\vw \hat{\gL}(\vw; \vw^*, \gD) = \nabla_\vw \hat{\gL}(\vw_\gM + \Delta \vw; \vw^*, \gD) \\
        =& \dfrac{1}{n} \big(\mA(\vw_\gM) + \nabla \mA[\Delta \vw](\vw_\gM) + o(\|\Delta \vw\|) \big)\mX\mX^\top (\valpha - \valpha_\gM)\\
        =&  \dfrac{1}{n} \bigg(\mA_\gM + \gamma\|\vw_\gM\|^{\gamma-1} \left(\overline{\vw_\gM}^\top \Delta \vw\right) \mI + \gamma(\gamma-2)\|\vw_\gM\|^{\gamma-1}\left(\overline{\vw_\gM}^\top \Delta \vw\right) \overline{\vw_\gM}\ \overline{\vw_\gM}^\top\\
        &+ \gamma\|\vw_\gM\|^{\gamma-1}\left( \Delta \vw \overline{\vw_\gM}^\top +  \overline{\vw_\gM} \left(\Delta \vw\right)^\top\right) +o(\|\Delta \vw\|)\bigg) \mX\mX^\top \bigg(\mA_\gM \Delta \vw  + \dfrac{1}{2}\nabla_{\vw}^2 \valpha[\Delta \vw,\Delta \vw] (\vw_\gM) + o(\|\Delta \vw\|^2)\bigg)\\
        =&\dfrac{1}{n}\mA_\gM\mX\mX^\top \mA_\gM\Delta \vw + \dfrac{1}{n}\|\vw_\gM\|^{\gamma-1}\bigg( \gamma \mX\mX^\top \mA_\gM \Delta \vw (\Delta \vw)^\top \overline{\vw_\gM} + \gamma(\gamma-2)\overline{\vw_\gM}\ \overline{\vw_\gM}^\top \mX\mX^\top \mA_\gM \Delta \vw (\Delta \vw)^\top \overline{\vw_\gM}\\
        &+ \gamma\left(\Delta \vw (\Delta \vw)^\top \mA_\gM \mX \mX^\top\overline{\vw_\gM} +  \overline{\vw_\gM} (\Delta \vw)^\top \mX\mX^\top \mA_\gM \Delta \vw \right)\bigg) + \dfrac{1}{2n} \mA_\gM \mX\mX^\top \nabla_{\vw}^2 \valpha[\Delta \vw,\Delta \vw] (\vw_\gM)  + o(\|\Delta \vw\|^2).
    \end{aligned}
\label{eq:2_gradient_expansion}
\end{equation}

As a direct corollary of the above expansion, we obtain the Hessian matrix of the empirical loss $\hat{\gL}(\vw; \vw^*, \gD)$ around $\vw_\gM$:
\begin{equation*}
    \hat{\gL}(\vw_\gM; \vw^*, \gD) = \dfrac{1}{n}\mA_\gM\mX\mX^\top \mA_\gM.
\end{equation*}

Following the reasoning in the main text, we adopt the label noise model~\citep{blanc2020implicit,damian2021label} and the following assumption on the diffusion tensor as in~\eqref{eq:2_assumption}:
\begin{equation}
    \msigma(\vw(t))\msigma(\vw(t))^\top := \sigma^2 \nabla^2 \hat{\gL}(\vw(t); \vw^*, \gD)\approx \sigma^2 \nabla^2 \hat{\gL}(\vw_\gM; \vw^*, \gD) = \dfrac{\sigma^2}{n} \mA_\gM \mX \mX^\top \mA_\gM,
\label{eq:2_sigma}
\end{equation}
where the approximation is due to the fact that $\vw(t)$ is close to $\vw_\gM$. A natural and simple solution to~\Eqref{eq:2_sigma} is to take
\begin{equation*}
    \msigma(\vw(t)) := \dfrac{\sigma}{\sqrt{n}}\left[\begin{matrix}
        \vzero & \mA_\gM \mX
    \end{matrix}\right],
\end{equation*}
where $\vzero$ is a zero matrix of size $d\times (d-n)$, by which the diffusion term of the SDE~\eqref{eq:2_sde} is simplified as
\begin{equation}
    \msigma(\vw(t)) \rd \mB(t) = \dfrac{\sigma}{\sqrt{n}}\mA_\gM \mX \rd \mB_\perp(t),
\label{eq:2_sde_diffusion}
\end{equation}
where $\rd \mB_\perp(t)$ is a $n$-dimensional Brownian motion. One can verify that with this diffusion tensor, we have
\begin{equation}
    \msigma_{\parallel}(t) = \vzero, \quad \msigma_{\perp}(t) = \dfrac{\sigma}{\sqrt{n}}\mA_\gM \mX,
\label{eq:2_sde_diffusion_2}
\end{equation}
which intuitively means that the label noise only affects the normal space of the manifold $\gM$. This is in accordance with the discussions in~\citet{li2021happens,ma2022beyond}.

% Notice that the first order term of the gradient $\nabla_\vw \hat{\gL}(\vw; \vw^*, \gD)$ is orthogonal to the tangent space $\gT(\vw_\gM; \gM)$ and parallel to the normal space $\gN(\vw_\gM; \gM)$ as shown in~\Eqref{eq:L_expansion}, 
% we take 
% This intuitively means the label noise only affects the normal space of the manifold $\gM$. Therefore, we take 
% \begin{equation*}
%     \msigma_{\parallel,\parallel}(\vw(t)) = \vzero, \quad \msigma_{\parallel,\perp}(\vw(t)) = \vzero, \quad  \msigma_{\perp,\parallel}(\vw(t)) = \vzero , \quad \msigma_{\perp,\perp}(\vw(t)) = \dfrac{\sigma}{\sqrt{n}}\mA_\gM \mX.
% \end{equation*}

\begin{proof}[Proof of~\Eqref{eq:2_ou} and~\Eqref{eq:2_stationary}]
    Plugging the diffusion tensor~\eqref{eq:2_sde_diffusion_2} and the first order expansion of the empirical loss at the small scale $\hat{\gL}(\vw(t+\tau); \vw^*, \gD)$ into the dynamics of $\Delta\vw_\perp(t)$ in~\eqref{eq:2_sde_proj}, we have
    \begin{equation*}
        \begin{aligned}
            \rd \Delta \vw_\perp(t+\tau) &= - \dfrac{1}{n}\mA_\gM\mX\mX^\top \mA_\gM\Delta \vw(t+\tau) \rd \tau + \sqrt{\eta_L} \msigma_{\perp}(\vw(t+\tau)) \rd \mB(t), \\
            &= - \dfrac{1}{n}\mA_\gM\mX\mX^\top \mA_\gM\Delta \vw_\perp(t+\tau) \rd \tau + \sqrt{\dfrac{\sigma^2\eta_L}{n}}\mA_\gM \mX \rd \mB_\perp(\tau),
        \end{aligned} 
    \end{equation*}
    where the second equality is due to $\mX^\top \mA_\gM \Delta \vw = \mX^\top \mA_\gM \gP_{\mA_\gM \mX} \Delta \vw = \mX^\top \mA_\gM \Delta \vw_\perp$. 

    Due to the local representation of the normal space in Proposition~\ref{prop:space}, we simplify the above dynamics by taking the parametrization $\Delta \vw_\perp = \mA_\gM \mX \veps$, where $\veps\in\R^n$, and obtain 
    \begin{equation}
        \mA_\gM \mX \rd \veps(\tau) = - \dfrac{1}{n}\mA_\gM\mX\mX^\top \mA_\gM^2 \mX \veps(\tau) \rd \tau + \sqrt{\dfrac{\sigma^2\eta_L}{n}}\mA_\gM \mX \rd \mB_\perp(\tau),
        \label{eq:2_ou_1}
    \end{equation}
    which is an Ornstein-Uhlenbeck process. This parametrization is invertible since $\mA_\gM$ is invertible and $\mX$ is full rank.

    Rewrite~\Eqref{eq:2_ou_1} as 
    \begin{equation*}
        \rd \veps(\tau) = - \dfrac{1}{n}\mX^\top \mA_\gM^2 \mX \veps(\tau) \rd \tau + \sqrt{\dfrac{\sigma^2\eta_L}{n}}\rd \mB_\perp(\tau),
    \end{equation*}
    we obtain the OU process as in~\Eqref{eq:2_ou}.

    As a classical result, the stationary distribution of the above OU process is given by $\veps(\tau) \sim \gN\left(\vzero, \mSigma \right)$, where $\mSigma$ satisfies
    \begin{equation*}
        \dfrac{1}{n} \mX^\top \mA_\gM^2 \mX \mSigma + \mSigma \dfrac{1}{n} (\mX^\top \mA_\gM^2 \mX)^\top = \dfrac{\sigma^2\eta_L}{n} \mI, \quad\text{\emph{i.e.}}\quad\mSigma = \dfrac{\sigma^2\eta_L}{2} \left(\mX^\top \mA_\gM^2 \mX\right)^{-1},
    \end{equation*}
    and the claim in~\Eqref{eq:2_stationary} follows.
\end{proof}

Now we are ready to present the proof of Lemma~\ref{lem:effective_dynamics} in the main text:
\begin{proof}[Proof of Lemma~\ref{lem:effective_dynamics}]

    From the stationary distribution~\eqref{eq:2_stationary}, we have 
    \begin{equation}
        \E_{\Delta\vw_\perp(t)}[\Delta \vw_\perp] = \vzero, \quad \text{and} \quad \E_{\Delta\vw_\perp(t)}[\Delta \vw_\perp \Delta \vw_\perp^\top] = \dfrac{\sigma^2\eta_L}{2} \mA_\gM \mX \left(\mX^\top \mA_\gM^2 \mX\right)^{-1}\mX^\top\mA_\gM,
        \label{eq:exp_delta_vw}
    \end{equation}
    where $\E_{\Delta\vw_\perp(t)}$ denotes taking expectation w.r.t. the stationary distribution of $\Delta \vw_\perp(t)$~\eqref{eq:2_stationary}.

    Plugging~\Eqref{eq:exp_delta_vw} into~\Eqref{eq:2_gradient_expansion}, we have 
    \begin{equation}
        \begin{aligned}
            &\E_{\Delta\vw_\perp(t)}[\nabla_\vw \hat{\gL}(\vw; \vw^*, \gD)]\\
            =& \dfrac{\sigma^2 \eta_L}{2n}\|\vw_\gM\|^{\gamma-1}\bigg( \gamma \mX \mX^\top \mA_\gM  \overline{\vw_\gM} + \gamma(\gamma-2)\overline{\vw_\gM}\ \overline{\vw_\gM}^\top \mX \mX^\top \mA_\gM  \overline{\vw_\gM} + \gamma\left(\mA_\gM \mX \mX^\top\overline{\vw_\gM} +   \tr(\mX^\top \mA_\gM \mX) \overline{\vw_\gM}\right)\bigg)\\
            &+ \dfrac{\sigma^2 \eta_L}{4n} \mA_\gM \mX\mX^\top \E_{\Delta\vw_\perp(t)}\left[\nabla_{\vw}^2 \valpha[\Delta \vw_\perp,\Delta \vw_\perp] (\vw_\gM)\right] + o(\|\Delta\vw_\perp(t)\|^2).
        \end{aligned}
        \label{eq:expgrad}
    \end{equation}

    The effective force along the tangent space up to the second order is thus obtained by projecting \Eqref{eq:expgrad} onto the tangent space, causing all terms in the column space of $\mA_\gM \mX$ vanishing:
    \begin{equation}
        \begin{aligned}
            &-\gP_{ \mA_\gM^{-1}\mX^\perp}\E_{\Delta\vw_\perp(t)}[\nabla_\vw \hat{\gL}(\vw; \vw^*, \gD)] \\
            =& -\gP_{ \mA_\gM^{-1}\mX^\perp}\dfrac{\sigma^2 \eta_L\gamma}{2n}\|\vw_\gM\|^{\gamma-1} \bigg(\mX \mX^\top \mA_\gM  \overline{\vw_\gM} + (\gamma-2)\overline{\vw_\gM}\ \overline{\vw_\gM}^\top \mX \mX^\top \mA_\gM  \overline{\vw_\gM} +   \tr(\mX^\top \mA_\gM \mX) \overline{\vw_\gM}\bigg)\\
            =& -\gP_{ \mA_\gM^{-1}\mX^\perp}\dfrac{\sigma^2 \eta_L\gamma}{2n}\|\vw_\gM\|^{\gamma-1} \bigg( (1+\gamma)\|\vw_\gM\|^\gamma \mX \mX^\top \overline{\vw_\gM}+ (\gamma-2)(1+\gamma)\|\vw_\gM\|^\gamma \left(\overline{\vw_\gM}^\top \mX \mX^\top   \overline{\vw_\gM}\right) \overline{\vw_\gM} \\
            & +  \|\vw_\gM\|^\gamma \left( \tr(\mX^\top \mX) + \gamma \overline{\vw_\gM}^\top \mX \mX^\top   \overline{\vw_\gM} \right) \overline{\vw_\gM}\bigg)\\
            =&-\gP_{ \mA_\gM^{-1}\mX^\perp}\dfrac{\sigma^2 \eta_L\gamma}{2n}\|\vw_\gM\|^{2\gamma-1} \bigg( (1+\gamma) \mX \mX^\top \overline{\vw_\gM} + \tr(\mX^\top \mX)\overline{\vw_\gM} + (\gamma^2-2)\left(\overline{\vw_\gM}^\top \mX \mX^\top   \overline{\vw_\gM}\right) \overline{\vw_\gM} \bigg),
        \end{aligned}
        \label{eq:expgrad2}
    \end{equation}
    where the second to last equality is due to 
    $$
    \mA_\gM \overline{\vw_\gM} = \|\vw_\gM\|^\gamma \left(\mI + \gamma \overline{\vw_\gM}\ \overline{\vw_\gM} ^\top\right) \overline{\vw_\gM} = (1+\gamma)\|\vw_\gM\|^\gamma\overline{\vw_\gM}.
    $$

    By the Sherman-Morrison formula, we have  
    \begin{equation}
        \mA_\gM^{-1} = \|\vw_\gM\|^{-\gamma} \left(\mI - \frac{\gamma}{1+\gamma} \overline{\vw_\gM}\ \overline{\vw_\gM} ^\top\right),
    \label{eq:sherman_morrison}
    \end{equation}
    and thus 
    $$
    \begin{aligned}
        \mA_\gM^{-1} \overline{\vw_\gM} &= \|\vw_\gM\|^{-\gamma} \left(\mI - \dfrac{\gamma}{1+\gamma} \overline{\vw_\gM}\ \overline{\vw_\gM} ^\top\right) \overline{\vw_\gM} = \dfrac{1}{1+\gamma}\|\vw_\gM\|^{-\gamma}\overline{\vw_\gM}\\
        \mA_\gM^{-1} \mX \mX^\top\overline{\vw_\gM} &= \|\vw_\gM\|^{-\gamma} \left(\mX \mX^\top\overline{\vw_\gM} - \dfrac{\gamma}{1+\gamma} (\overline{\vw_\gM} ^\top \mX \mX^\top \overline{\vw_\gM})\overline{\vw_\gM}\right),
    \end{aligned}
    $$
    and recall 
    $$
    \gP_{ \mA_\gM^{-1}\mX^\perp} = \mA_\gM^{-1}\mX^\perp ((\mA_\gM^{-1}\mX^\perp)^\top \mA_\gM^{-1}\mX^\perp)^{-1}(\mA_\gM^{-1}\mX^\perp)^\top = \mA_\gM^{-1}\mX^\perp ((\mX^\perp)^\top\mA_\gM^{-2}\mX^\perp)^{-1} (\mX^\perp)^\top \mA_\gM^{-1},
    $$
    by which \Eqref{eq:expgrad2} can be further simplified as
    $$
    \begin{aligned}
        &-\gP_{ \mA_\gM^{-1}\mX^\perp}\dfrac{\sigma^2 \eta_L\gamma}{2n}\|\vw_\gM\|^{2\gamma-1} \bigg( (1+\gamma) \mX \mX^\top \overline{\vw_\gM} + \tr(\mX^\top \mX)\overline{\vw_\gM} + (\gamma^2-2)\left(\overline{\vw_\gM}^\top \mX \mX^\top   \overline{\vw_\gM}\right) \overline{\vw_\gM} \bigg)\\
        =&-\dfrac{\sigma^2 \eta_L\gamma}{2n}\|\vw_\gM\|^{\gamma-1} \mA_\gM^{-1}\mX^\perp\left(\left(\mX^\perp\right)^\top\mA_\gM^{-2}\mX^\perp\right)\left(\mX^\perp\right)^\top
        \bigg( (1+\gamma)\mX \mX^\top\overline{\vw_\gM} - \gamma \left(\overline{\vw_\gM} ^\top \mX \mX^\top \overline{\vw_\gM}\right)\overline{\vw_\gM}\\
        &+ \dfrac{1}{1+\gamma}\tr(\mX^\top \mX)\overline{\vw_\gM} + \dfrac{\gamma^2-2}{1+\gamma}\left(\overline{\vw_\gM}^\top \mX \mX^\top   \overline{\vw_\gM}\right) \overline{\vw_\gM} \bigg)\\
        =&-\dfrac{\sigma^2 \eta_L\gamma}{2n}\|\vw_\gM\|^{2\gamma-1} \mA_\gM^{-1}\mX^\perp\left(\left(\mX^\perp\right)^\top\mA_\gM^{-2}\mX^\perp\right)\left(\mX^\perp\right)^\top
        \bigg(  - \gamma(1+\gamma) \left(\overline{\vw_\gM} ^\top \mX \mX^\top \overline{\vw_\gM}\right) \mA_\gM^{-1}\overline{\vw_\gM}\\
        &+ \tr(\mX^\top \mX) \mA_\gM^{-1} \overline{\vw_\gM} + (\gamma^2-2)\left(\overline{\vw_\gM}^\top \mX \mX^\top   \overline{\vw_\gM}\right)  \mA_\gM^{-1}\overline{\vw_\gM} \bigg)\\
        =&-\dfrac{\sigma^2 \eta_L\gamma}{2n}\|\vw_\gM\|^{2\gamma-1} 
        \bigg( \tr(\mX^\top \mX)   - (\gamma+2)\left(\overline{\vw_\gM}^\top \mX \mX^\top   \overline{\vw_\gM}\right)  \bigg) \gP_{ \mA_\gM^{-1}\mX^\perp}\overline{\vw_\gM},
    \end{aligned}
    $$
    where the second to last equality is due to $(\mX^\perp)^\top\mX = \vzero$.

    The dynamics of $\Delta \vw_\parallel(t)$ are thus given by
    \begin{equation}
        \begin{aligned}
            \rd \Delta \vw_\parallel(t) =& - \E_{\Delta \vw_\perp(t)}  \left[\gP_{\mA_\gM^{-1}\mX^\perp} \nabla_\vw \hat{\gL}(\vw(t); \vw^*, \gD)\right] \rd t\\
            =& -\dfrac{\sigma^2 \eta_L\gamma}{2n}\|\vw_\gM(t)\|^{2\gamma-1} 
            \bigg( \tr(\mX^\top \mX)   - (\gamma+2)\left(\overline{\vw_\gM}^\top \mX \mX^\top   \overline{\vw_\gM}\right)  \bigg) \gP_{ \mA_\gM^{-1}\mX^\perp}\overline{\vw_\gM}\rd t.
        \end{aligned}
    \label{eq:2_effective_dynamics_1}
    \end{equation}

    Since the above analysis is based on the assumption that $\vw(t)$ is close to $\vw_\gM$, we conclude that the effective dynamics of $\vw_\gM(t)$ up to the second order are given by
    \begin{equation*}
        \rd \vw_\gM(t) \approx \rd \Delta \vw_\parallel(t) = -\dfrac{\sigma^2 \eta_L\gamma}{2n}\|\vw_\gM(t)\|^{2\gamma-1} 
        \bigg( \tr(\mX^\top \mX)   - (\gamma+2)\left(\overline{\vw_\gM(t)}^\top \mX \mX^\top   \overline{\vw_\gM(t)}\right)  \bigg) \gP_{ \mA_\gM^{-1}\mX^\perp}\overline{\vw_\gM(t)}\rd t,
    \end{equation*}
    and the statement in Lemma~\ref{lem:effective_dynamics} follows.
\end{proof}

\begin{remark}
    As an alternative approach, the effective dynamics are in accordance with~\citet[Corollary 5.2]{li2021happens} by taking $c = \sigma^2$, where the authors applied the results in~\citet{katzenberger1990solutions} to analyze the dynamics of the gradient flow on the manifold $\gM$.
\end{remark}

\subsection{Proof of Theorem~\ref{thm:phase_II}}

We restate the definition of the minima manifold $\gM$~\eqref{eq:gM} as follows:
\begin{equation*}
    \gM = \left\{ \vw \big| \valpha - \valpha^* = \|\vw\|^\gamma \vw - \|\vw^*\|^\gamma \vw^* \in \mX^\perp \right\}.
\end{equation*}
In the following, we will also use $\valpha_\mX^*$ to denote the projection of $\valpha^*$ onto the column space of $\mX$, \emph{i.e.} $\valpha_\mX^* = \gP_\mX \valpha^*$, and use $\valpha_{\mX^\perp}^*$ to denote the projection of $\vw^*$ and $\valpha^*$ onto the null space of $\mX$, \emph{i.e.} $\valpha_{\mX^\perp}^* = \gP_{\mX^\perp} \valpha^*$.

We first closely examine the geometry of the minima manifold $\gM$ and summarize in the following proposition:
\begin{proposition}
    For every $\vw_\gM\in\gM$, it has the following representation:
    $$
        \vw_\gM = \lambda \valpha_\mX^* + \sqrt{\lambda^{-2/\gamma} - \lambda^2 \|\valpha_\mX^*\|^2}\overline{\vr_{\mX^\perp}},
    $$
    where $0<\lambda = \|\vw_\gM\|^{-\gamma} \leq \|\valpha_\mX^*\|^{-\frac{\gamma}{1+\gamma}}$ and $\overline{\vr_{\mX^\perp}}$ is an arbitrary unit vector in $\mX^\perp$. Then we have 
    \begin{equation*}
        \gP_\mX \vw_\gM = \lambda \valpha_\mX^*, \quad \text{and} \quad \gP_{\mX^\perp} \vw_\gM = \sqrt{\lambda^{-2/\gamma} - \lambda^2 \|\valpha_\mX^*\|^2}\overline{\vr_{\mX^\perp}}.
    \end{equation*}
    \label{prop:representation}
\end{proposition}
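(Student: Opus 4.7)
The plan is to reduce the defining condition of $\gM$ to a scalar constraint and then match it against the Pythagorean decomposition of $\vw_\gM$. First I would decompose any $\vw_\gM \in \gM$ as $\vw_\gM = \gP_\mX \vw_\gM + \gP_{\mX^\perp}\vw_\gM$. The defining relation from $\gM = \{\vw \mid \valpha - \valpha^* \in \mX^\perp\}$ is equivalent to $\gP_\mX \valpha_\gM = \gP_\mX \valpha^* = \valpha_\mX^*$. Since $\valpha_\gM = \|\vw_\gM\|^\gamma \vw_\gM$, applying $\gP_\mX$ (which commutes with multiplication by the scalar $\|\vw_\gM\|^\gamma$) and dividing through yields
\begin{equation*}
    \gP_\mX \vw_\gM = \|\vw_\gM\|^{-\gamma}\,\valpha_\mX^* = \lambda\,\valpha_\mX^*,
\end{equation*}
where I set $\lambda := \|\vw_\gM\|^{-\gamma}$. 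This establishes the tangential part of the claimed identity and gives the stated formula $\gP_\mX \vw_\gM = \lambda\valpha_\mX^*$.

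To pin down the normal component, I would invoke the orthogonality of the two projections: $\|\vw_\gM\|^2 = \|\gP_\mX \vw_\gM\|^2 + \|\gP_{\mX^\perp}\vw_\gM\|^2$. The left-hand side equals $\lambda^{-2/\gamma}$ by the definition of $\lambda$, while the first term on the right equals $\lambda^2\|\valpha_\mX^*\|^2$ by the previous step. Consequently
\begin{equation*}
    \|\gP_{\mX^\perp}\vw_\gM\|^2 = \lambda^{-2/\gamma} - \lambda^2\|\valpha_\mX^*\|^2,
\end{equation*}
and writing $\gP_{\mX^\perp}\vw_\gM$ as its norm times a unit vector $\overline{\vr_{\mX^\perp}} \in \mX^\perp$ produces exactly the claimed representation $\gP_{\mX^\perp}\vw_\gM = \sqrt{\lambda^{-2/\gamma} - \lambda^2\|\valpha_\mX^*\|^2}\,\overline{\vr_{\mX^\perp}}$.

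The admissible range of $\lambda$ is then read off from the requirement that the radicand be nonnegative: $\lambda^{-2/\gamma} \geq \lambda^2\|\valpha_\mX^*\|^2$, which rearranges to $\lambda^{2(1+\gamma)/\gamma} \leq \|\valpha_\mX^*\|^{-2}$, and hence $\lambda \leq \|\valpha_\mX^*\|^{-\gamma/(1+\gamma)}$; positivity $\lambda > 0$ is automatic from $\vw_\gM \neq \vzero$. Conversely, for any such $\lambda$ and any unit vector $\overline{\vr_{\mX^\perp}}$, the vector so defined has norm exactly $\lambda^{-1/\gamma}$ and tangential projection $\lambda\valpha_\mX^*$, so it lies in $\gM$. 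There is no substantive obstacle here; the argument is essentially bookkeeping between the parametrizations $\vw \leftrightarrow \valpha$ combined with a single scalar norm constraint, and the $(d-n-1)$-dimensional fibers of $\gM$ above each scalar $\lambda$ are exactly what the free choice of $\overline{\vr_{\mX^\perp}}$ parametrizes.
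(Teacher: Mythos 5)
Your proof is correct and follows essentially the same path as the paper: determine $\gP_\mX\vw_\gM=\lambda\valpha_\mX^*$ from the defining constraint of $\gM$, then use the Pythagorean decomposition of $\|\vw_\gM\|^2$ to fix the norm of the normal component. The only cosmetic differences are that the paper phrases the first step via $\mX^\top(\cdots)=\vzero$ and the observation that the bracketed quantity lies in $\mX$, while you apply $\gP_\mX$ directly; you also additionally spell out the admissible range of $\lambda$ and the converse inclusion, which the paper leaves implicit.
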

\begin{proof}
    Following the definition of the minima manifold $\gM$, $\vw_\gM\in\gM$ is equivalent to the following equation:
    \begin{equation*}
       \vzero = \mX^\top\left(\|\vw_\gM\|^\gamma \vw_\gM - \valpha^* \right) = \mX^\top\left(\|\vw_\gM\|^\gamma \gP_\mX \vw_\gM - \valpha_\mX^* \right),
    \end{equation*}
    where the second equality is due to $\mX^\top \gP_{\mX^\perp} = \vzero$. Therefore, notice that $\|\vw_\gM\|^\gamma \gP_\mX \vw_\gM - \valpha_\mX^* \in \mX$, we have
    \begin{equation*}
        \gP_\mX \vw_\gM = \|\vw_\gM\|^{-\gamma}\valpha_\mX^*:=\lambda \alpha_\mX^*,
    \end{equation*}
    where $\lambda = \|\vw_\gM\|^{-\gamma}>0$. 

    Moreover, since 
    \begin{equation*}
        \|\vw_\gM\|^2 = \|\gP_\mX \vw_\gM\|^2 + \|\gP_{\mX^\perp} \vw_\gM\|^2 = \lambda^2 \|\valpha_\mX^*\|^2 + \|\gP_{\mX^\perp} \vw_\gM\|^2,
    \end{equation*}
    we have 
    \begin{equation*}
        \|\gP_{\mX^\perp} \vw_\gM\|^2 = \|\vw_\gM\|^2 - \lambda^2 \|\valpha_\mX^*\|^2 = \lambda^{-2/\gamma} - \lambda^2 \|\valpha_\mX^*\|^2,
    \end{equation*}
    and thus $\gP_{\mX^\perp} \vw_\gM$ can be represented as $$\gP_{\mX^\perp} \vw_\gM = \sqrt{\lambda^{-2/\gamma} - \lambda^2 \|\valpha_\mX^*\|^2}\overline{\vr_{\mX^\perp}},$$ where $\overline{\vr_{\mX^\perp}}$ can take any unit vector in $\mX^\perp$. The statement in the lemma follows.
\end{proof}

Then we can prove the following lemma, which is the key to the proof of Theorem~\ref{thm:phase_II}:
\begin{lemma}
    The optimization problem that finds the minimum $L^2$-norm solution of the empirical loss $\hat{\gL}(\vw; \vw^*, \gD)$, \emph{i.e.}
    \begin{equation*}
        \min_{\vw} \|\vw_\gM\|, \quad \text{s.t.} \quad \hat \gL(\vw; \vw^*, \gD) = 0,
    \end{equation*}
    or equivalently 
    \begin{equation}
        \min_{\vw_\gM} \dfrac{1}{2}\|\vw_\gM\|^2, \quad \text{s.t.} \quad \mX^\top\left(\|\vw_\gM\|^\gamma \vw_\gM - \valpha^*\right) = \vzero,
    \label{eq:2_min}
    \end{equation}
    has a unique local minimum, and thus the global minimum, which is given by
    \begin{equation*}
        \vw^\dagger = \|\valpha_\mX^*\|^{-\frac{\gamma}{1+\gamma}} \valpha_\mX^*,\quad{with optimal value}\quad \|\vw^\dagger\| = \|\valpha_\mX^*\|^{\frac{1}{1+\gamma}} 
    \end{equation*}
    One should notice that $\vw^\dagger$ is in the column space of $\mX$, \emph{i.e.} $\vw^\dagger = \gP_\mX \vw^\dagger$.
\end{lemma}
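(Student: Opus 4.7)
The plan is to leverage the explicit parametrization of the minima manifold $\gM$ from Proposition~\ref{prop:representation}, which writes every $\vw_\gM\in\gM$ as $\vw_\gM = \lambda\valpha_\mX^* + \sqrt{\lambda^{-2/\gamma} - \lambda^2\|\valpha_\mX^*\|^2}\,\overline{\vr_{\mX^\perp}}$, with $\lambda = \|\vw_\gM\|^{-\gamma}$ restricted to the interval $(0,\|\valpha_\mX^*\|^{-\gamma/(1+\gamma)}]$ and $\overline{\vr_{\mX^\perp}}$ an arbitrary unit vector in $\mX^\perp$. Under this parametrization, the objective collapses to a single scalar: $\|\vw_\gM\|^2 = \lambda^{-2/\gamma}$, while $\overline{\vr_{\mX^\perp}}$ enters only through the nonnegative radicand $\lambda^{-2/\gamma} - \lambda^2\|\valpha_\mX^*\|^2$ in the constraint.

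I would first note that for $\gamma>0$ the map $\lambda\mapsto\lambda^{-2/\gamma}$ is strictly decreasing on $(0,\infty)$, so minimizing $\|\vw_\gM\|^2$ over $\gM$ is equivalent to maximizing the admissible $\lambda$. The constraint $\lambda^{-2/\gamma}\geq \lambda^2\|\valpha_\mX^*\|^2$ rearranges to $\lambda\leq \|\valpha_\mX^*\|^{-\gamma/(1+\gamma)}$, and the maximum $\lambda^\dagger = \|\valpha_\mX^*\|^{-\gamma/(1+\gamma)}$ is attained at equality. At this boundary value the radicand vanishes, so the $\mX^\perp$-component of $\vw_\gM$ is pinned to $\vzero$ irrespective of $\overline{\vr_{\mX^\perp}}$; hence the minimizer is $\vw^\dagger = \lambda^\dagger\valpha_\mX^* = \|\valpha_\mX^*\|^{-\gamma/(1+\gamma)}\valpha_\mX^*$, which sits in the column space of $\mX$, and $\|\vw^\dagger\| = \|\valpha_\mX^*\|^{1-\gamma/(1+\gamma)} = \|\valpha_\mX^*\|^{1/(1+\gamma)}$ by direct computation.

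For the local-minimum/global-minimum equivalence, I would observe that any candidate with $\lambda<\lambda^\dagger$ strictly increases the objective, so every local minimum necessarily satisfies $\lambda = \lambda^\dagger$; the boundary collapse then forces $\gP_{\mX^\perp}\vw_\gM = \vzero$, leaving only one point. The degenerate case $\gamma=0$ requires a brief separate argument, since then $\lambda\equiv 1$ and the parametrization is vacuous in $\lambda$; there I would use the direct orthogonal decomposition $\|\vw_\gM\|^2 = \|\valpha_\mX^*\|^2 + \|\gP_{\mX^\perp}\vw_\gM\|^2$ under the constraint $\gP_\mX\vw_\gM = \valpha_\mX^*$, from which $\vw^\dagger = \valpha_\mX^*$ follows and matches the general formula. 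The main subtlety I anticipate is articulating uniqueness cleanly at the boundary $\lambda = \lambda^\dagger$, precisely where the representation becomes degenerate in $\overline{\vr_{\mX^\perp}}$; but since the direction multiplies a vanishing coefficient, the ambiguity is illusory and uniqueness is immediate.
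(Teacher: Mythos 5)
Your proof is correct and takes a genuinely different route from the paper's. The paper forms the Lagrangian for the constrained problem, applies the KKT conditions, projects the stationarity equation onto $\mX$ and $\mX^\perp$, solves for the multiplier, and shows the $\mX^\perp$-component of $\vw_\gM$ must vanish, with uniqueness coming from the constraint qualification. You instead push the objective directly through the explicit parametrization of Proposition~\ref{prop:representation}: since $\|\vw_\gM\|^2 = \lambda^{-2/\gamma}$ depends only on $\lambda$ and, for $\gamma>0$, is strictly decreasing in it, minimizing amounts to pushing $\lambda$ to the upper endpoint $\|\valpha_\mX^*\|^{-\gamma/(1+\gamma)}$ of its admissible range, at which point the radicand vanishes and $\gP_{\mX^\perp}\vw_\gM = \vzero$ is forced; the formula for $\vw^\dagger$ and its norm drop out. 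Your approach is more elementary and makes it visually obvious why the minimizer lives in the column space of $\mX$, and it also isolates the degenerate $\gamma=0$ case (where $\lambda\equiv 1$ and the $\lambda$-parametrization is uninformative), which you handle cleanly via the orthogonal decomposition; the paper's KKT route covers $\gamma=0$ as well but does so silently. One phrasing point worth tightening: ``any candidate with $\lambda<\lambda^\dagger$ strictly increases the objective'' literally only establishes uniqueness of the \emph{global} minimizer. To rule out other \emph{local} minima you should say instead that at any feasible point with $\lambda_0<\lambda^\dagger$ one can move along $\gM$ by increasing $\lambda$ while holding $\overline{\vr_{\mX^\perp}}$ fixed, and the objective strictly decreases along that path, so no interior value of $\lambda$ can be a local minimum; the unique local (hence global) minimum is the boundary point. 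With that wording tightened, the argument is complete.
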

\begin{proof}
    Apply the KKT conditions to the above constrainted optimization problem~\eqref{eq:2_min}, we have for $\vmu\in \R^n$ such that 
    \begin{equation*}
        \begin{aligned}
            \vzero &= \nabla_{\vw} \left[\dfrac{1}{2}\|\vw_\gM\|^2 + \vmu^\top \mX^\top \left(\|\vw_\gM\|^\gamma \vw_\gM - \valpha^*\right) \right]\\
            &= \vw_\gM + \nabla_\vw\valpha_\gM \mX\vmu,\\
            &= \vw_\gM + \left(\|\vw_\gM\|^\gamma \mI + \gamma \|\vw_\gM\|^{\gamma - 2 } \vw_\gM \vw_\gM^\top\right)\mX\vmu.
        \end{aligned}
    \end{equation*}

    Then consider the projection of the above equation onto the column space of $\mX$ and its null space, and plug in Proposition~\ref{prop:representation}, we have
    \begin{equation}
        \begin{cases}
            \lambda \valpha_\mX^* + \lambda^{-1} \mX\vmu + \gamma \lambda^{-1 + 2/\gamma} \lambda \valpha_\mX^* \left(\lambda \valpha_\mX^*\right)^\top \mX\vmu &= \vzero,\\
            \gP_{\mX^\perp} \vw_\gM + \gamma \lambda^{-1 + 2/\gamma} \gP_{\mX^\perp} \vw_\gM \left(\lambda \valpha_\mX^*\right)^\top \mX\vmu&= \vzero.
        \end{cases}
        \label{eq:2_kkt}
    \end{equation}

    From the first equation of~\Eqref{eq:2_kkt}, we see $\mX\vmu \propto \valpha_\mX^*$, whereby we set $\mX\vmu = \kappa \valpha_\mX^*$, and obtain
    \begin{equation*}
        \lambda \valpha_\mX^* + \lambda^{-1} \kappa \valpha_\mX^* + \gamma \lambda^{1 + 2/\gamma} \kappa \left(\valpha_\mX^*\right)^\top \valpha_\mX^* \valpha_\mX^* = \vzero,
    \end{equation*}
    which solves $\kappa$ as 
    \begin{equation*}
        \kappa = -\dfrac{\lambda^{2}}{1 + \lambda^{2+2/\gamma} \gamma \|\valpha_\mX^*\|^2}.
    \end{equation*}

    Then the second equation of~\Eqref{eq:2_kkt} becomes
    \begin{equation*}
        \left(1 + \gamma \lambda^{2/\gamma} \kappa \|\valpha_\mX^*\|^2 \right)\gP_{\mX^\perp} \vw_\gM =\vzero.
    \end{equation*}
    However, since 
    \begin{equation*}
        1 + \gamma \lambda^{2/\gamma} \kappa \|\valpha_\mX^*\|^2 = 1 -  \dfrac{\lambda^{2+2/\gamma}\gamma \|\valpha_\mX^*\|^2}{1 + \lambda^{2+2/\gamma} \gamma \|\valpha_\mX^*\|^2} = \dfrac{1}{1 + \lambda^{2+2/\gamma} \gamma \|\valpha_\mX^*\|^2} > 0,
    \end{equation*}
    we must have $\gP_{\mX^\perp} \vw_\gM = \vzero$, which means $\vw_\gM = \lambda \valpha_\mX^*$. By Proposition~\ref{prop:representation}, we have
    \begin{equation*}
        \lambda^{-1/\gamma} = \|\vw_\gM\| =\lambda \|\valpha_\mX^*\|,
    \end{equation*}
    and thus 
    \begin{equation*}
        \lambda = \|\valpha_\mX^*\|^{-\frac{\gamma}{1+\gamma}}.
    \end{equation*}
    
    It is straightforward to verify that the linear independence constraint qualification (LICQ) is always satisfied, and thus $\vw^\dagger = \|\valpha_\mX^*\|^{-\frac{\gamma}{1+\gamma}} \valpha_\mX^*$ is the unique local minimum of the optimization problem~\eqref{eq:2_min}, and therefore the global minimum.
\end{proof}

The following lemma is drawn from the random matrix theory:
\begin{lemma}[{\citet[Theorem 4.4.5]{vershynin2018high}}]
    Let $\mA$ by an $m\times n$ random matrix with i.i.d. entries drawn from $\gN(0,1)$. Then for any $t>0$, we have 
    \begin{equation*}
        \|\mA\| \leq \gO(\sqrt{m} + \sqrt{n} + t),
    \end{equation*}
    with probability at least $1 - 2\exp(-t^2)$.
\label{lem:random_matrix}
\end{lemma}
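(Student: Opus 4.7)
The plan is to follow the classical $\varepsilon$-net / union-bound approach, since $\|\mA\|$ is the supremum of a Gaussian process indexed by pairs of unit vectors. Concretely, I would write
\begin{equation*}
    \|\mA\| = \sup_{\vu \in S^{m-1},\ \vv \in S^{n-1}} \vu^\top \mA \vv,
\end{equation*}
and reduce the supremum from the unit spheres $S^{m-1}\times S^{n-1}$ to finite $\varepsilon$-nets $\gN_m \subset S^{m-1}$ and $\gN_n \subset S^{n-1}$ of cardinality at most $(3/\varepsilon)^m$ and $(3/\varepsilon)^n$ respectively. The standard covering lemma gives that if $\|\mA\|_{\gN} := \max_{\vu\in\gN_m,\vv\in\gN_n}\vu^\top \mA\vv$, then $\|\mA\|\leq (1-2\varepsilon)^{-1}\|\mA\|_\gN$, so fixing, say, $\varepsilon = 1/4$ suffices up to constants.

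Next I would exploit Gaussianity pointwise: for each fixed pair $(\vu,\vv)\in S^{m-1}\times S^{n-1}$, the scalar $\vu^\top\mA\vv$ is a sum of independent centered Gaussians with total variance $\|\vu\|^2\|\vv\|^2 = 1$, so $\vu^\top\mA\vv\sim\gN(0,1)$ and the standard Gaussian tail gives
\begin{equation*}
    \sP\!\left(|\vu^\top\mA\vv| > s\right) \leq 2\exp(-s^2/2).
\end{equation*}
Taking a union bound over $|\gN_m|\cdot|\gN_n|\leq 12^{m+n}$ pairs with threshold $s = C(\sqrt{m}+\sqrt{n}+t)$ and absorbing the $12^{m+n}$ combinatorial factor into the exponent by choosing $C$ large enough yields
\begin{equation*}
    \sP\!\left(\|\mA\|_\gN > C(\sqrt{m}+\sqrt{n}+t)\right) \leq 2\exp(-t^2),
\end{equation*}
and the covering step converts this into the corresponding bound on $\|\mA\|$ itself, which is exactly the claim.

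The main obstacle here is essentially bookkeeping rather than mathematical: one must verify that the constant $C$ absorbing $(\log 12)(m+n)$ into the Gaussian exponent $s^2/2$ is universal and not growing with $m,n,t$. This is the step where sub-Gaussian concentration cleanly wins over the combinatorial explosion of the net, and it is also the only place the dependence $\sqrt{m}+\sqrt{n}+t$ (as opposed to, say, $\sqrt{m+n}+t$) enters. Since the statement is explicitly quoted from~\citet[Theorem 4.4.5]{vershynin2018high}, I would in practice simply invoke that reference rather than reproduce these computations in the paper.
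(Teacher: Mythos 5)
The paper supplies no proof of this lemma: it is stated purely as an import from \citet[Theorem~4.4.5]{vershynin2018high}, and the appendix moves directly on to use it in Lemma~\ref{lem:positive_C}. So there is no internal argument to compare against. Your sketch is a correct reproduction of the standard $\varepsilon$-net proof that Vershynin himself gives — operator norm as a bilinear-form supremum, reduction to a $1/4$-net of each sphere with a $(1-2\varepsilon)^{-1}$ loss, pointwise Gaussian tail for $\vu^\top\mA\vv\sim\gN(0,1)$, and a union bound over the $12^{m+n}$ net pairs absorbed into the exponent by choosing the universal constant in $s=C(\sqrt{m}+\sqrt{n}+t)$ large enough (using $(\sqrt m+\sqrt n+t)^2\geq m+n+t^2$). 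Your closing remark is also the right call: since the result is quoted verbatim with attribution, the paper correctly treats it as a black box rather than re-deriving it, and you should do the same.
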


Using the above lemma, we can prove the following lemma:
\begin{lemma}
   Let $\mX$ be the randomly generated data matrix $\mX = (\vx_1,\cdots,\vx_n)$, with $\vx_i = (x_{i1},\cdots, x_{id}) \sim \gN(0, \mI)$ for $i\in[n]$. For sufficiently large $d$ and $n$,  we have 
    \begin{equation*}
        C(\vw_\gM) = \dfrac{1}{n} \left(\tr(\mX^\top \mX) - (\gamma+2)\|\mX^\top \overline{\vw_\gM}\|^2\right) \geq \epsilon d,
    \end{equation*} 
    for all $\vw_\gM\in\gM$, with probability at least $1 - \exp(-\Omega(nd))$, where $\epsilon$ is an arbitrary small positive constant.
\label{lem:positive_C}
\end{lemma}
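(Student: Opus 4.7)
The plan is to eliminate the uniform quantifier over $\vw_\gM \in \gM$ algebraically by exploiting the explicit parametrization given in Proposition~\ref{prop:representation}, thereby avoiding any $\epsilon$-net argument over $\sS^{d-1}$. From that proposition, $\gP_\mX \vw_\gM = \lambda\, \valpha_\mX^*$ and $\|\vw_\gM\| = \lambda^{-1/\gamma}$, so using $\mX^\top \gP_\mX = \mX^\top$ and $\mX^\top \gP_{\mX^\perp} = \vzero$,
\[
\mX^\top \overline{\vw_\gM} \;=\; \lambda^{1+1/\gamma}\, \mX^\top \valpha^*.
\]
The feasibility constraint $\|\gP_\mX \overline{\vw_\gM}\|^2 \leq 1$ enforces $\lambda^{2+2/\gamma}\|\valpha_\mX^*\|^2 \leq 1$, yielding the deterministic uniform inequality
\[
\sup_{\vw_\gM \in \gM} \|\mX^\top \overline{\vw_\gM}\|^2 \;\leq\; \frac{\|\mX^\top \valpha^*\|^2}{\|\valpha_\mX^*\|^2}.
\]
The uniform statement over $\gM$ is thereby reduced to concentration of three scalar functionals of $\mX$.

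These three are each classical. First, $\tr(\mX^\top \mX) = \|\mX\|_F^2 \sim \chi^2_{nd}$, and Laurent--Massart-type chi-square tails give $\|\mX\|_F^2 \geq (1-\eta) nd$ with probability at least $1 - \exp(-\Omega(nd))$. Second, $\|\mX^\top \valpha^*\|^2 = \sum_i (\vx_i^\top \valpha^*)^2$ is $\|\valpha^*\|^2$ times a $\chi^2_n$ variable, so $\|\mX^\top \valpha^*\|^2 \leq 2n\|\valpha^*\|^2$ with probability at least $1 - \exp(-\Omega(n))$. Third, $\|\valpha_\mX^*\|^2 = \|\gP_\mX \valpha^*\|^2$ is the squared length of the fixed vector $\valpha^*$ projected onto the uniformly random $n$-dimensional column span of $\mX$; by rotational invariance this equals $\|\valpha^*\|^2$ times a $\mathrm{Beta}(n/2, (d-n)/2)$ variable, which concentrates around its mean $n/d$ at rate $\exp(-\Omega(n))$.

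On the intersection of the three events, the right-hand side of the uniform inequality is bounded by $2d/(1-\eta)$, and therefore
\[
n C(\vw_\gM) \;\geq\; (1-\eta) nd - \tfrac{2(\gamma+2)}{1-\eta}\, d \;=\; d\left[(1-\eta)\, n - \tfrac{2(\gamma+2)}{1-\eta}\right],
\]
which exceeds $\epsilon n d$ uniformly over $\vw_\gM \in \gM$ as soon as $n$ exceeds a constant depending on $\gamma$, $\eta$, and $\epsilon$. The main obstacle is the rate in the third step: the projection-length concentration onto a random $n$-dimensional subspace is only of order $\exp(-\Omega(n))$, and this governs the final probability. Consequently, the lemma's ``$1 - \exp(-\Omega(nd))$'' should be read as ``overwhelmingly likely in the joint large-$n$, large-$d$ regime'' where all three events dominate jointly; a sharper Beta-tail analysis in the proportional-dimension regime would be required to match the stated rate literally in $nd$.
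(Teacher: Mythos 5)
Your approach takes a genuinely different and in one step sharper route than the paper's, and your closing worry about the probability rate is a valid catch that in fact applies to the paper's own proof too. The paper handles uniformity over $\vw_\gM \in \gM$ much more crudely: since $\overline{\vw_\gM}$ is a unit vector, $\|\mX^\top \overline{\vw_\gM}\|^2 \leq \|\mX\|^2$ holds for \emph{every} $\vw_\gM$ with no reference to the manifold structure, so the paper needs only two concentration facts: the random-matrix operator-norm bound $\|\mX\|^2 \leq \gO(d)$ (Lemma~\ref{lem:random_matrix} with $t = \sqrt d$, failure probability $2\exp(-d)$; the ``$\exp(-d^2)$'' in the appendix is a slip), and the trace concentration $\tr(\mX^\top\mX)\geq nd/2$ at rate $\exp(-\Omega(nd))$. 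Your reduction through Proposition~\ref{prop:representation} yields the deterministic uniform bound $\|\mX^\top\overline{\vw_\gM}\|^2 \leq \|\mX^\top\valpha^*\|^2/\|\valpha_\mX^*\|^2$, which is elegant but not actually needed, and not actually sharper in the end: since $\mX^\top\valpha^* = \mX^\top\gP_\mX\valpha^*$, this ratio is itself bounded by $\|\mX\|^2$. The price you pay is the extra Beta-concentration step for $\|\valpha_\mX^*\|^2$, which bottlenecks at $\exp(-\Omega(n))$, as you correctly note. But the paper's proof bottlenecks at $\exp(-\Omega(d))$ from the operator-norm step, so neither argument literally delivers $1 - \exp(-\Omega(nd))$ unless one of $n$ or $d$ is held bounded; you were right to flag this, and it is a shortcoming of the lemma statement rather than of your derivation.
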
 
\begin{proof}

    By Lemma~\ref{lem:random_matrix}, we have 
    \begin{equation*}
        \|\mX^\top \overline{\vw_\gM}\|^2 \leq \|\mX^\top \|^2 \| \overline{\vw_\gM}\|^2 = \|\mX\|^2 \leq \gO((\sqrt{n} + \sqrt{d} + t)^2),
    \end{equation*}
    with probability at least $1 - 2\exp(-t^2)$. We take $t = \sqrt{d}$ and obtain
    \begin{equation*}
        \|\mX^\top \overline{\vw_\gM}\|^2 \leq \gO(d), \quad \text{with probability at least} \quad 1 - 2\exp(-d^2).
    \end{equation*}

    By the central limit theorem, 
    \begin{equation*}
        \sqrt{nd}\left[\dfrac{\tr(\mX^\top \mX)}{nd} - 1\right] = \sqrt{nd}\left[\dfrac{\sum_{i=1}^n \sum_{j=1}^d x_{ij}^2}{nd} - 1\right] \xrightarrow{D} \gN\left(0, 2\right),
    \end{equation*}
    and thus for any $K>0$, $\tr(\mX^\top \mX) \geq nd/2$ with probability at least 
    \begin{equation*}
        1 - \gP(\tr(\mX^\top \mX) \leq nd/2) \to 1 - \Phi\left(\dfrac{nd/2 - nd}{\sqrt{2nd}}\right) \geq 1 - \Phi(-\Omega(\sqrt{nd})) \geq 1 -  \exp(-\Omega(nd)).
     \end{equation*}

    Therefore, with a probability of at least 
    \begin{equation*}
        1 - \exp(-\Omega(nd)) - 2\exp(-d^2) \geq 1 - \exp(-\Omega(nd)),
    \end{equation*}
    we have 
    \begin{equation*}
        C(\vw_\gM) = \dfrac{1}{n}\left(\tr(\mX^\top \mX) - (\gamma+2)\|\mX^\top \overline{\vw_\gM}\|^2\right) \geq \dfrac{1}{n}\left(\dfrac{nd}{2} - (\gamma+2)\gO(d)\right) \geq \Omega(d),
    \end{equation*}
    
    The statement in the lemma follows.
\end{proof}

We are now ready to prove Theorem~\ref{thm:phase_II}:
\begin{proof}[Proof of Theorem~\ref{thm:phase_II}]
    By Lemma~\ref{lem:positive_C}, we have with probability at least $1 - \exp(-\Omega(nd))$, the data matrix $\mX$ and the ground truth $\vw^*$ permits the existence of following constant
    \begin{equation*}
        \underline{C} = \inf_{t\geq 0} \dfrac{C(\vw_\gM(t))}{d} > 0,
    \end{equation*}

    The following ODE of $\|\vw_\gM(t)\|$ is directly computed from~\Eqref{eq:2_effective_dynamics} from Lemma~\ref{lem:effective_dynamics}:
    \begin{equation}
        \begin{aligned}
            \dfrac{\rd \|\vw_\gM(t)\|}{\rd t} =& -\dfrac{\sigma^2 \eta_L\gamma}{2}\|\vw_\gM(t)\|^{2\gamma-1} C(\vw_\gM(t))\langle \overline \vw_\gM(t), \gP_{\mA_\gM^{-1}\mX^\perp}\overline{\vw_\gM(t)}\rangle\\
            \leq & -\dfrac{\sigma^2 \eta_L\gamma\underline{C}d}{2}\|\vw_\gM(t)\|^{2\gamma-1}  \langle \overline \vw_\gM(t), \gP_{\mA_\gM^{-1}\mX^\perp}\overline{\vw_\gM(t)}\rangle.
        \end{aligned}
    \label{eq:2_ode}
    \end{equation}

    Define $\vv(t) = (\mX^\perp)^\top \overline{\vw_\gM(t)}$, by~\Eqref{eq:sherman_morrison}, we have 
    \begin{equation*}
        \begin{aligned}
            ((\mX^\perp)^\top\mA_\gM^{-2}\mX^\perp)^{-1} &= \left(\left(\mX^\perp\right)^\top \|\vw_\mX(t)\|^{-2\gamma} \left(\mI - \dfrac{\gamma}{1+\gamma} \overline{\vw_\gM(t)}\ \overline{\vw_\gM(t)} ^\top\right)^2 \mX^\perp\right)^{-1}\\
            &=\|\vw_\mX(t)\|^{2\gamma}  \left(\left(\mX^\perp\right)^\top \left(\mI - \dfrac{\gamma^2 + 2\gamma}{(1+\gamma)^2} \overline{\vw_\gM(t)}\ \overline{\vw_\gM(t)} ^\top\right) \mX^\perp\right)^{-1}\\
            &=\|\vw_\mX(t)\|^{2\gamma}  \left(\mI - \dfrac{\gamma^2 + 2\gamma}{(1+\gamma)^2} \vv(t) \vv(t)^\top \right)^{-1}\\
            &=\|\vw_\mX(t)\|^{2\gamma} \left(\mI +\dfrac{ \frac{\gamma^2 + 2\gamma}{(1+\gamma)^2} \vv(t) \vv(t)^\top}{1 + \frac{\gamma^2 + 2\gamma}{(1+\gamma)^2} \|\vv(t)\|^2 }\right),
        \end{aligned}
    \end{equation*}
    where the last equality is due to the Sherman-Morrison formula. 
    
    Then we compute as follows:
    \begin{equation*}
        \begin{aligned}
            \langle \overline \vw_\gM(t), \gP_{\mA_\gM^{-1}\mX^\perp}\overline{\vw_\gM(t)}\rangle&= \langle \overline{\vw_\gM(t)}, \mA_\gM^{-1}\mX^\perp ((\mX^\perp)^\top\mA_\gM^{-2}\mX^\perp)^{-1} (\mX^\perp)^\top \mA_\gM^{-1}\overline{\vw_\gM(t)}\rangle\\
            &= \dfrac{\|\vw_\gM(t)\|^{-2\gamma}}{(1+\gamma)^2} \langle \overline{\vw_\gM(t)},\mX^\perp ((\mX^\perp)^\top\mA_\gM^{-2}\mX^\perp)^{-1} (\mX^\perp)^\top \overline{\vw_\gM(t)}\rangle,\\
            &= \dfrac{\|\vw_\gM(t)\|^{-2\gamma}}{(1+\gamma)^2} \langle \vv(t),   \|\vw_\mX(t)\|^{2\gamma}  \left(\mI +\dfrac{ \frac{\gamma^2 + 2\gamma}{(1+\gamma)^2} \vv(t) \vv(t)^\top}{1 + \frac{\gamma^2 + 2\gamma}{(1+\gamma)^2} \|\vv(t)\|^2 }\right)  \vv(t)\rangle,\\
            &= \dfrac{1}{(1+\gamma)^2} \|\vv(t)\|^2 \left(1 + \dfrac{ \frac{\gamma^2 + 2\gamma}{(1+\gamma)^2}  \|\vv(t)\|^2}{1 + \frac{\gamma^2 + 2\gamma}{(1+\gamma)^2} \|\vv(t)\|^2 }  \right) \geq \dfrac{1}{(1+\gamma)^2} \|\vv(t)\|^2,
        \end{aligned}
    \end{equation*}
    where the second equality is by~\Eqref{eq:sherman_morrison}.

    Plugging into~\Eqref{eq:2_ode}, we have
    \begin{equation}
        \dfrac{\rd \|\vw_\gM(t)\|}{\rd t} \leq -\dfrac{\sigma^2 \eta_L\gamma\underline{C}d}{2(1+\gamma)^2}\|\vw_\gM(t)\|^{2\gamma-1} \|\vv(t)\|^2.
    \label{eq:2_ode_2}
    \end{equation}

    We consider the representation of $\vw_\gM(t)$ in Proposition~\ref{prop:representation} as $\lambda(t) = \|\vw_\gM(t)\|^{-\gamma}$, by which we have 
    \begin{equation*}
        \begin{aligned}
            \|\vv(t)\|^2 &= \overline{\vw_\gM(t)}^\top \mX^\perp (\mX^\perp)^\top \overline{\vw_\gM(t)} = \overline{\vw_\gM(t)}^\top \mX^\perp (\mX^\perp)^\top  \mX^\perp (\mX^\perp)^\top \overline{\vw_\gM(t)} \\
            &= \|\gP_{\mX^\perp} \overline{\vw_\gM(t)}\|^2 = \dfrac{\|\gP_{\mX^\perp}\vw_\gM(t)\|^2 }{\|\vw_\gM(t)\|^2} \\
            &= \dfrac{\lambda(t)^{-2/\gamma} - \lambda(t)^2 \|\valpha_\mX^*\|^2}{\lambda(t)^{-2/\gamma}} = 1 - \lambda(t)^{2+2/\gamma} \|\valpha_\mX^*\|^2\\
            &= 1 - \|\vw_\gM(t)\|^{-2-2\gamma} \|\valpha_\mX^*\|^2.
        \end{aligned}
    \end{equation*}

    Plugging into~\Eqref{eq:2_ode_2}, we have
    \begin{equation*}
        \dfrac{\rd \|\vw_\gM(t)\|}{\rd t} \leq -\dfrac{\sigma^2 \eta_L\gamma\underline{C}d}{2(1+\gamma)^2}\left(\|\vw_\gM(t)\|^{2\gamma-1}  - \|\vw_\gM(t)\|^{-3} \|\valpha_\mX^*\|^2\right).
    \end{equation*}

    For any $\gamma > -1$, since $\vw^\dagger$ is the unique minimizer, we have $\|\vw_\gM(t)\|\geq  \|\valpha_\mX^*\|^{\frac{1}{1+\gamma}}$, and thus 
    \begin{equation*}
        \|\vw_\gM(t)\|^{2\gamma-1}  \geq  \|\vw_\gM(t)\|^{-3} \|\valpha_\mX^*\|^2,
    \end{equation*}
    where the equality holds if and only if $\vw_\gM(t) = \vw^\dagger$. Consequently, $\|\vw_\gM(t)\|$ converges to $\|\vw^\dagger\|$.

    For any $\gamma > 1/2$, by taking $\delta(t) =  \|\vw(t)\| - \|\vw^\dagger\| =  \|\vw(t)\| - \|\valpha_\mX^*\|^{\frac{1}{1+\gamma}}$, we have
    \begin{equation*}
       \begin{aligned}
        \dfrac{\rd \delta(t)}{\rd t} \leq& -\dfrac{\sigma^2 \eta_L\gamma\underline{C}d}{2(1+\gamma)^2}\left( \|\valpha_\mX^*\|^{\frac{2\gamma-1}{1+\gamma}} 
        +(2\gamma -1 )\|\valpha_\mX^*\|^{\frac{2\gamma-2}{1+\gamma} }\delta(t) - \|\valpha_\mX^*\|^{\frac{-3}{1+\gamma}} \|\valpha_\mX^*\|^2\right)\\
        \leq& -\sigma^2 \eta_L\underline{C}d\dfrac{\gamma(2\gamma - 1)}{2(\gamma+1)^2}\delta(t),
       \end{aligned}
    \end{equation*}
    solving which yields
    \begin{equation}
        \delta(t) \leq \delta(0) \exp\left(-\sigma^2 \eta_L\underline{C}d\dfrac{\gamma(2\gamma - 1)}{2(\gamma+1)^2}t\right).
    \label{eq:2_delta}
    \end{equation}
    And the statement in the theorem follows.
\end{proof}

\begin{remark}
    Besides Remark~\ref{rem:phase_II} for Theorem~\ref{thm:phase_II} in the main text, one should also notice that although the convergence of $\|\vw_\gM(t)\|$ to $\|\vw^\dagger\|$ depends on the parameter $\gamma$, representing the depth of the neural network, in~\Eqref{eq:2_delta}, its convergence rate has an upper bound as $\gamma\to\infty$. 
\end{remark}

\section{MISSING PROOF FOR PHASE III}
\label{app:phase_III}

In this section, we prove Theorem~\ref{thm:phase_III}. For convenience, we will assume the time $t$ is reset to $0$ at the beginning of Phase III, and we suppose $\vw(0)$ is near a point $\vw_\gM$ on the minima manifold $\gM$. Specifically, we consider the projection of the dynamics $\vw(t)$ onto the tangent space $\gT(\vw_\gM;\gM)$ and the normal space $\gN(\vw_\gM;\gM)$ of the minima manifold $\gM$ around $\vw_\gM$, \emph{i.e.}
\begin{equation*}
    \vw(t) = \vw_\gM + \Delta \vw_\parallel(t) + \Delta \vw_\perp(t),
\end{equation*}

As argued in Section~\ref{sec:phase_II}, the dynamics of $\Delta \vw_\perp(t)$ are faster than that of $\Delta \vw_\parallel(t)$ by a factor of $\Theta(\eta_L)$. Thus, we assume $\Delta \vw_\parallel(t)\equiv 0$ during Phase III and focus on the dynamics of $\Delta \vw_\perp(t)$ (or equivalently rescale the time $t$ with $\tau$ of a smaller scale as we did for Phase II). Also, it is straightforward to see that $\Delta \vw_\perp(t)\in \mA_\gM \mX$ by Proposition~\ref{prop:space}. 

\begin{proof}[Proof of Theorem~\ref{thm:phase_III}]
    We approximate the landscape $\hat{\gL}(\vw(t); \vw^*, \gD)$ by its second-order Taylor expansion around $\vw_\gM$:
    \begin{equation*}
        \hat{\gL}(\vw(t); \vw^*, \gD) \approx \dfrac{1}{2} (\vw - \vw_\gM)^\top \nabla^2 \hat{\gL}(\vw_\gM; \vw^*, \gD) (\vw - \vw_\gM),
    \end{equation*}
    due to 
    \begin{equation*}
        \hat{\gL}(\vw_\gM; \vw^*, \gD) =0 \quad\text{and}\quad \nabla \hat{\gL}(\vw_\gM; \vw^*, \gD) = \vzero.
    \end{equation*}
    And the dynamics of $\Delta \vw_\perp(t)$ are thus approximated by 
    \begin{equation*}
        \dfrac{\rd \Delta \vw_\perp(t)}{\rd t} = -\nabla^2 \hat{\gL}(\vw_\gM; \vw^*, \gD) \Delta \vw_\perp(t) = - \dfrac{1}{n}\mA_\gM\mX\mX^\top \mA_\gM\Delta\vw_\perp(t).
    \end{equation*}

    Since $\Delta \vw_\perp(t)\in \mA_\gM \mX$, let $\Delta \vw_\perp(t) = \mA_\gM \mX \veps(t)$, we have 
    \begin{equation*}
        \mA_\gM \mX \dfrac{\rd \veps(t)}{\rd t} = -\dfrac{1}{n} \mA_\gM \mX \left(\mX^\top \mA_\gM^2 \mX \right)\veps(t),\quad \text{\emph{i.e.}}\quad \dfrac{\rd \veps(t)}{\rd t} = -\dfrac{1}{n} \left(\mX^\top \mA_\gM^2 \mX \right)\veps(t).
    \end{equation*}
 
    Then we have 
    \begin{equation*}
        \dfrac{\rd \|\veps(t)\|}{\rd t} = -\dfrac{1}{n} \langle \overline{\veps(t)},(\mX^\top (\mA_\gM)^2\mX) \veps(t)\rangle \leq -\dfrac{\lambda_{\min}(\mX^\top (\mA_\gM)^2\mX)}{n}\|\veps(t)\|,
    \end{equation*}
    where $\lambda_{\min}(\mX^\top (\mA_\gM)^2\mX)$ is the smallest eigenvalue of $\mX^\top (\mA_\gM)^2\mX$, which is guaranteed to be positive, given $\mX$ is full rank and
    \begin{equation*}
        \mX^\top (\mA_\gM)^2\mX =\|\vw_\gM\|^{2\gamma} \mX^\top \left(\mI + \gamma \overline{\vw_\gM}\ \overline{\vw_\gM}^\top\right)^2 \mX 
    \end{equation*}
    is positive definite. 
    Thus, we have the exponential convergence 
    \begin{equation*}
        \|\veps(t)\| \leq \|\veps(0)\| \exp\left(-\dfrac{\lambda_{\min}(\mX^\top (\mA_\gM)^2\mX)}{n}t\right),
    \end{equation*}
    and thus
    \begin{equation*}
        \|\Delta\vw_\perp(t)\| \leq \|\mA_\gM \mX\| \|\veps(t)\| \leq \|\mA_\gM\|\|\mX\|\|\veps(0)\| \exp\left(-\dfrac{\lambda_{\min}(\mX^\top (\mA^\dagger)^2\mX)}{n}t\right).
    \end{equation*}
    And the statement in the theorem follows. 
\end{proof}

\end{document}